\documentclass{applemlr}

\usepackage{amsmath}
\usepackage{enumerate}
\usepackage{algorithm}
\usepackage{algpseudocode}
\usepackage{amsfonts}
\usepackage{amsthm}
\usepackage{hyperref}
\usepackage{cleveref}
\usepackage{diagbox}
\usepackage{colortbl}
\usepackage{amssymb}
\usepackage{xspace}
\usepackage{wrapfig}
\usepackage{adjustbox}
\usepackage{tabularx}
\usepackage{booktabs}
\usepackage{mathtools}
\usepackage{tikz}
\usepackage{enumitem}
\usepackage{silence}
\usepackage{dsfont}
\usepackage[table]{xcolor}
\usepackage[dvipsnames]{xcolor}
\usepackage{multirow}
\usepackage{makecell}
\usepackage{xfakebold}

\usepackage{amsmath,amsfonts,bm}

\def\eqref#1{equation~\ref{#1}}

\def\1{\bm{1}}

\DeclareMathAlphabet{\mathsfit}{\encodingdefault}{\sfdefault}{m}{sl}
\SetMathAlphabet{\mathsfit}{bold}{\encodingdefault}{\sfdefault}{bx}{n}

\definecolor{textgray}{HTML}{6E6E73}
\usetikzlibrary{positioning, calc}
\usetikzlibrary{decorations.pathmorphing}

\makeatletter
\patchcmd{\wrong@fontshape}{\@gobbletwo}{}{}{}
\makeatother
\numberwithin{equation}{section}
\makeatletter
\AtBeginDocument{
  \urlstyle{sf}
  
}
\makeatother

\definecolor{light}{RGB}{125, 125, 125}
\crefname{tcb@cnt@pbox}{code}{code}
\Crefname{tcb@cnt@pbox}{Code}{Code}
\crefname{assumption}{assumption}{assumption}
\Crefname{assumption}{Assumption}{Assumptions}

\newtcolorbox[auto counter]{pbox}[2][]{
  colback=white,
  title=Code~\thetcbcounter: #2,
  #1,fonttitle=\sffamily,
  fontupper=\sffamily,
  arc=2pt,
  colframe=bgcolor,
  coltitle=fgcolor,
  colbacktitle=bgcolor,
  toptitle=0.25cm,
  bottomtitle=0.125cm
}

\makeatletter
\newcommand\applefootnote[1]{%
  \begingroup
  \renewcommand\thefootnote{}%
  \renewcommand\@makefntext[1]{\noindent##1}%
  \footnote{#1}%
  \addtocounter{footnote}{-1}%
  \endgroup
}
\makeatother

\definecolor{cverbbg}{gray}{0.90}

\usepackage[utf8]{inputenc}
\usepackage[T1]{fontenc}

\usepackage{url}
\usepackage{nicefrac}
\usepackage{microtype}

\usepackage{subcaption}
\usepackage{overpic}

\usepackage[acronym]{glossaries}

\newacronym{kvs}{KVP}{KV Policy}
\newacronym{llm}{LLM}{Large Language Model}
\newacronym{kv}{KV}{Key-Value}

\newtheorem{theorem}{Theorem}
\newtheorem{prop}[theorem]{Proposition}

\title{Learning to Evict from Key-Value Cache}

\author{Luca Moschella}
\author{Laura Manduchi}
\author{Ozan Sener}

\affiliation{Apple}

\abstract{
   The growing size of \glspl{llm} makes efficient inference challenging, primarily due to the memory demands of the autoregressive \gls{kv} cache. Existing eviction or compression methods reduce cost but rely on heuristics, such as recency or past attention scores, which serve only as indirect proxies for a token’s future utility and introduce computational overhead.
   We reframe \gls{kv} cache eviction as a reinforcement learning (RL) problem: learning to rank tokens by their predicted usefulness for future decoding. To this end, we introduce \gls{kvs}, a framework of lightweight per-head RL agents trained on pre-computed generation traces using only key and value vectors. Each agent learns a specialized eviction policy guided by a holistic reward, derived from future utility, that evaluates the quality of the ranking across all cache budgets, requiring no modifications to the underlying \gls{llm} or additional inference. Evaluated across two model families on the long-context benchmark RULER (up to 128K tokens) and the multi-turn dialogue benchmark \mbox{OASST2-4k}, \gls{kvs} significantly outperforms strong baselines. Zero-shot tests on standard downstream tasks (BoolQ, LongBench passage retrieval, GovReport) further show that \gls{kvs} generalizes beyond its training distribution and to considerably longer sequence lengths. These results demonstrate that learning to predict future token utility is a powerful and scalable paradigm for adaptive KV cache management.}

\metadata[Correspondence]{\sffamily Luca Moschella: \url{luca_moschella@apple.com}}
\metadata[Code]{\href{https://github.com/apple/ml-learning-to-evict}{github.com/apple/ml-learning-to-evict}}
\date{\sffamily\today}

\begin{document}

\maketitle

\section{Introduction}
Large Language Models (LLMs) based on the Transformer architecture \citep{vaswani2017attention} have revolutionized natural language processing, demonstrating remarkable capabilities across a wide range of tasks \citep{Brown2020a,onden2023chatgpt,touvron2023llama}. However, their practical deployment, especially for applications involving long sequences or interactive sessions, is often burdened by substantial computational requirements during inference. A critical bottleneck arises from the Key-Value (KV) cache, a mechanism inherent to autoregressive generation in Transformers \citep{Sheng2023a}. This cache stores the keys and values of previous tokens, avoiding redundant computations but growing linearly with the input and generated sequence length. For long contexts, the KV cache can consume tens or even hundreds of gigabytes of memory, rapidly exceeding the capacity of modern hardware accelerators and necessitating strategies for efficient management.

\looseness=-1
To address this memory challenge, several KV cache management techniques have been proposed. These range from simple heuristics like keeping only the most recent tokens \citep{Xiao2023a}, to more sophisticated methods that leverage insights into attention patterns. Approaches like H2O, SnapKV, TOVA, and KeyFormer \cite{Zhang2023a,Li2024a,Oren2024a,keyformer} use signals such as past attention scores or attention sinks to identify and retain important tokens, while attention-free variants leverage key-vector statistics \citep{devoto2024simple,Park2025a,Liang2025LagKVLI}.
Others employ quantization to reduce the memory footprint \citep{minikv,Dettmers2022a,Xiao2022a} or use low-rank approximations to represent the cache state more compactly \citep{Singhania2024a,Ribar2023a}. Among these complementary directions, \emph{eviction}, selectively discarding tokens deemed less useful from the cache, is the focus of this work.

\begin{figure*}[th]
   \centering
   \vspace{2ex}
   \begin{overpic}[width=.32\textwidth]{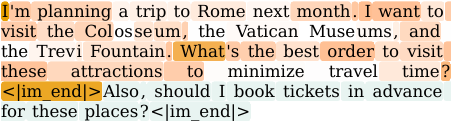}
      \put(50, 30){\makebox[0pt][c]{\small \gls{kvs}}}
   \end{overpic}\hfill
   \begin{overpic}[width=.32\textwidth]{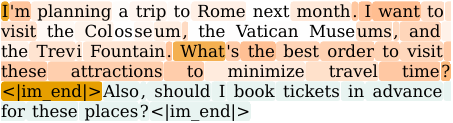}
      \put(50, 30){\makebox[0pt][c]{\small Future Attention Importance}}
   \end{overpic}\hfill
   \begin{overpic}[width=.32\textwidth]{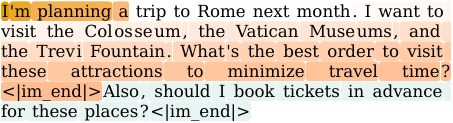}
      \put(50,30){\makebox[0pt][c]{{StreamingLLM}}}
   \end{overpic}
   \caption{\textbf{Future token importance for KV cache eviction.} Effective KV cache eviction requires identifying tokens that will receive little or no future attention. \textit{(center)} We roll out a sample KV cache and measure the true cumulative future attention for each token, then rank tokens by this importance and color them accordingly (bright = high rank, white = low). \textit{(right)} Importance estimated by the fixed sink-and-recency heuristic of \emph{StreamingLLM} deviates substantially from true importance ranking. \textit{(left)} Our learned policy closely recovers the complex, non-local structure of future attention despite using only past keys and values, without access to queries, attention scores, or future tokens. \emph{We expand the comparison with all strategies in \Cref{fig:teaser_ext}.}}
 
   \label{fig:teaser}
\end{figure*}

\looseness=-1While these hand-crafted policies have advanced the state of the art, they are fundamentally built on heuristics that serve as indirect proxies for a token's future importance. They assume that what was important before will remain important. This assumption might not always hold, as the informational needs of generation are dynamic and content-dependent. A token's true value is determined by its utility to future decoding steps, a quantity these methods do not directly optimize for. Consequently, they are prone to suboptimal eviction decisions, leading to an irrecoverable loss of critical information and degraded generation quality.
Even post-hoc compression methods based on attention signals are fundamentally backward-looking and have additional drawbacks: they require repeated computation of attention statistics, which introduces significant overhead; and they are not compatible with efficient implementations like FlashAttention \citep{dao2022flashattention}, limiting their practicality.

In this work, we move beyond hand-crafted policies and reframe KV cache eviction as a Reinforcement Learning (RL) problem: learning an attention-free policy that selects which tokens to evict. We formulate this learning problem as a ranking task, where the agent orders tokens by their predicted future utility.
As qualitatively illustrated in \Cref{fig:teaser}, a learned policy can successfully approximate a token's future utility, a complex, non-local structure, where simple heuristics fail.
Such a ranking enables a highly efficient and flexible strategy: eviction is performed by discarding the lowest-ranked entries to meet any memory budget. As a reward signal, we define the ranking successful if for any given cache size, the most valuable information is retained, thus minimizing performance degradation.

We introduce \acrfull{kvs}, a framework that trains a distinct, lightweight RL agent for each KV head in the model. This per-head specialization allows each policy to adapt to the unique attention patterns of its corresponding head. The agents are trained efficiently on pre-computed generation traces without any additional inference, using only the key and value vectors as input and requiring no architectural changes to the underlying LLM. To train the agents, we introduce a reward that evaluates the quality of the policy's sorting for all possible cache budgets. For every cache budget, we measure how much relevant information in the future would be erroneously evicted with the candidate sorting when only the top tokens from the cache are kept.

Evaluated across two different model families on long-context synthetic benchmark RULER \citep{hsieh2024ruler} and a multi-turn natural language benchmark OASST2-4k \citep{Koepf2023a}, \gls{kvs} significantly outperforms strong heuristic baselines, demonstrating that learning specialized policies to predict the future utility of tokens is a powerful and scalable paradigm for KV cache eviction. Evaluation in a zero-shot generalization setting on downstream benchmarks from the EleutherAI Evaluation Harness \citep{eval-harness} shows that \gls{kvs} retains strong performance out of distribution, even to $32\times$ longer context lengths.

In summary, our main contributions are:
\begin{itemize}
   \item  We reframe KV cache eviction as a learning problem: ranking cache entries by their predicted future utility.
   \item We introduce \gls{kvs}, a system of lightweight, per-head RL agents that learn specialized sorting policies using only keys and values without any attention information.
   \item We propose a reward that evaluates eviction policies across all cache budgets without any LLM inference.
   \item We show that \gls{kvs} substantially improves long-context performance over strong baselines, scaling to contexts up to 128K tokens, and generalizes to unseen domains.
\end{itemize}

\section{Related Work}

\textbf{Attention-Based Eviction.}
Eviction-based approaches aim to choose a subset of the KV cache. Early methods used simple heuristics like First-In-First-Out (FIFO) or Least Recently Used (LRU) \citep{Xiao2023a}. More recent work leverages the inherent sparsity in attention patterns. Techniques like StreamingLLM \citep{Xiao2023a} and H2O \citep{Zhang2023a} observe that initial tokens ("attention sinks") and recent tokens often capture most of the required context, allowing for the eviction of intermediate tokens. Others, such as KeyFormer \citep{keyformer} and related works \citep{Cai2024a,chari2025compactorcalibratedqueryagnostickv,devoto2025expectedattentionkvcache}, explicitly analyze attention scores or structures to identify and retain important tokens, sometimes using the current query to inform eviction \citep{clusterattn,Li2024a,Lee2024a}. Although our work is an eviction methodology, it departs from this paradigm by instead learning a \emph{forward-looking, query-independent policy} to directly predict a token's future utility, rather than inferring it from past attention or the current query.

\textbf{Memory Hierarchy Management.}
Some approaches utilize system memory (CPU RAM) as a secondary cache layer \citep{Chen2024b,Sheng2023a} to the GPU memory. Recent sparse retrieval approaches, such as IceCache \citep{icecache}, ArkVale \citep{arkvale}, MagicPig \citep{magicpig}, and InfiniGen \citep{infinigen}, manage these hierarchies by offloading KV entries to slower memory and retrieving them only when needed, possibly in pages \citep{Tang2024a}. While this allows for larger cache sizes, it introduces significant latency (a reload cost) when accessing offloaded entries. Policies for deciding what and when to offload are often heuristic. While our work focuses on eviction, the learned ranking it produces provides a principled, data-driven signal for such hierarchical management: the lowest-ranked entries are natural candidates for being moved to slower memory, representing a powerful synergy between the two approaches.

\looseness=-1
\textbf{Representation Compression.}
Instead of removing entries, another line of work focuses on reducing the memory required per entry. Quantization techniques reduce the numerical precision (e.g., to 8-bit integers or lower) of keys and values, cutting memory usage, often with minimal performance impact \citep{minikv,Dettmers2022a,Xiao2022a}, while low-rank approximation methods represent the key and value matrices using lower-dimensional projections \citep{palu,Singhania2024a,Ribar2023a}. State merging approaches like MorphKV \citep{morphkv} identify and combine similar KV pairs into new, synthetic representations. These techniques are orthogonal and complementary to eviction strategies, as one could utilize our approach to filter out the least useful tokens and then apply MorphKV’s merging logic to the remaining high-utility tokens.

\textbf{Learned Approaches.}
Applying machine learning to directly optimize cache management policies is less common than heuristic approaches. While learning has been used extensively for general caching problems \citep{Afrin2024a,Shuja2020a,Wang2020b}, its application specifically to the dynamic nature of the Transformer KV cache is emerging \citep{Chari2025a,Cetin2024a,Nawrot2024a,Ge2023a}. Some recent works, such as Gisting Token \citep{gisting} and Activation Beacon \citep{beacon}, learn to compress context into compact summary tokens or condensed activations. Our work differs by learning a fine-grained ranking over all tokens rather than a summary; however, these approaches are complementary, as KVP could identify which tokens are best suited for summarization. Our approach is distinguished by utilizing RL to train a lightweight, per-head policy. A novel reward signal, the eviction error across all cache budgets, ensures robust performance under varying memory limits. Furthermore, the per-head rankings and future attention estimates from KVP could directly inform a dynamic, non-uniform budget allocation, a promising direction for future work. Closer to our setting, LOCRET~\citep{huang2025locret} injects retaining heads into each attention layer that take $[Q, K, V]$ as input; since query vectors are not stored in the KV cache, this couples eviction to the \gls{llm}'s forward pass, whereas \gls{kvs} operates solely on cached keys, values, and positions and can be applied as a drop-in post-hoc module. The concurrent work JudgeQ~\citep{liu2026judgeq} is methodologically the closest learned baseline: it uses the same future-attention signal as \gls{kvs}, but trains learnable query embeddings via supervised MSE regression on attention maps inside the \gls{llm}, still requiring full forward passes, while \gls{kvs} trains fully offline on pre-collected traces. Similar in philosophy, the concurrent work \cite{jegou2026kvzapfastadaptivefaithful} approximates KV importance scores estimated by \citep{kim2025kvzip}, that requires two LLM forward passes.

Overall, while prior work has addressed KV cache constraints through heuristics, memory hierarchies, or compression, our approach introduces an RL framework that casts eviction as a ranking problem. By training lightweight, per-head policies to predict tokens' future utility and optimizing them with a global, budget-agnostic reward, we offer an adaptive, offline, query-independent solution. Crucially, this method is also complementary to many existing techniques, and may provide a principled signal for memory offloading, context gisting, or dynamic budget allocation.

\section{KV Policy (KVP)}
\label{sec:methodology}

We consider the problem of KV cache eviction: given $n$ tokens $\mathcal{X}=\{x_i\}_{i \in [n]}$ and a budget $b$, select a subset \mbox{$\mathcal{S}^\star \subset \mathcal{X}$ } maximizing some downstream performance reward $R$ as
\begin{equation}
   \mathcal{S}^\star_b \in {\arg\max}_{|\mathcal{S}|=b, \mathcal{S}\subset\mathcal{X}}R(\mathcal{S}).
\end{equation}
Because generation is autoregressive and budgets vary over time, the solution must handle arbitrary $n$ and all $b\in[n]$. For a general reward $R$, this is a hard combinatorial selection problem. Rather than tackle it in full generality, we \emph{design} our reward so that the optimum is tractable by construction, satisfying \textbf{(i) uniqueness:} each $\mathcal{S}^\star_b$ is unique, ensured via simple tie-breaking; and \textbf{(ii) nestedness:} $\mathcal{S}^\star_b \subset \mathcal{S}^\star_{b+1}$ for all $b$. Nestedness does not hold for arbitrary task rewards, but holds in our setting because the reward (\Cref{sec:reward}) is additive over tokens. Under these conditions, KV cache eviction is equivalent to KV cache ranking \emph{with proof deferred to Appendix~\ref{appendix:proof}}.

\begin{prop}
   Assume (i) \textbf{uniqueness} of each $S_b^\star$ and (ii) \textbf{nestedness}: $S_b^\star \subset S_{b+1}^\star \text{ for all } b$. Then, there exists a total order (ranking) $\pi$ such that
   $$
      S_b^\star=\{\,i\in [n]:\ \pi(i)\le b\,\}\quad\text{for all }b.
   $$
   Equivalently, there exists a scoring function whose top-$b$ elements realize $S_b^\star$ for every $b$.
   \label{prop:ranking}
\end{prop}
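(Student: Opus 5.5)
The plan is to build the ranking $\pi$ directly from the chain of optimal sets. Set $S_0^\star=\emptyset$; note that $S_n^\star=[n]$, since this is the only subset of size $n$. Nestedness gives a chain
$$\emptyset=S_0^\star\subset S_1^\star\subset\cdots\subset S_n^\star=[n],$$
and uniqueness makes each $S_b^\star$ a well-defined object, so the chain is determined. Since $|S_b^\star|=b$ and $|S_{b-1}^\star|=b-1$, each difference $S_b^\star\setminus S_{b-1}^\star$ is a single element; call it $i_b$. Then define $\pi(i_b)=b$.

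The first check is that $\pi$ is a bijection $[n]\to[n]$, i.e.\ a total order. The sets $S_b^\star\setminus S_{b-1}^\star$ for $b=1,\dots,n$ are pairwise disjoint, because once $i_b$ enters the chain it stays in every later set and so cannot be new again. Their union is $S_n^\star=[n]$. Hence every token receives exactly one rank.

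The second check is the identity $S_b^\star=\{i:\pi(i)\le b\}$. I would prove it by induction on $b$. For $b=0$ both sides are empty. For the step,
$$S_b^\star=S_{b-1}^\star\cup\{i_b\}=\{i:\pi(i)\le b-1\}\cup\{i:\pi(i)=b\}.$$
The equivalent formulation then follows by taking the score $s(i)=n+1-\pi(i)$. This score has no ties, and its top-$b$ elements are exactly $\{i:\pi(i)\le b\}$.

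I do not expect a real obstacle here. The only delicate points are where each hypothesis is used. Uniqueness is needed so that ``the'' optimal set $S_b^\star$ referenced by nestedness is unambiguous. The size constraint $|S|=b$ is what forces each increment of the chain to be exactly one element, which is what makes $\pi$ well defined.
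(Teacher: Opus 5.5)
Your proposal is correct and follows essentially the same route as the paper. You take the single new element of each step of the nested chain, give it rank $b$, and read off a strictly decreasing score. Your attribution of the one-element increment to the cardinality constraint $|S_b^\star|=b$, rather than to uniqueness as the paper phrases it, is the cleaner justification.
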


Following proposition~\ref{prop:ranking}, we reformulate eviction as learning a single budget-agnostic scoring function. For a given budget $b$, we sort the cache entries from most to least valuable for future decoding using the learned scoring function and retain the top-$b$. A high-quality ranking preserves critical information across all budgets, minimizing degradation.

We use RL to learn this scoring function. We parameterize a stochastic ranking policy and directly optimize the true discrete end-to-end reward using policy-gradient methods. We analyze this choice against differentiable relaxations of sorting \citep{softsort,neuralsort,blondel2020fast} in \Cref{appendix:ablation:rl}.

We employ a lightweight RL agent, governed by parameters $\theta$, to define the scoring function $f(;\theta)$. To capture the specialized functions of different attention mechanisms, we train a distinct agent for each KV head in the LLM. To this end, we introduce \gls{kvs}, a framework for efficiently training lightweight, per-head RL agents to perform this ranking. In the remaining of this section, we first discuss the agent architecture, the reward, and finally the learning process.

\subsection{KV Cache Eviction Agent}
We follow the Plackett-Luce model \citep{plackett} to formulate learning to sort as an RL problem. Given a set of tokens $\{x_i\}_{i \in [N]}$, we learn a parametric scoring function $f(x_i;\theta)$ which assigns a score to each token $x_i$. This scoring function induces a stochastic sorting policy $\pi_{\theta}$ which samples permutation $\sigma=(\sigma_1,\ldots,\sigma_N)$ sequentially as:
\begin{equation}\label{eq:plackett_luce}
   \pi_{\theta}(\sigma| x_1,\ldots,x_N) = \prod_{i=1}^N \frac{\exp\left(f(x_{\sigma_i};\theta)\right)}{\sum_{j=i}^N \exp\left(f(x_{\sigma_j};\theta)\right)}
\end{equation}
At each step $i$, the next element $\sigma_i$ is sampled proportionally to its score, normalized over the remaining tokens. This process defines a valid distribution over all permutations. The scoring function $f(;\theta)$ together with the sampling policy, forms our KV cache eviction agent. We next specify the parameterization of $f(;\theta)$.

\textbf{Scoring Function}
The representation for token $x_i$ is the concatenation of its key vector $k_i$, value vector $v_i$, and its original position $pos_i$ as $x_i=(k_i, v_i, pos_i)$. We define the scoring function $f(;\theta)$ as a small MLP parametrized with $\theta$ as $f(x_i;\theta)=MLP_{\theta}(k_i, v_i, pos_i)$. Crucially, the policy relies only on information available in the cache and does not require access to future information, previous attention scores or any query embedding.

\textbf{Efficient Parallel Sampling with Gumbel-Sort}
While \Cref{eq:plackett_luce} defines the distribution, sequential sampling is inefficient.
Fortunately, a permutation can be sampled from this distribution in a single
step using the Gumbel-Sort \citep{gumbelsort}. Given the scores $f(x_i; \theta)$, we generate i.i.d. noise samples $g_i \sim \text{Gumbel}(0, 1)$.
A permutation $\sigma$ is then sampled by sorting the perturbed  scores:
\begin{equation}
   \sigma = \text{\texttt{argsort}}_{i \in [N]} \left( f(x_i; \theta) + g_i \right)
\end{equation}
This procedure is non-autoregressive, fully parallelizable on modern hardware, and allows us to sample an entire permutation with just one forward pass of the scoring model and a fast sort operation. This is critical for efficient training.

\subsubsection{Global Reward for Offline RL}
\label{sec:reward}
A key component of our framework is a reward signal that globally evaluates the quality of an agent's entire ranked output, directly optimizing for the efficient preservation of information across all possible cache budgets. More importantly we define this reward in a way to enable training without any additional LLM inference.

Consider the input set of $n$ tokens $\mathcal{X}=\{x_i\}_{i \in [n]}$ and
$f$ future tokens $x_{n+1},\ldots,x_{n+f}$ in the training data, and let
$A(x_i, x_j)$ denote the attention from $x_j$ to $x_i$.
For models with Grouped-Query Attention \citep{Ainslie2023a}, the attention score
$A(x_i,x_j)$ is the maximum attention value across all queries within that group.
Given the permutation $\sigma$, a cache of budget $b$ retains the top-$b$ tokens
$\sigma_1,\ldots,\sigma_b$ and evicts the rest. The per-budget cost is the total
future-attention importance of the evicted tokens,
\begin{equation}
   \mathcal{C}^b(\sigma_{b+1},\ldots,\sigma_n;\mathcal{X})
   = \sum_{i=b+1}^{n} \sum_{j=n+1}^{n+f} A(x_{\sigma_i}, x_j),
\end{equation}
and the total cost of a ranking is the area under this cost-vs-budget curve,
which equivalently rewrites as a rank-weighted sum:
\begin{equation}
   \mathcal{C}(\sigma;\mathcal{X})
   = \sum_{b=0}^{n-1} \mathcal{C}^b
   = \sum_{b=1}^{n} b \cdot \sum_{j=n+1}^{n+f} A(x_{\sigma_b}, x_j),
\end{equation}
since $\sigma_b$ is evicted at all $b$ budgets smaller than its rank.
To make training scale-invariant, we normalize by the cost of the optimal ranking
$\sigma^\star = \arg\min_\sigma \mathcal{C}(\sigma;\mathcal{X})$, and define the
per-budget reward
\begin{equation}
   \mathcal{R}^b(\sigma_b;\mathcal{X})
   = -\frac{b \cdot \sum_{j=n+1}^{n+f} A(x_{\sigma_b}, x_j)}{\mathcal{C}(\sigma^\star;\mathcal{X})}.
\end{equation}
The total reward equals the negated normalized AUC,
\begin{equation}\label{eq:rl_reward}
   \mathcal{R}(\sigma;\mathcal{X})
   = \sum_{b=1}^{n} \mathcal{R}^b
   = -\frac{\mathcal{C}(\sigma;\mathcal{X})}{\mathcal{C}(\sigma^\star;\mathcal{X})} \leq -1,
\end{equation}
with equality at $\sigma^\star$. Maximizing $\mathcal{R}$ minimizes the normalized AUC. Figures plot the per-budget cost
$-\mathcal{R}^b \geq 0$, the contribution of $\sigma_b$ to the normalized AUC
(lower is better).

\subsection{Per-Head RL Agent and Efficient Training}

A significant advantage of our method is its training efficiency. The agents are trained entirely offline, obviating the need for live LLM inference within the RL optimization loop.  This is possible because our reward is computed using the static future tokens, rather than dynamically generated text. First, we generate a dataset by running the base LLM on a training corpus. For each sample, we store the full sequence of queries, keys, and values. The attention matrices are not stored due to their prohibitive size.

During training, we sample a sequence and a cache size to be evicted from. We update the parameters $\theta$ of the agent using a policy gradient algorithm. Since our reward $\mathcal{R}(\sigma)$ is a terminal reward assigned only after the entire permutation $\sigma$ is generated, the objective is $J(\theta) = \mathbb{E}_{\sigma \sim \pi_\theta} [\mathcal{R}(\sigma)]$. We use the REINFORCE algorithm with a Leave-One-Out (RLOO) baseline \citep{Williams1992a,ahmadian2024back} to reduce variance. For an episode (permutation) $\sigma^k$ within a batch of $K$ episodes, the baseline $\bar{\mathcal{R}}$ is the average terminal reward of all other episodes in the batch. Considering the advantage, $\mathcal{R}(\sigma^k) - \bar{\mathcal{R}}$, the gradient is thus estimated as:
\begin{align}
   \nabla_\theta J(\theta) \approx \frac{1}{K}
   \sum_{k=1}^{K} \Big[
                     (\mathcal{R}(\sigma^k) - \bar{\mathcal{R}})
                     \sum_{i=1}^{n} \nabla_\theta \log \pi_\theta(\sigma^k_i|\mathcal{X})
                     \Big].
   \label{eq:gradient}
\end{align}
We summarize the training in Alg.~\ref{alg:rl_training} and defer the further implementation details to \Cref{sec:implementationdetails}. Training per-head agents from offline traces is highly scalable. At inference, the learned policy for each head ranks its KV entries, and the trailing entries are evicted based on the budget.

\begin{algorithm}[t]
   \caption{RL Training on Pre-Computed KV Traces}
   \label{alg:rl_training}
   \begin{algorithmic}[1]
      \State \textbf{Input:} Static dataset of pre-computed traces over $m$ sequences each with length $n_j, j \in [m]$ denoted as $\mathcal{X}^j=\{x^j_i\}_{i \in n_j}$ containing Q, K, V tensors.
      \While{training not converged}
      \State Sample a data item  $j \sim \texttt{Unif}(m)$ and a context length $n \sim \texttt{Unif}(n_j)$.
      \State Sample $K$ permutations from the agent $$\sigma^{k}_1,\ldots,\sigma^k_{n} \sim \pi_\theta (\sigma | x_1^j,\ldots,x_n^j), k\in[K]$$
      \State Calculate reward $\mathcal{R}(\sigma^{k}_1,\ldots,\sigma^k_n; \mathcal{X}^j)$ using~\eqref{eq:rl_reward}.
      \State Update $\theta$ using \eqref{eq:gradient}.
      \EndWhile
   \end{algorithmic}
\end{algorithm}

\section{Evaluation}\label{sec:evaluation}
\looseness=-1We conduct a comprehensive evaluation of \gls{kvs} across multiple language modeling benchmarks and a wide range of cache budgets. Our results show that \gls{kvs} consistently shows stronger downstream performance relative to existing heuristics, including methods that exploit privileged, query-specific information. This results confirm that high-quality KV cache eviction policy can be trained entirely offline and deployed at inference time using only static token features (keys, values, and positions). We also provide ablation studies to analyze the design choices behind \gls{kvs}.

\begin{figure*}[ht]
   \centering
   \includegraphics[width=.44\textwidth]{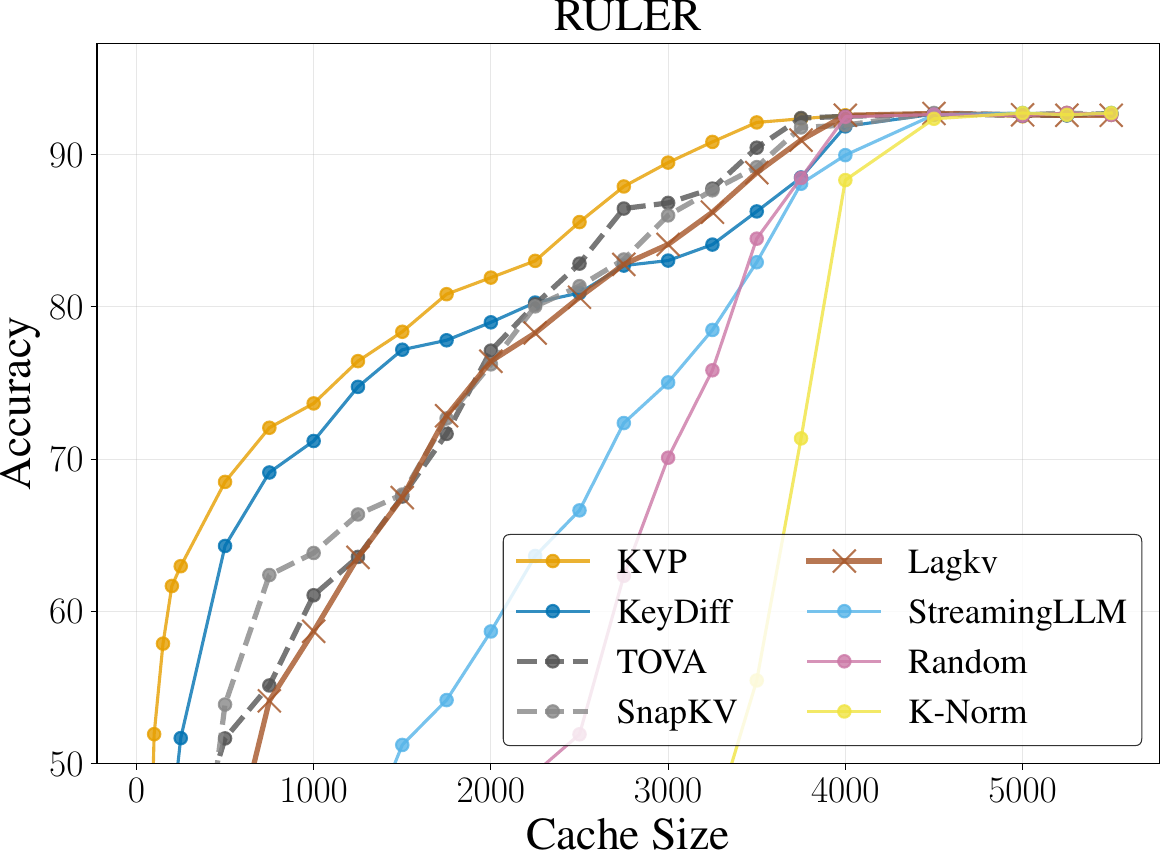}
   \hspace{0.5cm}
   \includegraphics[width=.44\textwidth]{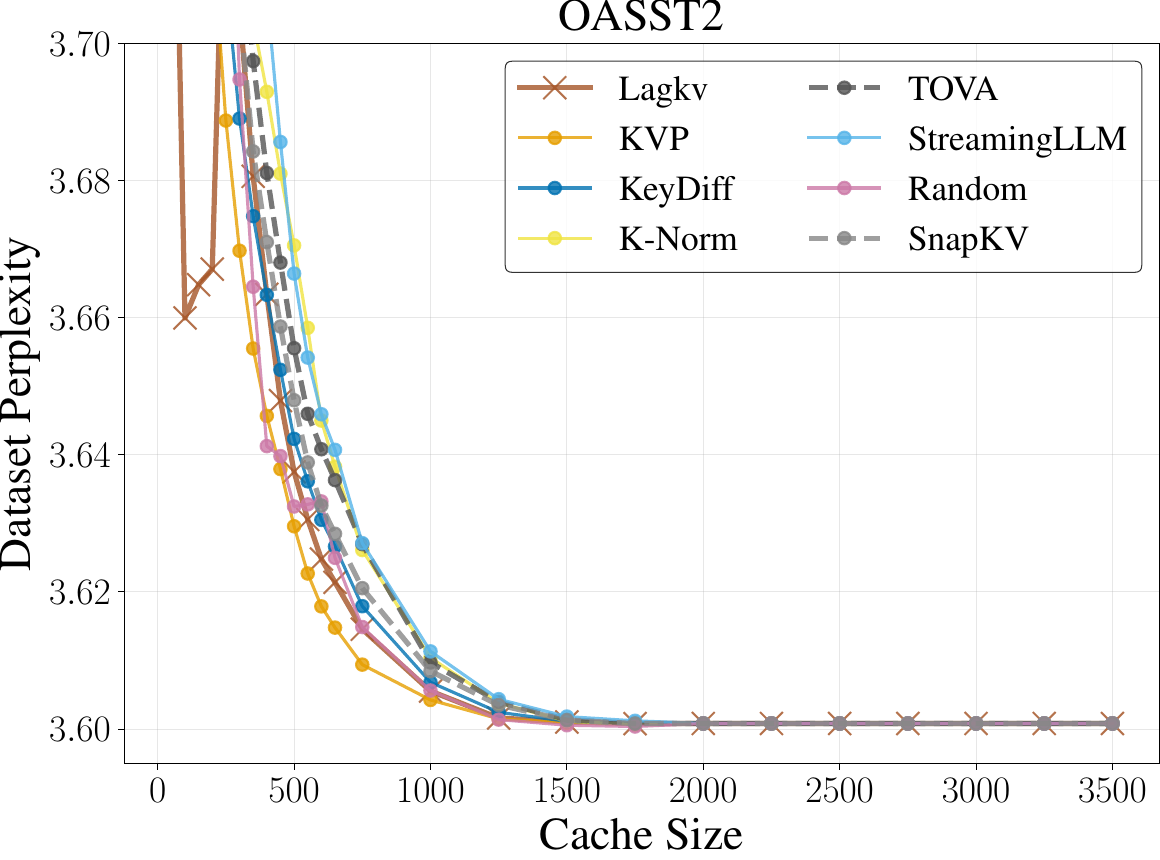}
   \caption{%
      Overall in-distribution accuracy on the RULER-4k benchmark (Left) and perplexity on the OASST2-4k test set (Right), as a function of the absolute KV cache size. \gls{kvs} achieves the highest accuracy and lowest perplexity across  most of the possible cache sizes.}
   \label{fig:ruler_oasst2}
\end{figure*}

\textbf{Base Model and Datasets.}
Our experimental setup is centered on \texttt{Qwen2.5-7B-Chat} \citep{Yang2024b,Wang2024a}. We include further evaluation using  \texttt{Phi-4 14B} \citep{phi4} in the Appendix, \Cref{fig:phi}. We train and evaluate our \gls{kvs} agents using two distinct long-context benchmarks: \emph{RULER-4k} \citep{hsieh2024ruler}, a synthetic dataset designed to probe long-context reasoning with sequences of approximately 4500 tokens; and \emph{OASST2-4k}, a subset of the OpenAssistant Conversations Dataset \citep{Koepf2023a} featuring multi-turn dialogues of similar length.

To assess generalization, we evaluate on extended context lengths using \emph{RULER-128K}, as well as the \emph{GovReport} summarization and \emph{Passage Retrieval} (English and Chinese) tasks from the LongBench benchmark \citep{Bai2023LongBenchAB}. We further perform zero-shot evaluation on four standard downstream tasks from the EleutherAI Language Model Evaluation Harness \citep{eval-harness}: \emph{BoolQ} \citep{clark2019boolq}, a reading comprehension task; \emph{ARC-Challenge} \citep{allenai:arc}, requiring multi-step science reasoning; \emph{MMLU} \citep{hendrycks2021measuring}, testing expert-level knowledge; and \emph{HellaSwag} \citep{Zellers2019HellaSwagCA}, a commonsense reasoning task. Importantly, \gls{kvs} is not trained or fine-tuned on any of these generalization tasks. We defer further implementation details to \Cref{sec:implementationdetails}.

\textbf{Baselines.}
We benchmark \gls{kvs} against both attention-based and attention-free KV cache management techniques. The first category includes state-of-the-art methods like \emph{TOVA} \citep{Oren2024a} and \emph{SnapKV} \citep{Li2024a}.
These methods leverage attention scores to identify important tokens, which introduces computational overhead by tying the eviction strategy to the attention calculation step. The second,   more efficient category of attention-free baselines, to which our own \gls{kvs} agents belong, operates independently of the attention mechanism. This group includes a \emph{Random} eviction baseline, the recency-based \emph{StreamingLLM} \citep{Xiao2023a}, and several approaches that prune tokens based on their key embeddings. These include methods based on statistical patterns (\emph{LagKV} \citep{Liang2025LagKVLI}), vector similarity (\emph{KeyDiff} \citep{Park2025a}), and L2 norm (\emph{K-Norm} \citep{devoto2024simple}). Additionally, we report a comparison against the concurrent learned baseline \emph{JudgeQ}~\citep{liu2026judgeq} in \Cref{appendix:judgeq_comparison}, where \gls{kvs} consistently outperforms JudgeQ across RULER-128K, LongBench passage retrieval (EN/ZH), and BoolQ. To ensure a fair comparison, we adapt all baselines to our ranking-based framework by converting their binary keep/evict decisions into a full token permutation, enabling consistent evaluation.

\textbf{Budgeting and Compression Schedule.}\label{para:compression_schedule} \looseness=-1For all online evaluations, we follow one of two consistent compression protocols. For short-context benchmarks (RULER-4K, OASST2-4k, BoolQ, ARC-Challenge, GovReport), we use a \emph{single compress-after-prefill} schedule: the context is processed by the \gls{llm} to populate the initial KV cache, the specified compression method is then applied once to reduce the cache to the target budget, and the model finally generates the response autoregressively using the compressed cache. For long-context benchmarks (RULER-128K, LongBench passage retrieval), a single-shot prefill would exceed the available memory, so we instead adopt a \emph{chunked prefill-compress} loop: the input is processed in fixed-size chunks, and after each chunk the compression method is re-applied to the \emph{union of the previously kept cache and the newly prefilled tokens}, keeping the total cache size at the target budget throughout. Generation then proceeds from the compressed cache as in the short-context protocol. Both protocols therefore evaluate the same underlying eviction policy; the chunked variant additionally stresses repeated application of that policy across the prefill.
To isolate the performance of the core ranking strategy, we apply a uniform compression budget across all heads and layers for all methods. This ensures a fair comparison focused purely on the quality of the eviction strategy itself, rather than on budget allocation heuristics. While head-specific or layer-specific budgeting is an orthogonal and promising direction for further performance improvements, our approach provides a clear and interpretable evaluation of the underlying policies.

\textbf{Inference Efficiency.}
Our \gls{kvs} agents make their ranking decisions using only the Key and Value vectors, and their position in the context. This makes \gls{kvs} highly efficient, as it avoids re-computing attention scores.
In contrast, attention-based baselines are evaluated using the attention scores generated during the prefill stage, giving them access to information about how the final user message, for example, attends to the rest of the context.
Our method is therefore benchmarked against baselines that have access to more direct, query-specific information at the time of compression. We highlight in all the figures the attention-based baselines with dashed lines.
Please refer to  \Cref{sec:training_overhead} for FLOPs estimation and wall-clock benchmarks.

\textbf{Absolute vs. Relative Cache Size.}
A final methodological note concerns our use of cache size. We report performance as a function of absolute KV cache size (i.e., the number of tokens retained) rather than a relative compression ratio. This decision is motivated by practical application: practitioners operate under fixed memory constraints, making an absolute token budget a more direct and interpretable measure of resource cost. In contrast, a compression ratio's impact on memory is dependent on the initial context length, making it a less stable metric for cross-scenario comparison.

\subsection{Downstream Performance}
\looseness=-1 We apply the strategies to a live LLM, reducing the KV cache to a target budget after the prefill before measuring performance on several downstream benchmarks.

\textbf{RULER benchmark.}
We evaluate performance on the RULER-4k benchmark using its official text-based accuracy metric, which requires generating the correct answer for long-context reasoning tasks. This tests whether the methods preserve the specific, often non-local, information needed for complex problem-solving. \Cref{fig:ruler_oasst2} (left) shows that \gls{kvs} consistently outperforms all baselines, retaining higher accuracy as the cache budget shrinks. This result is a direct consequence of its learned policy: unlike heuristics that might discard old but crucial clues, \gls{kvs} learns to identify and keep these high-value tokens, regardless of their position, enabling the model to succeed at the task. A detailed breakdown of performance across all RULER subtasks, along with their corresponding eviction error curves, is provided in \Cref{appendix:fig:ruler_subtasks}, further highlighting the robustness of our method. This result is particularly noteworthy given that RULER's structure heavily favors strategies that can use the final question to identify relevant context; an advantage held by attention-based baselines but not by \gls{kvs}.

\textbf{OASST2 benchmark.}
\looseness=-1We evaluate the efficacy of KV cache compression by its impact on perplexity (PPL), a measure of the model's next-token prediction capability. An effective compression method should minimize PPL degradation. As shown in \Cref{fig:ruler_oasst2} (right), 
\gls{kvs} achieves lower perplexity than baselines that rely solely on KV embeddings, except for \emph{LagKV} at the very smallest budgets.
Furthermore, it often outperforms methods that use additional information across nearly all cache sizes. In contrast, heuristic approaches like \emph{StreamingLLM} exhibit a sharp increase in PPL as the cache budget decreases, confirming the brittleness of their fixed-rule strategies. The superior performance of \gls{kvs} demonstrates a direct link between our training objective and the preservation of the model's downstream capabilities. These findings are further corroborated by PPL results on the RULER benchmark (Appendix \Cref{appendix:fig:ruler_ppl}), where \gls{kvs} maintains a similar advantage.

\textbf{Generalization to longer contexts.}
\looseness=-1We stress-test the policy by evaluating at a context length $32\times$ longer than the one seen during training: on \emph{RULER-128K} (\Cref{fig:ruler_128k}), \gls{kvs} trained at $\sim$4K preserves its lead over every heuristic and attention-based baseline at every cache budget. This length extrapolation indicates that the learned ranking captures intrinsic, length-agnostic properties of token utility rather than artifacts of the training context.

\begin{figure}[ht]
   \centering
   \includegraphics[width=0.7\linewidth]{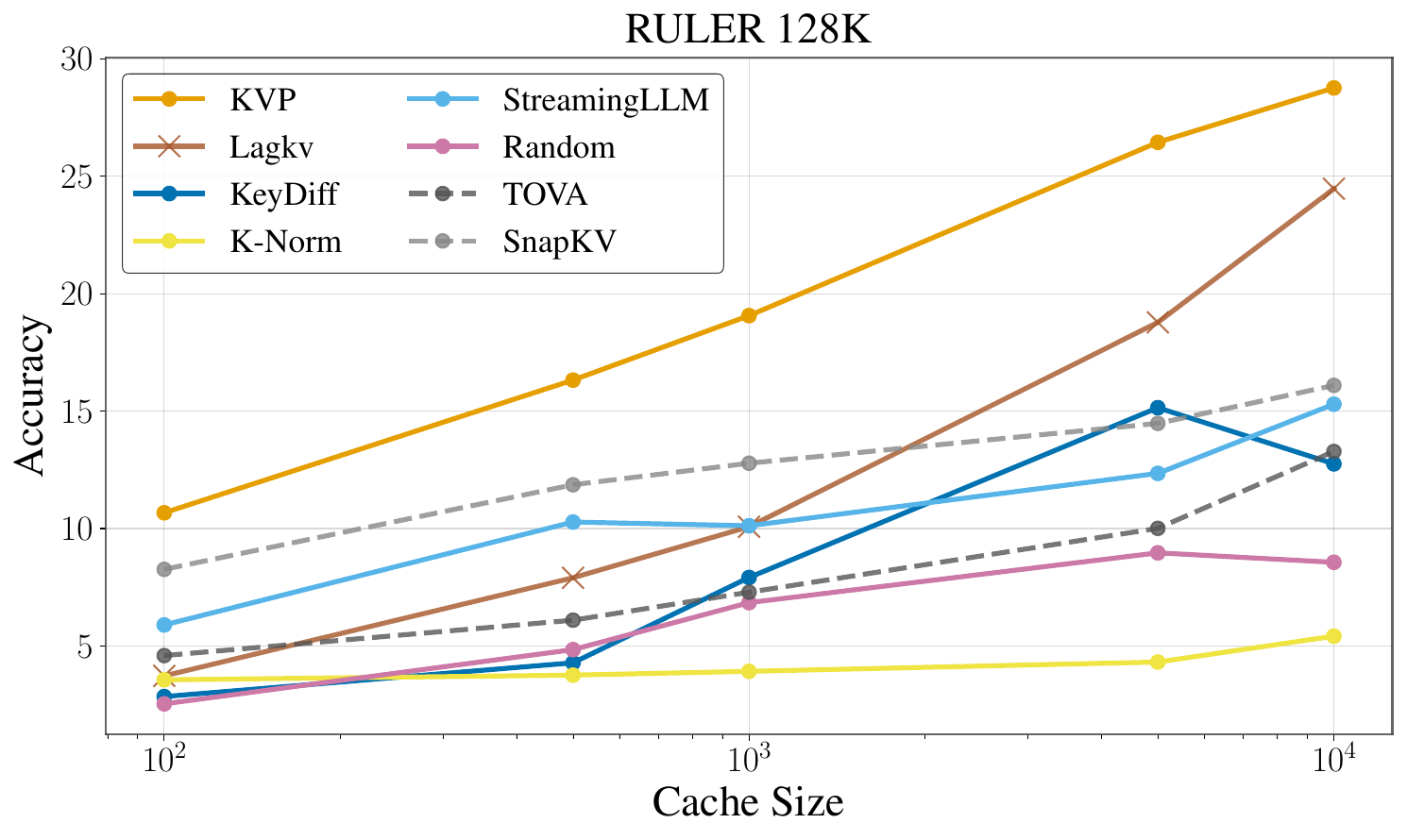}
   \caption{Mean accuracy on \emph{RULER-128K} across the 13 subtasks, as a function of the absolute KV cache size. Trained on sequences of $\sim$4K tokens, \gls{kvs} extrapolates to a $32\times$ longer context and still dominates every baseline at every budget. Evaluated under chunked prefill-compress with physical tensor compaction.}
   \label{fig:ruler_128k}
\end{figure}

\begin{figure*}[ht]
   \centering
   \includegraphics[width=.5\textwidth]{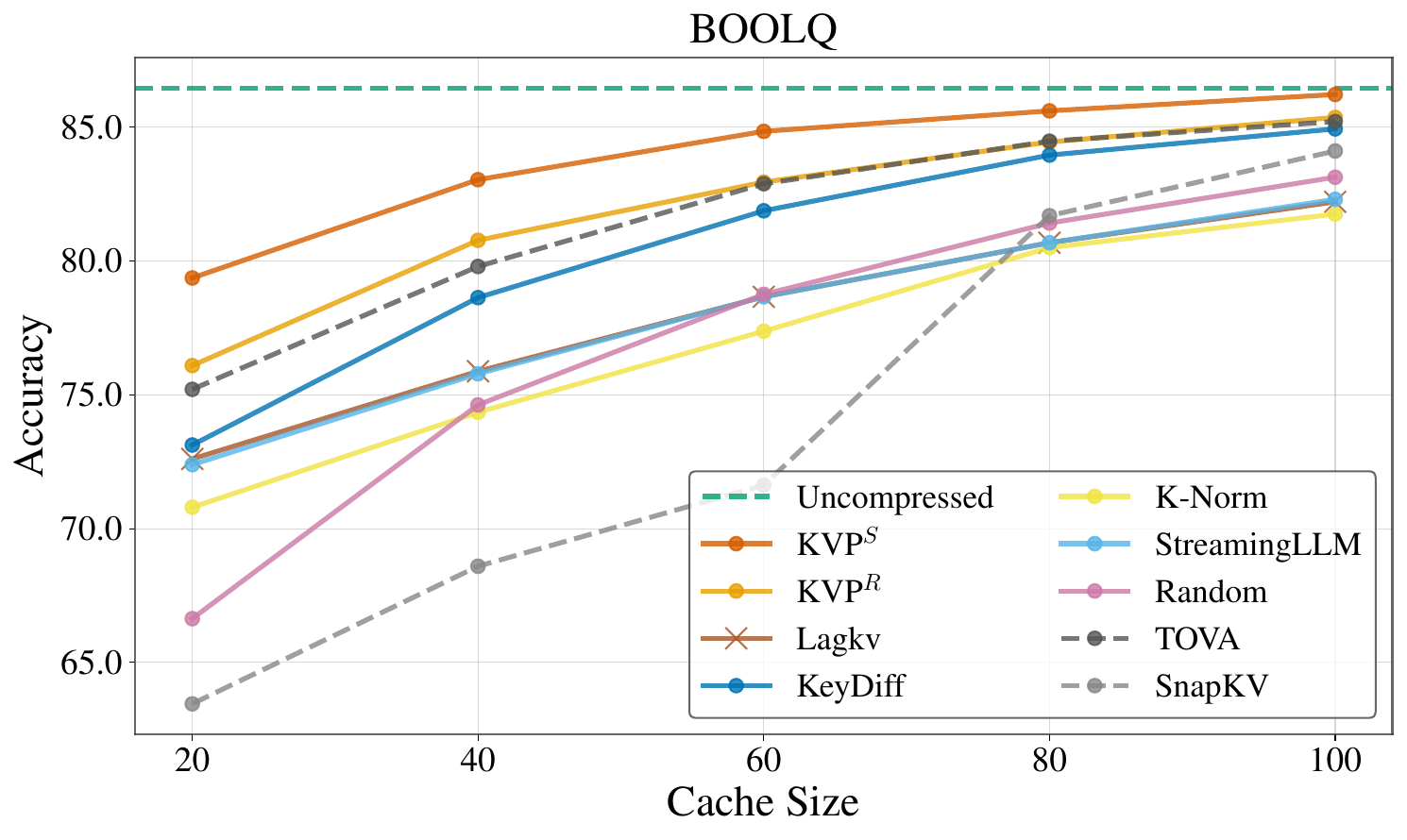}\hspace{.5cm}
   \includegraphics[width=.4\textwidth]{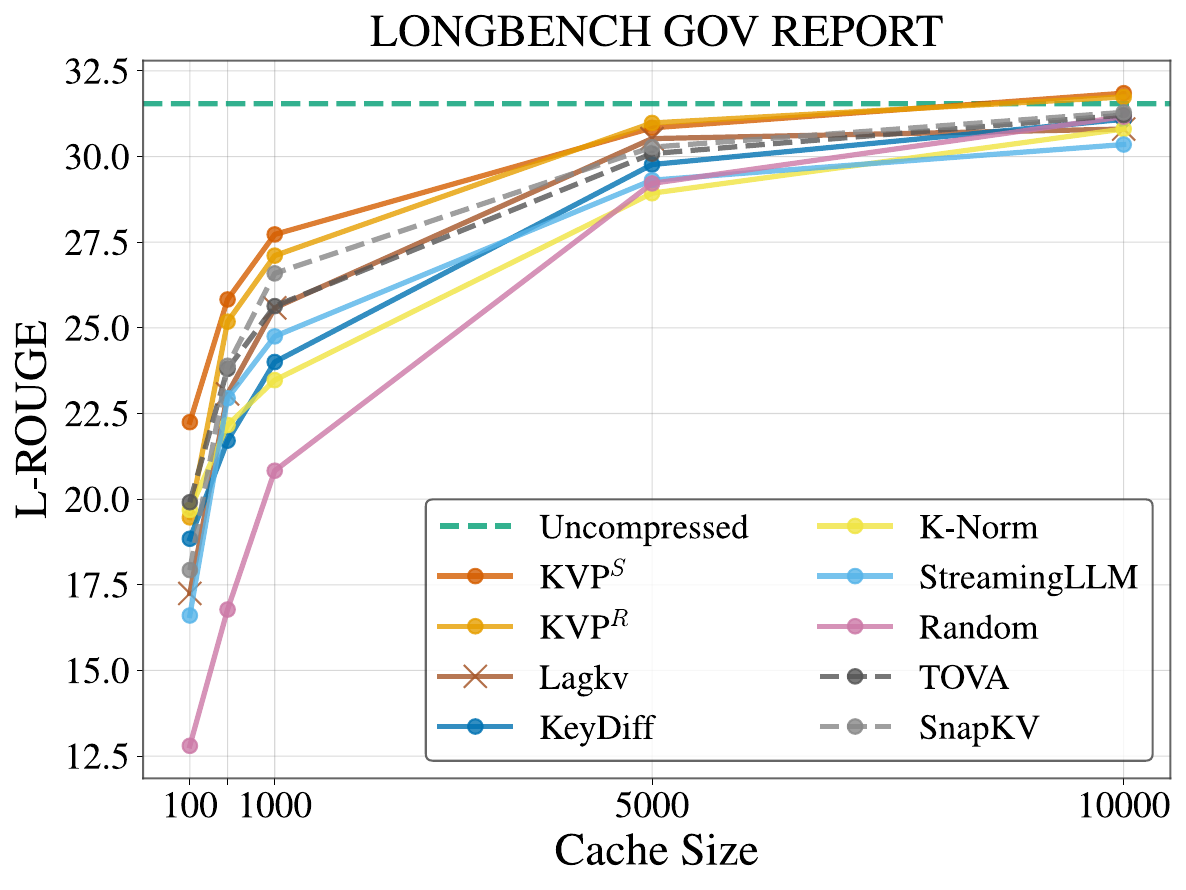}
   \caption{Left: average test accuracy on \emph{BoolQ} (higher is better). Right: L-Rouge score on the \emph{GovReport} summarization task from LongBench (higher is better). Both reported as a function of the absolute KV cache size, with both \gls{kvs}$^R$ and \gls{kvs}$^S$ shown; \gls{kvs}$^S$ (OASST2-trained) holds a consistent advantage on both, reflecting its conversational training distribution.}
   \label{fig:downstream_short}
\end{figure*}

\textbf{Zero-shot generalization.}
\looseness=-1LLMs are widely regarded as general-purpose computational systems and their utility depends on robust performance in out-of-domain scenarios. Even though \gls{kvs} enables efficient training of domain-specific policies using only unlabeled sequences, we evaluate its zero-shot generalization performance on a set of standard downstream tasks. Specifically, we test agents trained on RULER-4k (\gls{kvs}$^R$) and on \emph{OASST2} (\gls{kvs}$^S$) on \emph{BoolQ}, on the \emph{GovReport} summarization task and the \emph{passage retrieval} task of the LongBench benchmark, reporting performance at varying KV cache sizes. Importantly, the ``prefill'' stage does not include the question in \emph{BoolQ} and \emph{GovReport}, so as to better reflect real-world scenarios where a given text is compressed for different questions not known in advance. Results on \emph{ARC-Challenge}, \emph{MMLU}, and \emph{HellaSwag} are presented in \Cref{A:additional_results}.
This evaluation is designed to assess whether policies optimized for long-context efficiency maintain performance on short and long-context benchmarks that probe factual knowledge, multi-step reasoning, and commonsense inference.

The results, presented in \Cref{fig:downstream_short,fig:longbench_retrieval}, demonstrate that \gls{kvs} consistently achieves competitive performance. Across the diverse benchmarks, both \gls{kvs}$^R$ and \gls{kvs}$^S$ rank at or near the top, outperforming heuristic baselines, with gentle degradation as the cache budget shrinks. A clean specialization pattern emerges, aligned with each agent's training distribution: \gls{kvs}$^S$ (trained on conversational OASST2 dialogues) holds the advantage on \emph{BoolQ} (a reading-comprehension yes/no QA task) and on the \emph{GovReport} summarization task, while \gls{kvs}$^R$ (trained on RULER) holds the advantage on \emph{LongBench passage retrieval} in both English and Chinese, a transfer made non-trivial by the fact that LongBench is a separate benchmark sourced from real long-context documents, yet \gls{kvs}$^R$'s policy carries over because the relevant token-utility structure transfers. This specialization is a feature, not a limitation: it shows that practitioners can train \gls{kvs} on representative data for their target workload and obtain a policy tailored to it, while still inheriting the broad robustness visible across all benchmarks. This success preserves the model's core capabilities, confirming that \gls{kvs} is not a specialized tool with significant trade-offs, but a robust technique that can be enabled by default to provide memory savings while minimizing degradation to the model's general utility.

\begin{figure*}[ht]
   \centering
   \includegraphics[width=.49\textwidth]{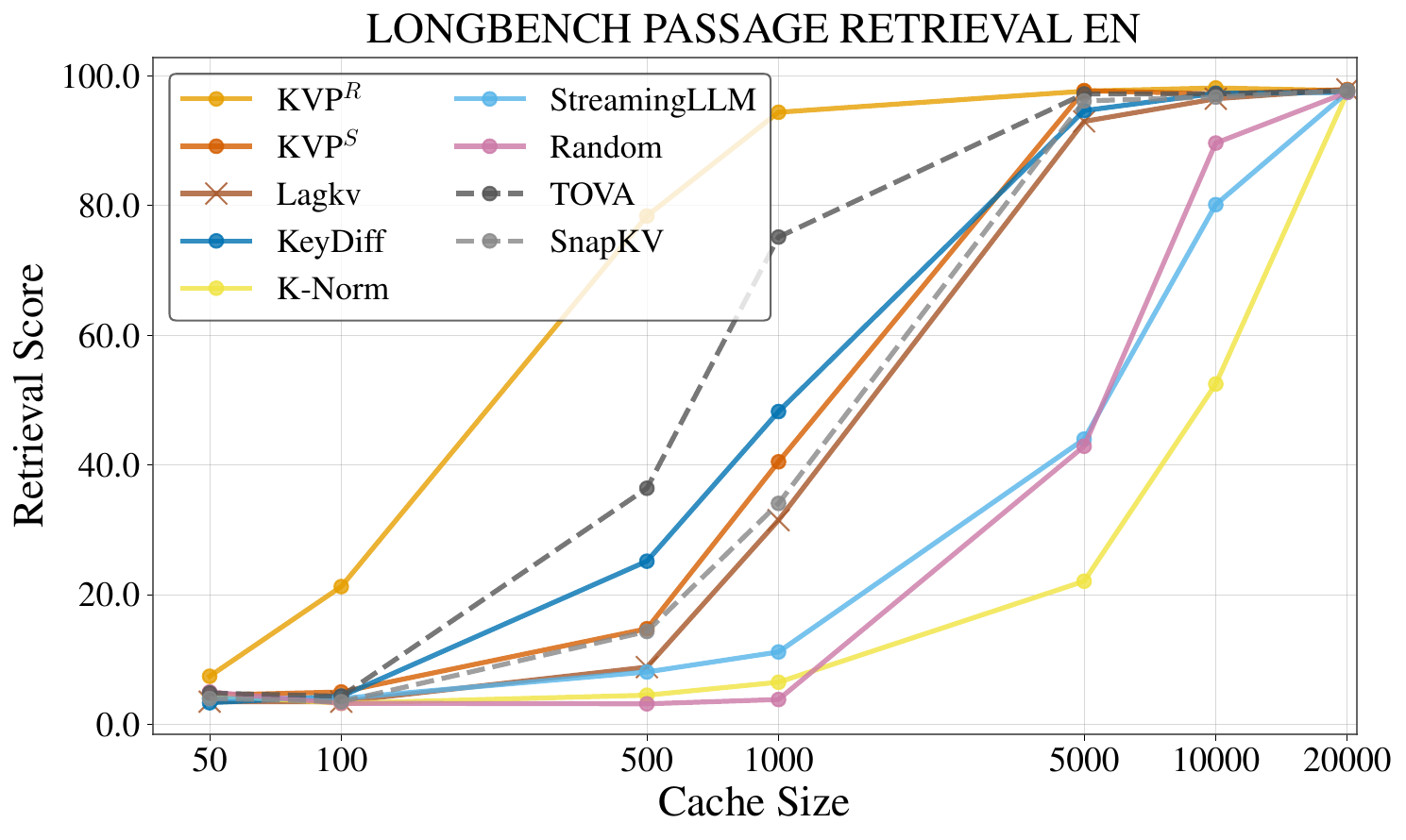}\hfill
   \includegraphics[width=.49\textwidth]{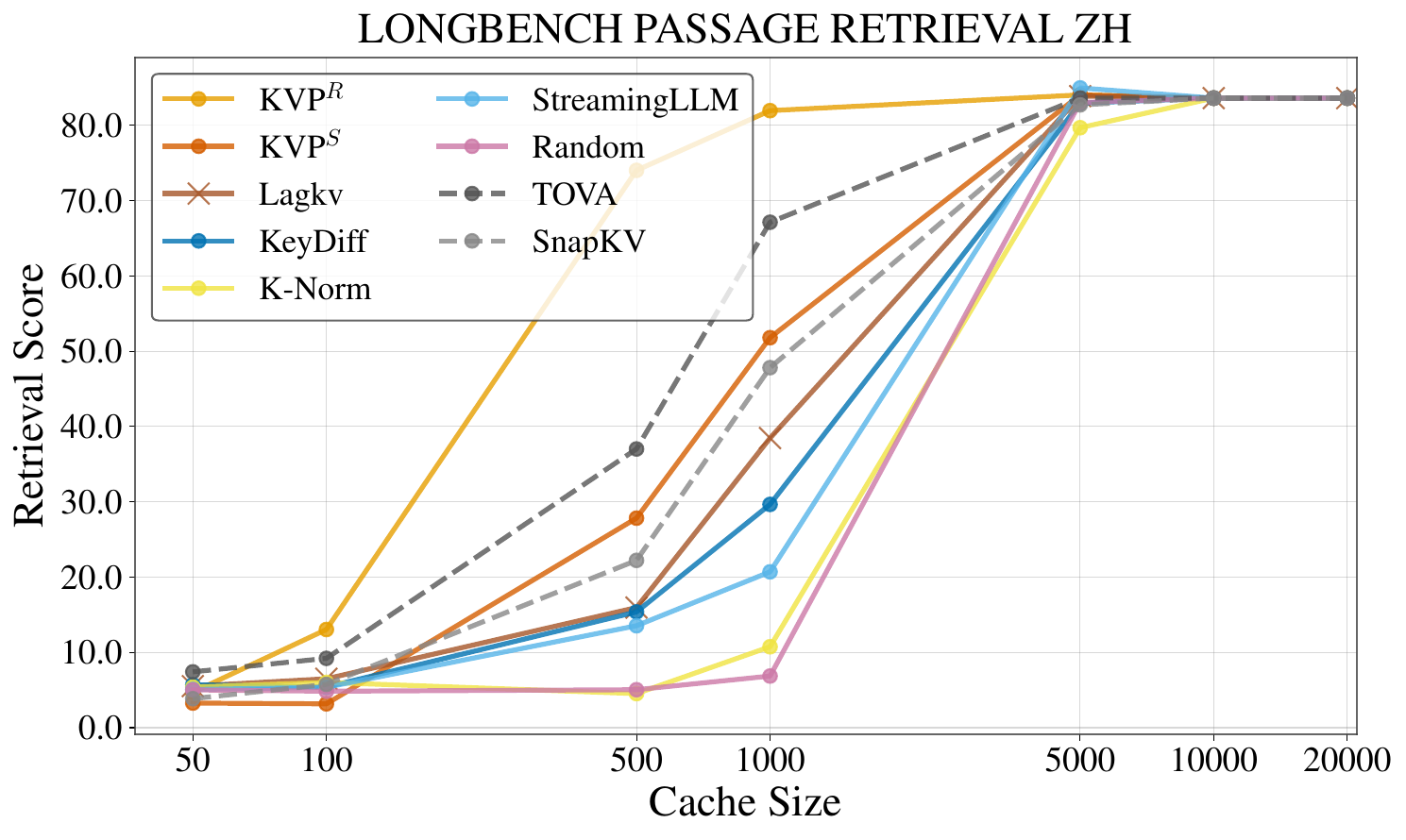}
   \caption{Passage retrieval score on the \emph{LongBench} benchmark, English (Left) and Chinese (Right) splits, as a function of the absolute KV cache size. Both \gls{kvs}$^R$ and \gls{kvs}$^S$ are shown; \gls{kvs}$^R$ (RULER-trained) holds the advantage on the retrieval task, consistent with the retrieval-style structure of its training distribution. Evaluated under the chunked prefill-compress protocol with physical tensor compaction.}
   \label{fig:longbench_retrieval}
\end{figure*}

\begin{figure*}[ht!]
   \centering
   \begin{subfigure}[t]{0.49\textwidth}
      \centering
      \includegraphics[width=\textwidth]{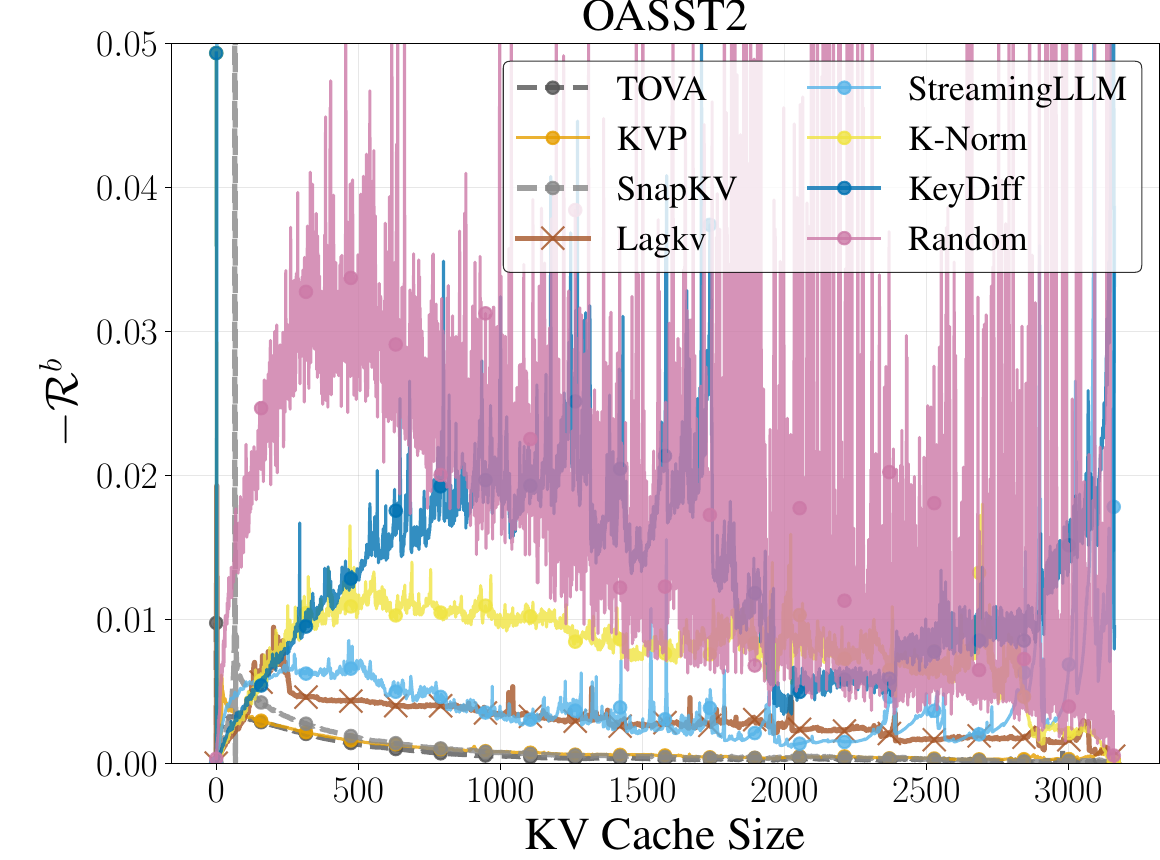}
      \caption{\textbf{Effectiveness of the reward:} The methods split into two clusters: attention-aware policies that use query information at inference (dashed lines) form a low-cost group, while attention-free heuristics cluster at higher cost. \gls{kvs} joins the top-tier cluster, matching \emph{TOVA} and  \emph{SnapKV} despite not using their privileged information, by learning to predict future attention. A full visualization for all the layers is in \Cref{appendix:fig:oastt2_rewards_headsvariance,appendix:fig:oastt2_all_rewards}.}
      \label{fig:ablation_reward}
   \end{subfigure}
   \hfill
   \begin{subfigure}[t]{0.49\textwidth}
      \centering
      \includegraphics[width=\textwidth]{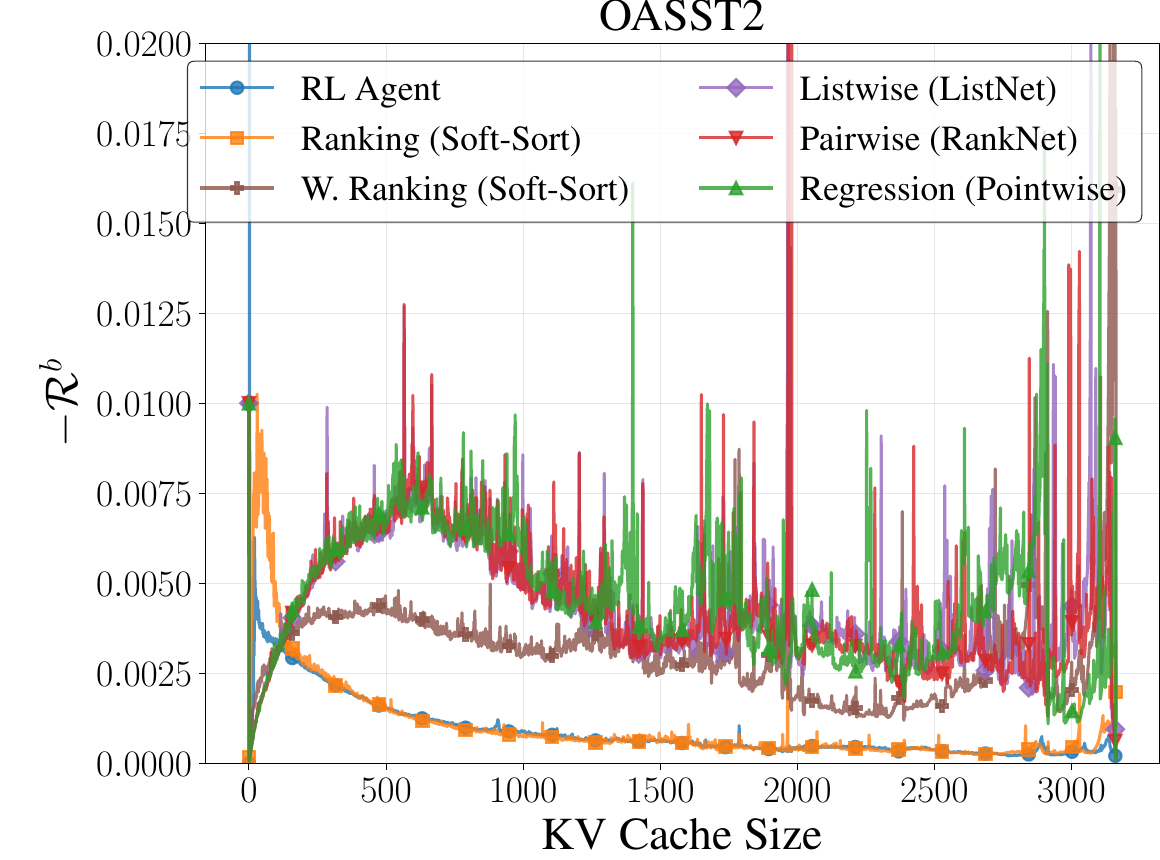}
      \caption{%
         \textbf{RL vs.\ Supervised Learning-to-Rank:} \gls{kvs} (RL) attains the lowest cost across nearly the entire budget range. \emph{Magnitude-sensitive} losses devote capacity to the few attention sinks that dominate the loss; the \emph{magnitude-invariant} Soft-Sort surrogate matches RL in the bulk of the curve, yet exhibits a pronounced spike at the left, because a rank-only loss penalizes every swap equally even if the top tokens dominate  the cost at tight budgets.}
      \label{fig:ablation_rl_vs_sl}
   \end{subfigure}
   \vspace{-1mm}
   \caption{Cost per-budget ($-\mathcal{R}^b$) on the OASST2 test set for a representative head (layer 10, head 0).}
   \vspace{-.1cm}
\end{figure*}

\subsection{Ablations}
To validate the core design choices of \gls{kvs}, we conduct two key ablation studies. First, we analyze our proposed reward function, as defined in \Cref{eq:rl_reward}. Second, we study our use of RL instead of supervised surrogates of sorting.

\textbf{Is the reward function $\mathcal{R}$ effective?}
\looseness=-1To confirm the reward $\mathcal{R}$ enables effective learning of eviction policies, we measure the negative per-budget reward, $-\mathcal{R}^b$, on unseen test data. This value is the contribution of $\sigma_b$ to the normalized AUC $\mathcal{C}(\sigma)/\mathcal{C}(\sigma^\star)$, which our agent is trained to minimize across all budgets simultaneously.
The results in \Cref{fig:ablation_reward} show a clear separation between methods: while attention-aware methods that use query information at inference time (dashed lines) form a distinct, low-loss cluster, heuristics without attention scores form a high-loss cluster. This is rather expected as the target, future attention scores, are highly related to past attention. Crucially, \gls{kvs} performs within this top-tier group, achieving a loss comparable to methods like \emph{TOVA} and \emph{SnapKV} without using their privileged information. This demonstrates that our offline RL training successfully distills the principles of attention-based ranking into an efficient, static policy.
Furthermore, the analysis reveals the importance of policy specialization across heads. A fixed heuristic like \emph{StreamingLLM} can be effective for certain heads (see layer 22, head 0 in \Cref{appendix:fig:oastt2_all_rewards}) but detrimental for others (see layer 19, head 0 in \Cref{appendix:fig:oastt2_rewards_headsvariance}).
Even  \emph{KeyDiff}, a strong attention-free heuristic, exhibits hard failure modes (see layer 2, head 0 in \Cref{appendix:fig:oastt2_all_rewards}).
This result, together with a comprehensive sweeps across all layers and heads in \Cref{appendix:fig:oastt2_rewards_headsvariance,appendix:fig:oastt2_all_rewards},  confirms that \gls{kvs} learns diverse patterns that cannot be captured with a single heuristic.

\textbf{Is RL necessary?}\label{appendix:ablation:rl}
\looseness=-1Since our global reward reduces to a rank-weighted sum of per-token importances (\Cref{eq:rl_reward}), a natural question is whether a supervised learning-to-rank objective could replace RL. We therefore compare \gls{kvs} against a suite of supervised baselines trained with the same network, data, and input features: (i) \emph{Pointwise Regression} on the per-token importance $u_i=\sum_{j=n+1}^{n+f} A(x_i,x_j)$; (ii) \emph{Pairwise RankNet}~\citep{burges2005learning}; (iii) \emph{Listwise ListNet}~\citep{cao2007learning}; (iv) a \emph{Soft-Sort} surrogate that minimizes a length-normalized MSE between predicted and ground-truth ranks through a differentiable sorter~\citep{blondel2020fast}; and (v) a \emph{Weighted Soft-Sort} variant that rescales per-token rank errors by the ground-truth importance. We use sane defaults and normalizations for all five.

\looseness=-1
\Cref{fig:ablation_rl_vs_sl} shows a clean tradeoff driven by the heavy-tailed (Zipfian) distribution of attention. \emph{Magnitude-sensitive} losses (Regression, Pairwise, Listwise, and Weighted Soft-Sort) devote capacity to the few attention sinks that dominate the loss: they are competitive at the smallest budgets, where retaining sinks is sufficient, but their cost stays well above RL throughout the rest of the curve because they have not learned the ordering of the non-sink tokens. The \emph{magnitude-invariant} Soft-Sort surrogate exhibits the opposite failure mode: it tracks RL closely across the bulk of the curve but spikes sharply at tight budgets (the left of \Cref{fig:ablation_rl_vs_sl})---exactly where the most important tokens live---because a rank-only loss penalizes every swap equally and therefore cannot distinguish among the top-ranked tokens whose ordering drives the cost when the budget is tight. Weighted Soft-Sort, which might appear to combine the strengths of both, collapses back to the magnitude-sensitive regime: re-introducing magnitude as a per-token weight re-amplifies the sinks.

A rank-dependent rescaling that bridges the two failure modes almost certainly exists in principle, but its shape depends on the per-head and per-input importance distribution, which varies widely across heads (\Cref{appendix:supervised_overfitting}) and would need to be re-tuned for every head and cache. Optimizing the eviction-cost AUC directly via policy gradients sidesteps this entirely: each swap is penalized in proportion to its contribution to the budget AUC, so the learning signal is cost-weighted and distribution-agnostic by construction.

\section{Conclusions}
We introduced \gls{kvs}, framing KV cache eviction as a learnable sorting task. By training lightweight, per-head policies to rank entries by future utility via an inference-free RL objective, we enable adaptive memory management without architectural changes. Our evaluations span short and long-context benchmarks, including RULER at 128K tokens and LongBench passage retrieval, and show that \gls{kvs} consistently outperforms heuristic baselines while preserving the underlying model's capabilities, validating that learned sorting policies are more robust than handcrafted importance proxies. More broadly, our findings indicate that effective KV cache compression does not need to rely on either privileged access to inference-time attention or carefully hand-tuned heuristics, but can instead be realized by a compact, head-specific policy learned once and offline.

\looseness=-1
\textbf{Limitations and future work.} \gls{kvs} makes several simplifying design choices that open avenues for future work. We apply a uniform compression budget across heads and layers, so adaptive, non-uniform allocation is a natural next step. Because we train a separate agent per head, cross-head dependencies are not modeled; joint end-to-end optimization of all policies could capture them, at the cost of requiring online \gls{llm} inference during training. Our reward similarly avoids inference, using future attention as a proxy for task-level utility rather than a direct downstream signal; incorporating task feedback into training is a promising extension. Furthermore, \gls{kvs}'s scoring depends only on static token features (keys, values, positions), so scores of already-cached tokens remain valid as the sequence grows and only newly generated tokens need scoring; the approach is therefore architecturally compatible with continuous decoding-time compression. Finally, our current implementation lacks an optimized end-to-end inference pipeline, and we therefore report policy-level wall-clock costs rather than full-system latency.

\section*{Impact Statement}
This paper presents work whose goal is to advance the field of machine learning. There are many potential societal consequences of our work, none of which we feel must be specifically highlighted here.

\bibliographystyle{plainnat}
\bibliography{references}

\newpage
\appendix
\onecolumn

\section{Appendix}

\subsection{Missing Proof of Proposition 1}
\label{appendix:proof}
\begin{proof}
   Since $|S_{b+1}^\star|=b+1$ and $|S_b^\star|=b$ with $S_b^\star \subset S_{b+1}^\star$, the set difference $S_{b+1}^\star \setminus S_b^\star$
   is nonempty. By uniqueness of $S_{b+1}^\star$, this difference must contain exactly one element; otherwise, there would exist two distinct $(b+1)$-subsets strictly between $S_b^\star$ and $S_{b+1}^\star$, contradicting uniqueness. Hence we can define a sequence of distinct elements
   \[
      x_{\sigma_1} \in S_1^\star, \qquad x_{\sigma_{b+1}} \in S_{b+1}^\star \setminus S_b^\star \quad (b=1,\dots,n-1).
   \]
   By construction, for each $b$ we have
   \[
      S_b^\star = \{x_{\sigma_1},\dots,x_{\sigma_b}\}.
   \]

   Now define a total order (ranking) $\pi$ on $[n]$ by setting
   \[
      \pi(x_{\sigma_b})=b \quad (b=1,\dots,n).
   \]
   This is a bijection $\pi:[n]\to[n]$, and its top-$b$ prefix is precisely $\{x_{\sigma_1},\dots,x_{\sigma_b}\}=S_b^\star$. Therefore,
   \[
      S_b^\star = \{\,i \in [n] : \pi(i) \le b \,\} \quad \text{for all } b,
   \]
   as claimed.

   Equivalently, given $\pi$ we may define a scoring function consistent with the order, for instance
   \[
      s(x_{\sigma_k}) = n-k \quad (k=1,\dots,n),
   \]
   which is strictly decreasing in $k$. Then the top-$b$ elements according to $s$ are exactly \mbox{$\{x_{\sigma_1},\dots,x_{\sigma_b}\}=S_b^\star$}.
\end{proof}

\subsection{Implementation details}\label{sec:implementationdetails}
Our \gls{kvs} agents are lightweight 2-layer MLPs with 256 hidden units, trained using the RLOO algorithm \citep{ahmadian2024back} as described in \Cref{sec:methodology}. Following the Grouped-Query Attention (GQA) architecture of \texttt{Qwen2.5-7B-Chat}, we train a separate agent for each of the 4 KV heads across all 28 layers, yielding 112 specialized agents. Each agent contains approximately 650K parameters.

The agents are optimized to maximize the reward signal in \Cref{eq:rl_reward},  which encourages retention of tokens with high future utility across all cache budgets. For training stability, we apply gradient clipping (maximum norm of 5) and normalize advantages by their mean and standard deviation, without entropy regularization. We use AdamW with a learning rate of $5 \times 10^{-5}$, following a cosine schedule with 100-step linear warmup (start factor 0.01) that decays to $1 \times 10^{-6}$. Each agent trains for 4,000 steps on pre-computed activations.

During inference for evaluation purposes, the learned policy ranks all tokens except the first 4 and last 16, which are always retained. We validate the approach by emulating eviction with custom attention masks in FlexAttention on benchmarks that fit in memory, and use a physical tensor compaction of the retained keys and values for the long-context benchmarks (RULER-128K and LongBench passage retrieval).

\subsection{Training Efficiency and Inference Overhead}\label{sec:training_overhead}
The training process for our \gls{kvs} agents is designed to be highly efficient, imposing minimal computational and storage overhead. Agents are trained on a small dataset of pre-computed activation traces; for our experiments, we used approximately 6,000 samples from the \textsc{RULER-4k} dataset and 4,500 from the \textsc{OASST2} dataset.
Data generation represents the primary one-time cost of this pipeline, requiring a single forward pass over these training samples to collect and store on disk the Query, Key, and Value  embeddings for the entire sequence for each KV cache. This step incurs a computational cost approximately equivalent to standard inference on the dataset. For example, trace collection on the OASST2 training split (4{,}649 samples, 28 layers $\times$ 4 KV heads) completes in under two hours on 7 nodes with 8 GPUs each; the workload is embarrassingly parallel, so each node processes data independently and the full dataset never resides on a single machine. The resulting traces occupy $\sim$1.2\,TB on cluster storage, partitioned so that each per-(layer,\,head) agent only ever loads its own $\sim$11\,GB slice.

Each per-head agent is a small MLP with approximately 650K parameters, resulting in a checkpoint size of only 2.6MB.
The agent training completes in less than 30 minutes on a single node of 8 NVIDIA H100 GPUs. We note that this training time is achieved without extensive hyperparameter tuning or code optimizations for speed, suggesting that further significant speed-ups are possible. These factors highlight the low computational footprint of our offline training framework.

\textbf{FLOPs Estimation}. To assess computational cost independent of hardware optimization (e.g., kernel fusion), we quantify the FLOPs overhead per token:
\begin{itemize}
   \item Autoregressive Generation (Throughput). Zero Overhead. In the prefill-then-compress setting, the KV cache is compressed only once after prefill. This means \gls{kvs} introduces no additional FLOPs during the decoding phase. Consequently, its throughput is identical to any other eviction method at the same cache budget. \gls{kvs}'s contribution is achieving superior accuracy for that given level of throughput.
   \item Prefill Latency. Cost: $14.00 \text{ GFLOPs} \to  14.15 \text{ GFLOPs}$. The overhead is strictly confined to the prefill stage. Based on the standard approximation of $2\times N_{params}$ FLOPs per token:
         \begin{itemize}
            \item Base model (Qwen-7B): $14.00$ GFLOPs/token.
            \item \gls{kvs} Overhead: Our 112 agents ($0.65$M params each) add $72.8$M parameters, contributing an additional $0.15$ GFLOPs/token.
            \item Total: The prefill cost becomes $14.15$ GFLOPs/token, a marginal increase of $~1\%$.
         \end{itemize}
\end{itemize}

\textbf{Wall-clock Benchmarks}. In \Cref{fig:benchmark}, we perform empirical measurements validating these theoretical estimates. The results confirm that \gls{kvs} compression adds negligible latency, orders of magnitude lower than the LLM prefill time. Furthermore, as noted in the figure, the independence of \gls{kvs} operations per cache allows for easy parallelization, ensuring the practical overhead remains minimal.

\begin{figure}[ht]
   \centering
   \includegraphics[width=.75\textwidth]{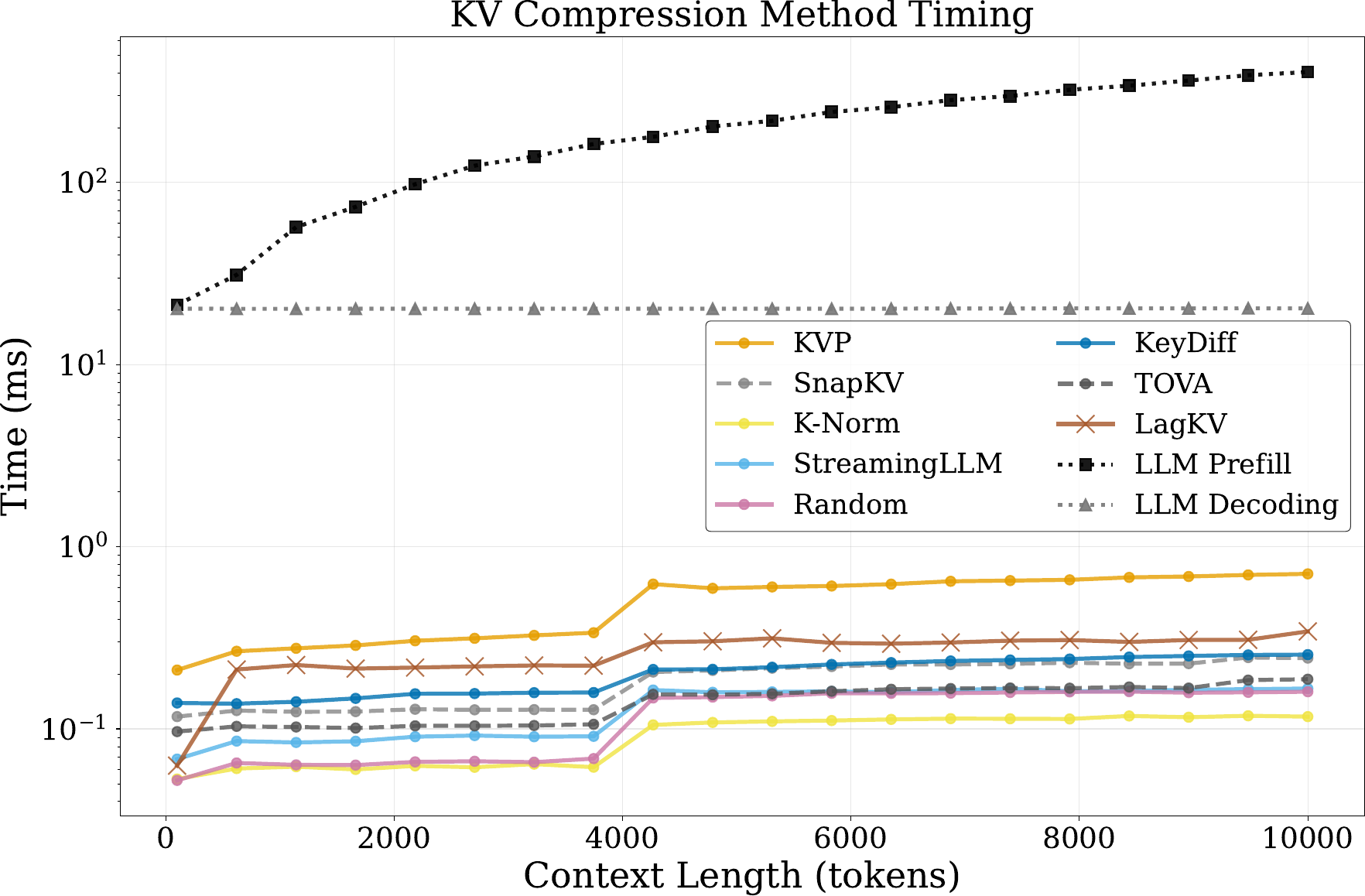}
   \caption{Wall-clock latency of eviction decisions (logarithmic y-scale). We isolate the computational overhead of each strategy by measuring only the time required to compute importance scores and determine eviction indices for a single KV head. We exclude the compressed LLM inference time because, once eviction indices are determined, LLM throughput depends only on the cache size, not the selection policy. We compare compression times against Qwen2.5-7B-Instruct baselines: LLM Prefill (time to process context) and LLM Decoding (time to predict a single subsequent token). \gls{kvs} adds negligible overhead: at 10k context length, a single-layer compression takes 0.71ms vs. 404ms for the full-model prefill (a 570$\times$ difference). Since \gls{kvs} operates independently on each KV cache and relies only on keys/values (not the current query), it allows for easy parallelization and flexible scheduling (e.g., post-prefill) to further amortize cost.
      The attention scores re-computation cost for TOVA and SnapKV is ignored in this Figure.
      Benchmarks report the faster of eager or compiled execution, averaged over 30 runs using bfloat16/TF32 on an NVIDIA B200 GPU.
   }
   \label{fig:benchmark}
\end{figure}

\subsection{Comparison with the learned baseline JudgeQ}\label{appendix:judgeq_comparison}
We additionally report a comparison against \emph{JudgeQ}~\citep{liu2026judgeq}, a concurrent learned eviction method. JudgeQ resembles \gls{kvs} in that it treats future generation attention as the supervision signal, but differs in two important ways: (i) it learns soft tokens \emph{inside} the \gls{llm} via supervised MSE regression and therefore requires full \gls{llm} forward passes during training, while \gls{kvs} trains a head-local scorer fully offline on pre-collected $(K,V)$ traces; and (ii) at inference time JudgeQ couples its scoring to the \gls{llm}'s prefill, while \gls{kvs} runs as a drop-in module on cached $(K,V,\text{position})$ features. We re-implemented JudgeQ from the paper (no code was publicly available) and trained it on the same data as \gls{kvs} on \texttt{Qwen2.5-7B-Chat}.

\Cref{appendix:fig:judgeq_ruler128k,appendix:fig:judgeq_longbench_en,appendix:fig:judgeq_longbench_zh} reports the long-context comparison: on RULER-128K (top left) \gls{kvs} maintains a clear lead at every cache budget. On LongBench passage retrieval, both English (top right) and Chinese (bottom left), \gls{kvs} again outperforms JudgeQ across the budget range. \Cref{appendix:fig:judgeq_combined} reports the short-context comparison on BoolQ; \gls{kvs} is on par with or above JudgeQ across all budgets despite being trained without any \gls{llm} forward pass.

\begin{figure}[h!]
   \centering
   \begin{subfigure}[t]{0.45\linewidth}
      \centering
      \includegraphics[width=\linewidth]{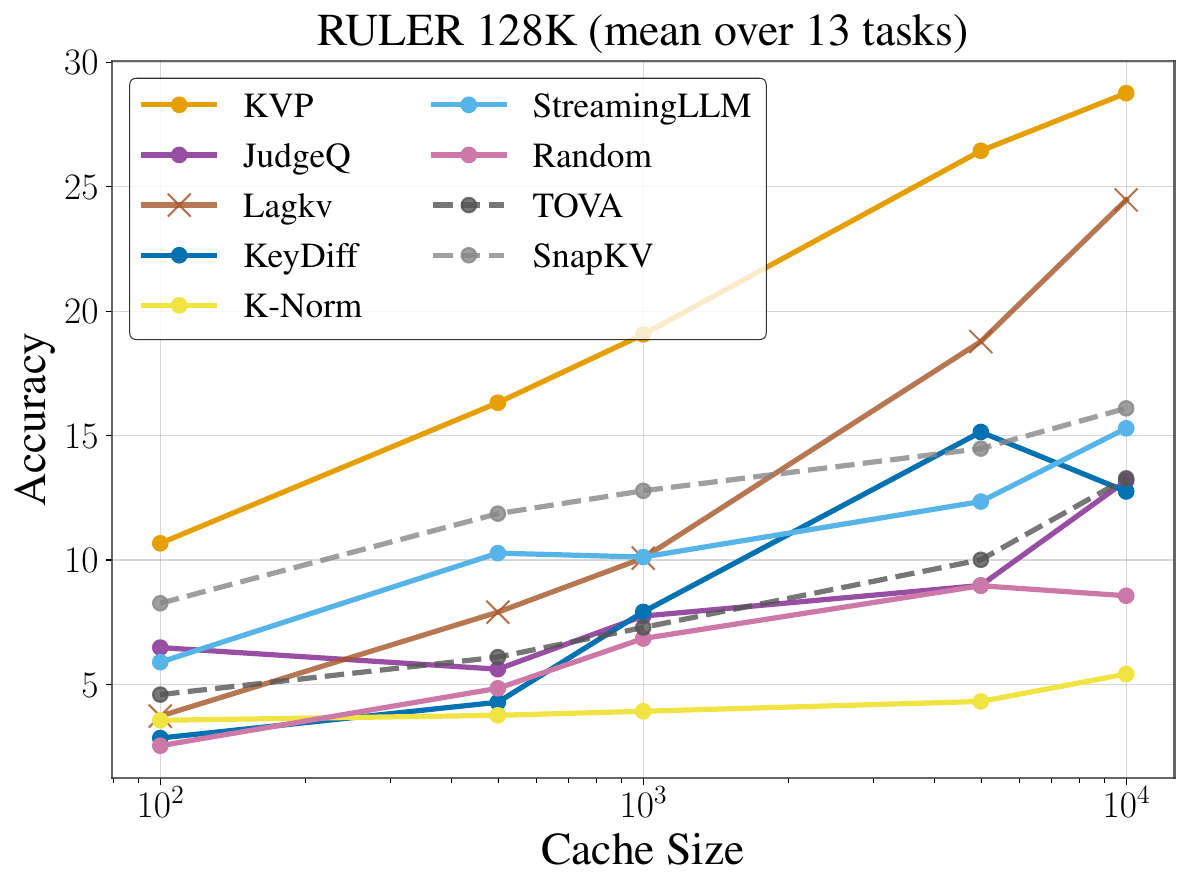}
      \caption{RULER-128K mean accuracy.}
      \label{appendix:fig:judgeq_ruler128k}
   \end{subfigure}\hfill
   \begin{subfigure}[t]{0.49\linewidth}
      \centering
      \includegraphics[width=\linewidth]{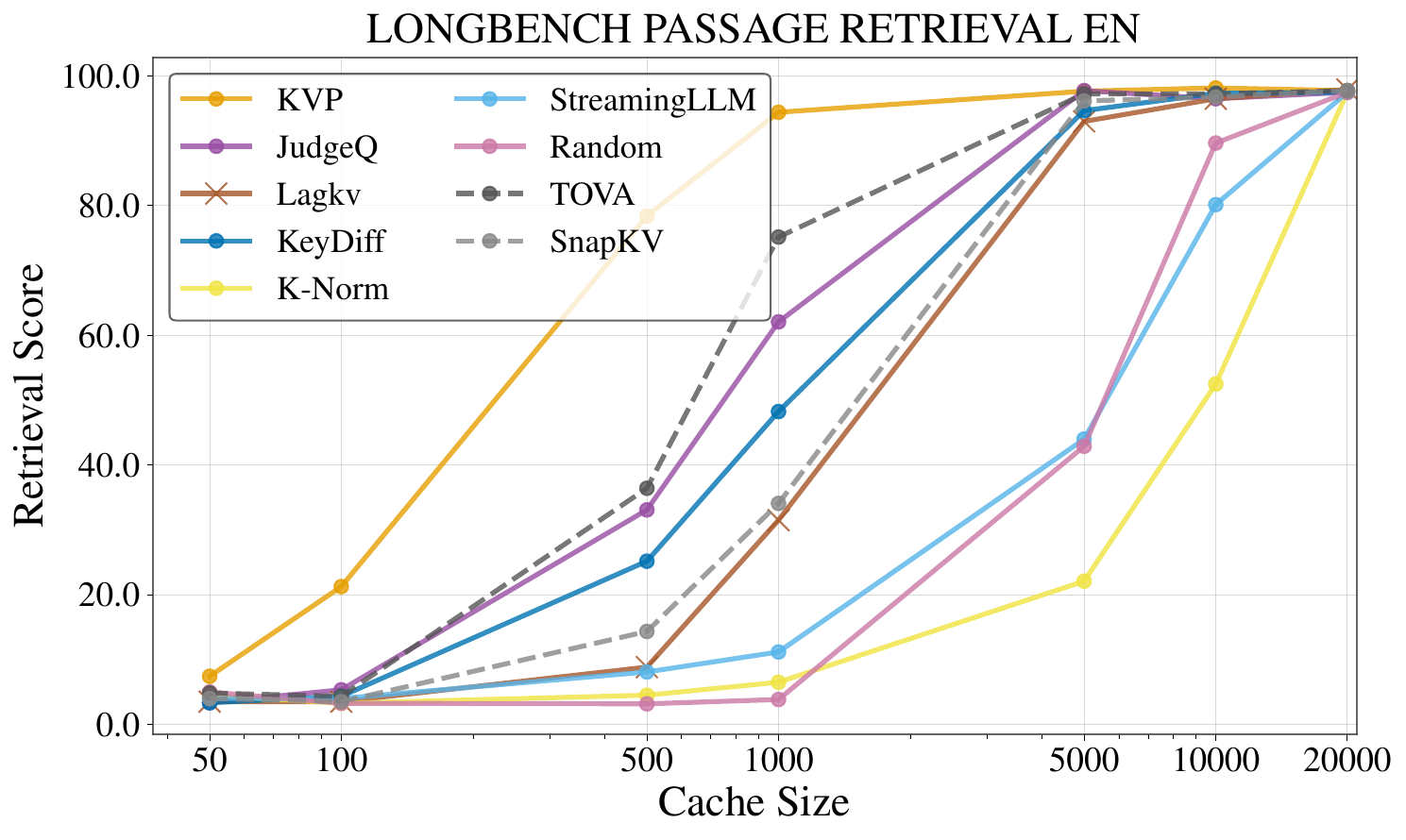}
      \caption{LongBench passage retrieval (English).}
      \label{appendix:fig:judgeq_longbench_en}
   \end{subfigure}

   \vspace{1em}

   \begin{subfigure}[t]{0.49\linewidth}
      \centering
      \includegraphics[width=\linewidth]{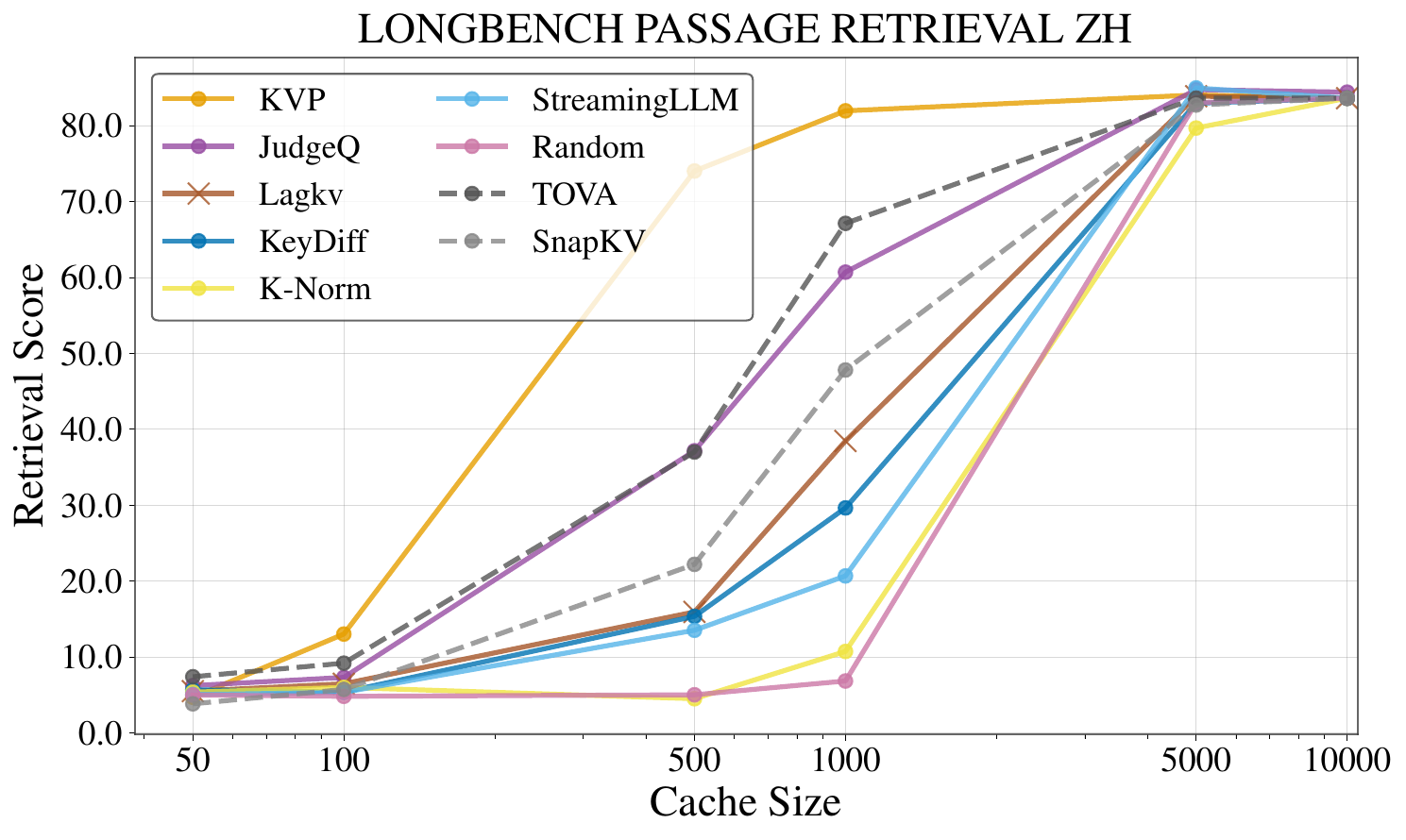}
      \caption{LongBench passage retrieval (Chinese).}
      \label{appendix:fig:judgeq_longbench_zh}
   \end{subfigure}\hfill
   \begin{subfigure}[t]{0.49\linewidth}
      \centering
      \includegraphics[width=\linewidth]{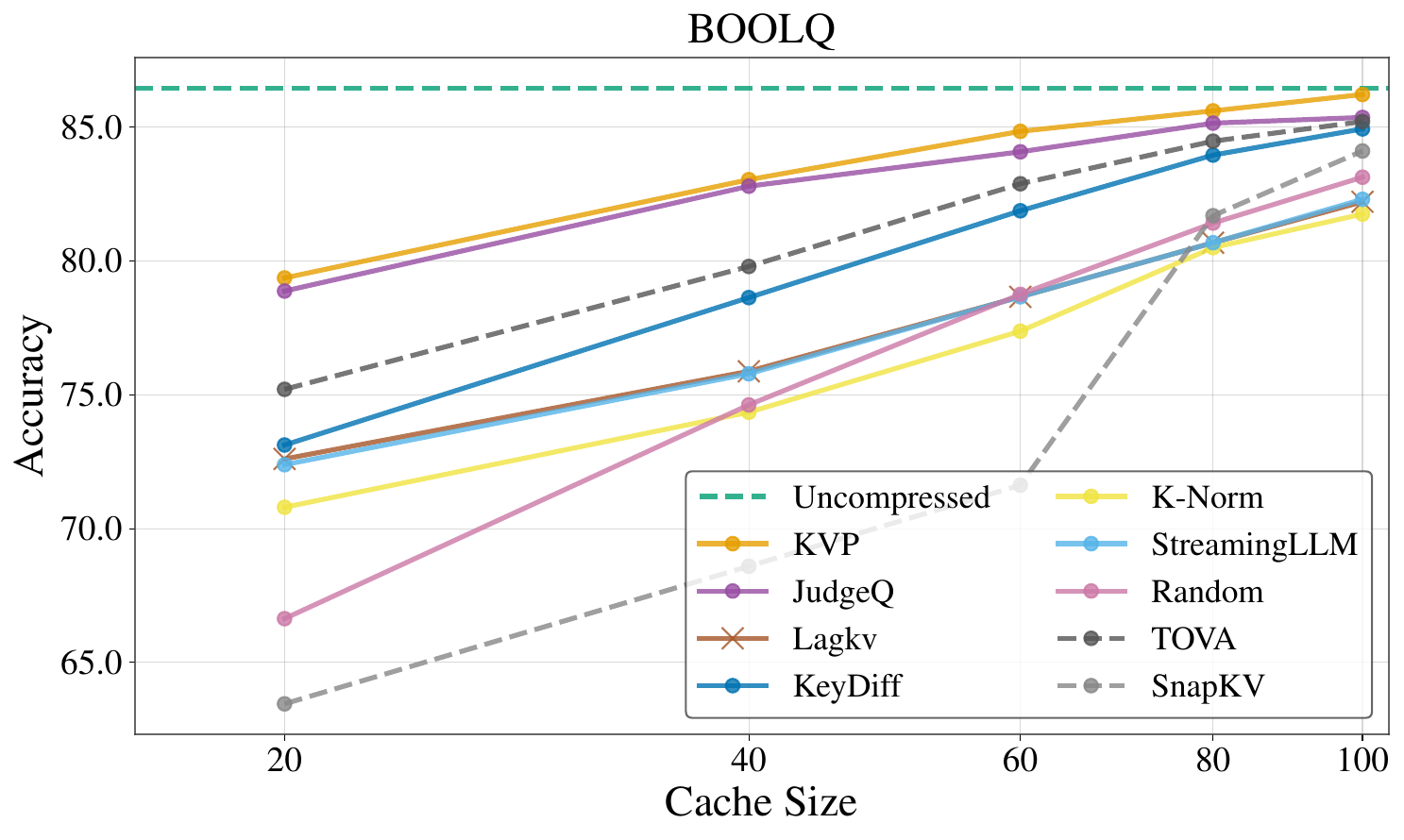}
      \caption{BoolQ short-context comparison.}
      \label{appendix:fig:judgeq_boolq}
   \end{subfigure}

   \caption{Comparison between \gls{kvs} and the learned baseline JudgeQ. \textbf{(a)--(c)} Long-context comparison: \gls{kvs} outperforms JudgeQ at every cache budget across all three long-context benchmarks. \textbf{(d)} Short-context comparison on BoolQ: \gls{kvs} is on par with or above JudgeQ at every cache budget despite training without any \gls{llm} forward pass.}
   \label{appendix:fig:judgeq_combined}
\end{figure}

\subsection{Why supervised ranking fails: per-head importance heterogeneity}\label{appendix:supervised_overfitting}
The main-text ablation (\Cref{fig:ablation_rl_vs_sl}) showed that magnitude-sensitive supervised losses devote capacity to the few attention sinks that dominate the loss. Here we show why that failure is not fixable by tuning: the shape of the per-head importance distribution varies so strongly across heads that no single supervised formulation can be tuned to match all of them.

\Cref{appendix:fig:supervised_overfitting_dist} plots the oracle importance distribution (log scale) for eight representative heads on the same input. The shapes span the full range from sharp-Zipfian (a few tokens carry essentially all the mass) to near-flat (mass spread across tens of tokens). A fixed magnitude-sensitive loss is calibrated for one of these regimes and fails on the others.

\Cref{appendix:fig:supervised_overfitting_errors} zooms in on a single head (L10.H0). Each bar is a token, positioned at its oracle rank (x) and its oracle attention on a log y-axis, and colored by its \emph{normalized rank error} (green = correctly placed, red = badly mis-ranked). The top panel shows \gls{kvs} (RL); the bottom panel shows pointwise regression on oracle importance. Regression's red/orange bars concentrate on the \emph{right tail} --- the many unimportant tokens. This is the classical overfitting signature of magnitude-sensitive losses on heavy-tailed targets: most of the loss is paid by the left-tail outliers, so the optimizer leaves the right-tail ordering approximately random.

\begin{figure}[ht]
   \centering
   \begin{subfigure}[t]{\linewidth}
      \centering
      \includegraphics[width=1\linewidth]{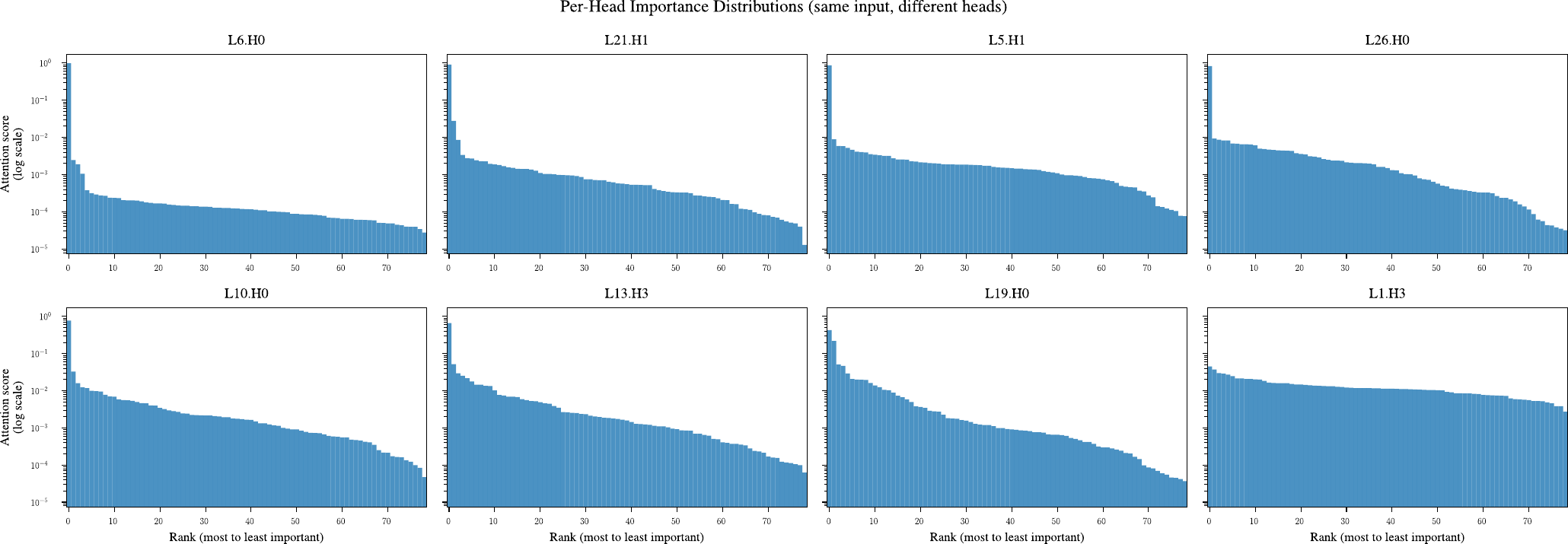}
      \caption{Per-head oracle importance distributions on the same input, for eight representative heads. The distribution shape ranges from sharp-Zipfian (e.g.\ L6.H0, L10.H0) to near-flat (e.g.\ L1.H3).}
      \label{appendix:fig:supervised_overfitting_dist}
   \end{subfigure}

   \vspace{2.5em}

   \begin{subfigure}[t]{\linewidth}
      \centering
      \includegraphics[width=1\linewidth]{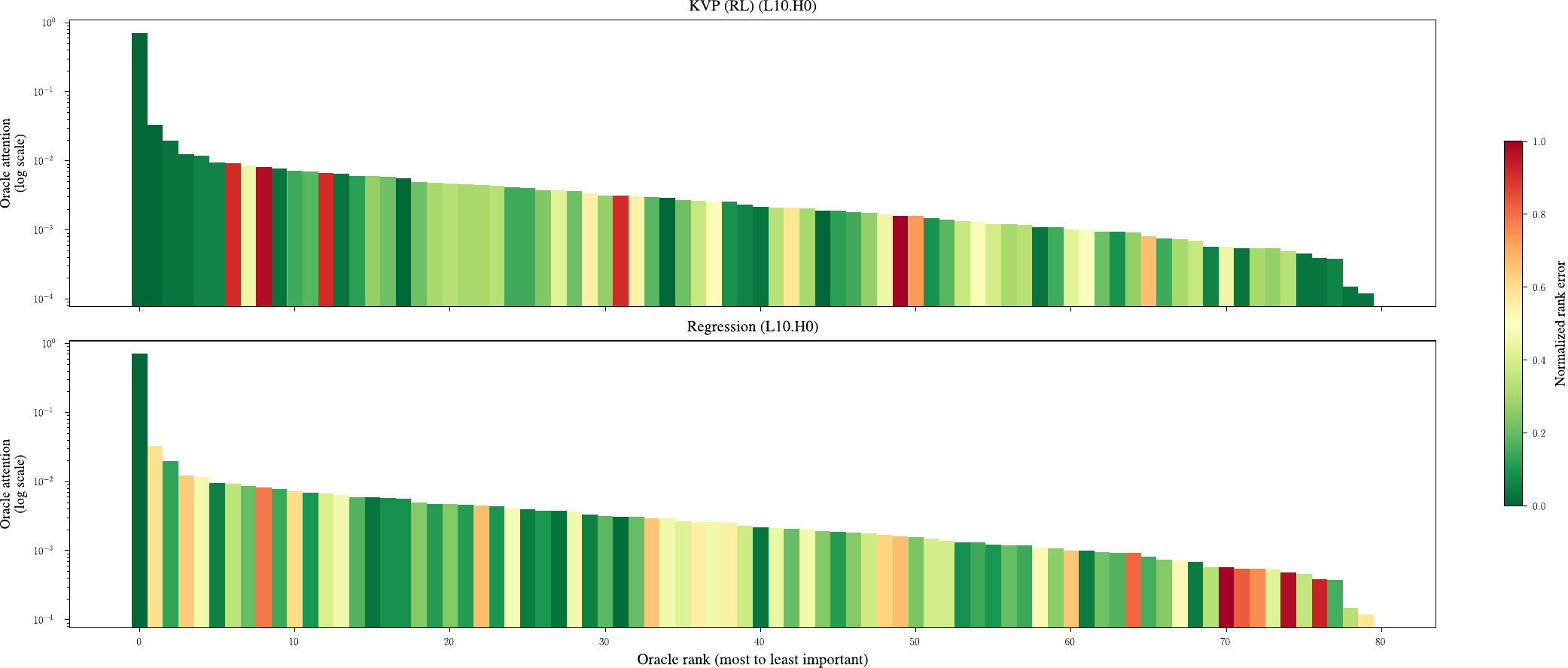}
      \caption{For head L10.H0: oracle attention (log) vs.\ oracle rank, bars colored by normalized rank error (green = correct, red = incorrect). Top: \gls{kvs} (RL). Bottom: supervised Regression, whose errors concentrate on the right tail.}
      \label{appendix:fig:supervised_overfitting_errors}
   \end{subfigure}
   \caption{Why supervised ranking under-performs on heavy-tailed attention targets. (a) Per-head oracle importance distributions vary widely, so a single magnitude-sensitive loss cannot be tuned to match all heads. (b) On a representative head, pointwise Regression mis-ranks the right tail; \gls{kvs} (RL) does not exhibit this bias.}
   \label{appendix:fig:supervised_overfitting}
\end{figure}

\subsection{Do the per-head agents learn distinct rankings?}\label{appendix:cross_head_variance}
A natural question for the per-head architecture is whether the 112 agents converge to similar rankings, in which case a single shared policy would suffice. We show here that the agents in fact learn markedly different rankings for the same input, supporting the per-head design.

We report three complementary views, computed on a qualitative sample from the OASST2 test set using all 112 heads of Qwen2.5-7B (\Cref{appendix:fig:cross_head_variance_ranks,appendix:fig:cross_head_variance_stats,appendix:fig:cross_head_variance_spearman}). \Cref{appendix:fig:cross_head_variance_ranks} visualizes the full rank matrix (heads $\times$ predicted rank) colored by original token position: a smooth color gradient would indicate a near-identity ranking replicated across heads, whereas the observed disrupted gradients reveal head-specific reordering, with heads converging only on a small early-position  band (the attention-sink pattern common to this architecture, recovered without being built in). \Cref{appendix:fig:cross_head_variance_stats} quantifies this per position: the top panel plots the mean rank with a $\pm 1\sigma$ band across 112 heads, and the bottom panel plots the rank variance per position, which is consistently large (150--500). The \textbf{mean rank variance per token position is 294.8}, confirming substantial disagreement. \Cref{appendix:fig:cross_head_variance_spearman} shows the pairwise Spearman rank correlation between heads: heads in the mid-layers (roughly L8--L15) are noticeably more correlated with each other, consistent with their role in information aggregation, while early and late layers are more independent.

We interpret this variance as by design: different heads serve different functional roles (attention sinks, local syntactic patterns, global semantic retrieval), and the per-head agents specialize accordingly. Explicitly modeling cross-head dependencies, e.g.\ through a shared backbone with head-specific heads or through joint training, is left as future work (see Limitations).

\begin{figure}[H]
   \centering
   \includegraphics[width=1\linewidth]{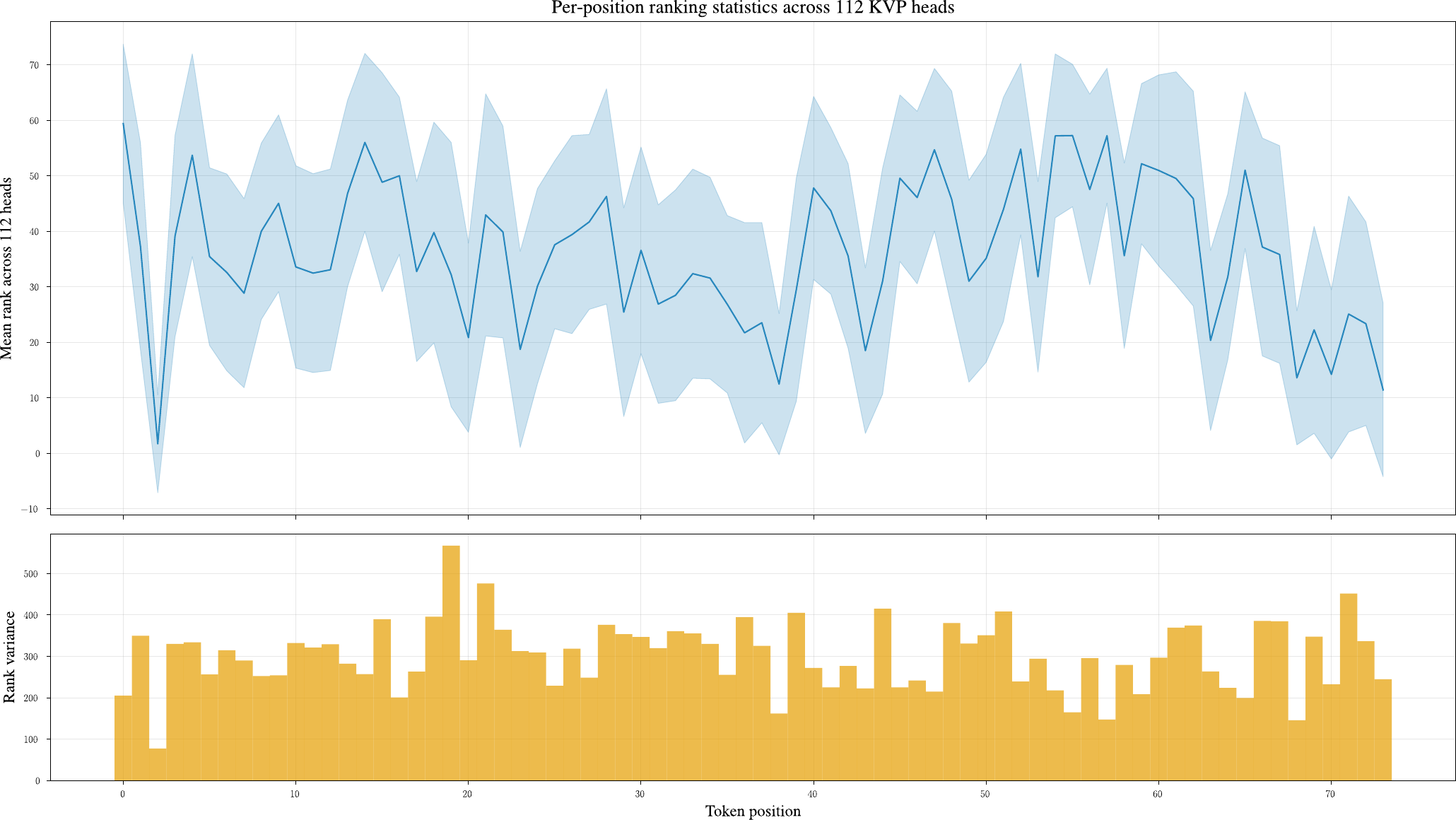}
   \caption{Top: mean rank $\pm 1\sigma$ across heads versus token position. Bottom: rank variance per position (mean 294.8 across all positions and heads).}
   \label{appendix:fig:cross_head_variance_stats}
\end{figure}

\begin{figure}[ht]
   \centering
   \includegraphics[width=1\linewidth]{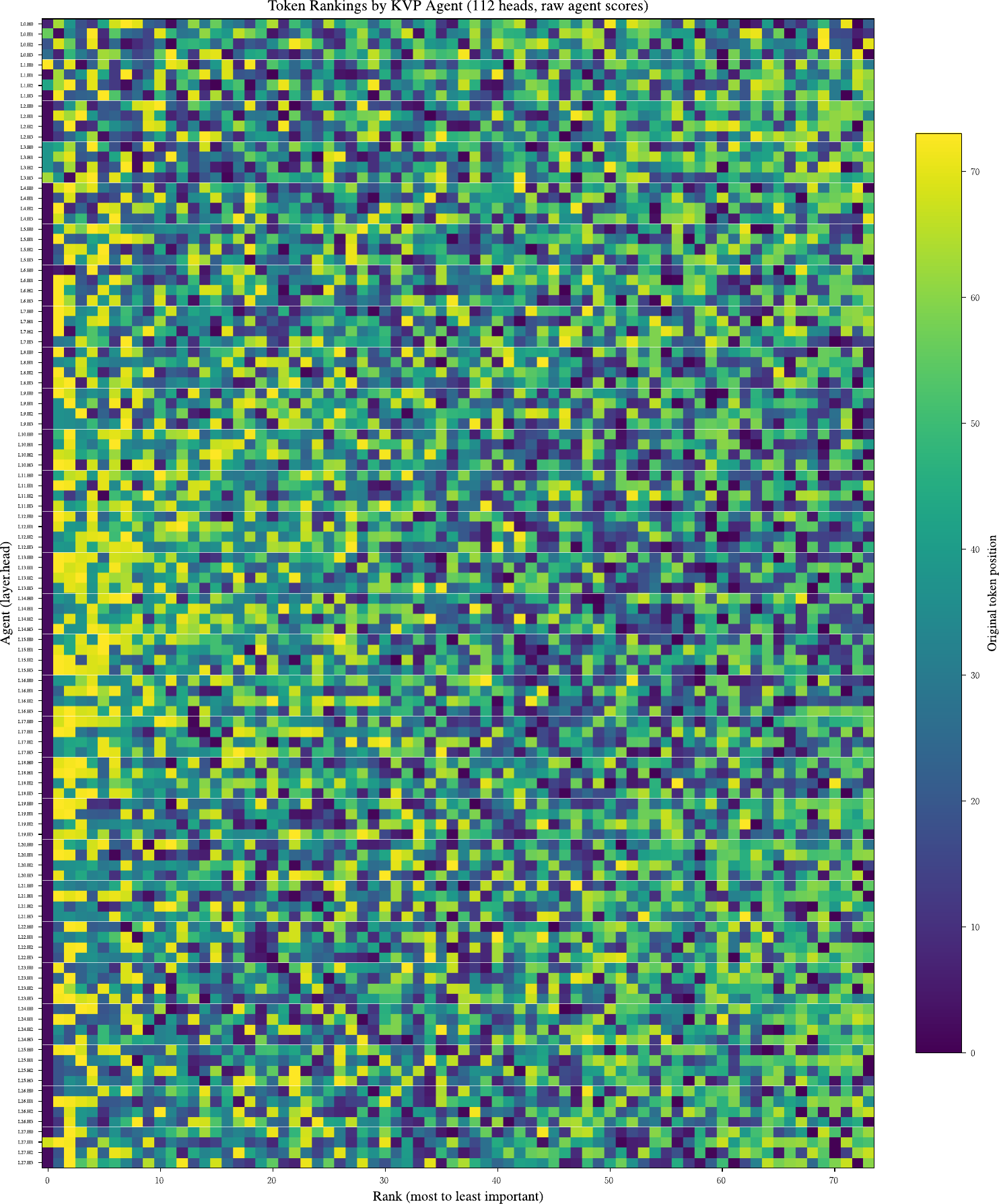}
   \caption{Token rankings by each of the 112 \gls{kvs} agents (rows, ordered L0.H0 $\to$ L27.H3), colored by the original token position. Blue stripe in the leftmost columns reflects the attention-sink pattern recovered by many heads (the earliest token positions are colored blue under this colormap and are consistently ranked first); elsewhere rankings disagree strongly.}
   \label{appendix:fig:cross_head_variance_ranks}
\end{figure}

\begin{figure}[ht]
   \centering
   \includegraphics[width=1\linewidth]{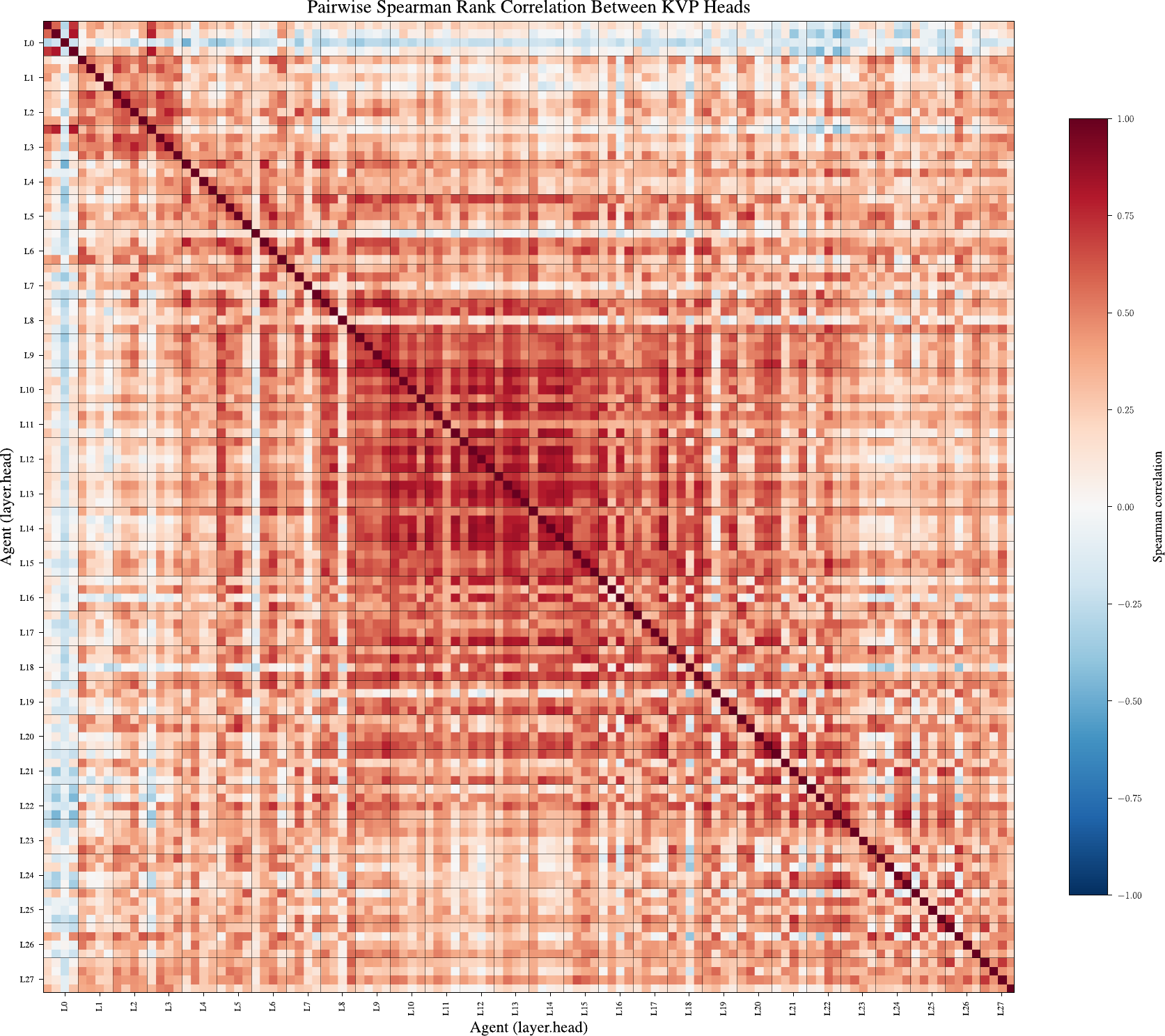}
   \caption{Pairwise Spearman rank correlation between \gls{kvs} heads. Darker = higher correlation. Mid-layers (L8--L15) correlate more strongly; early and late layers are more independent.}
   \label{appendix:fig:cross_head_variance_spearman}
\end{figure}

\clearpage
\subsection{Additional results}
\label{A:additional_results}

\begin{figure}[ht]
   \centering
   \includegraphics[width=.32\textwidth]{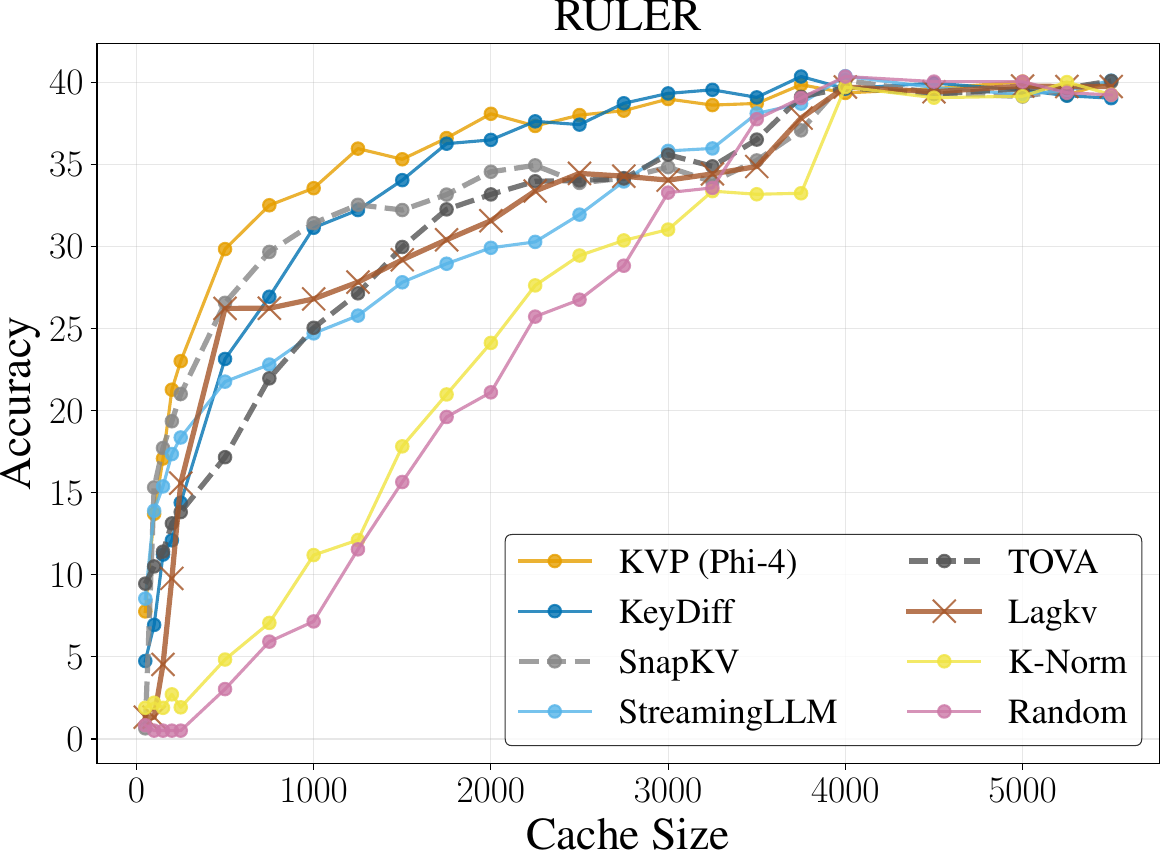}
   \hfill
   \includegraphics[width=.32\textwidth]{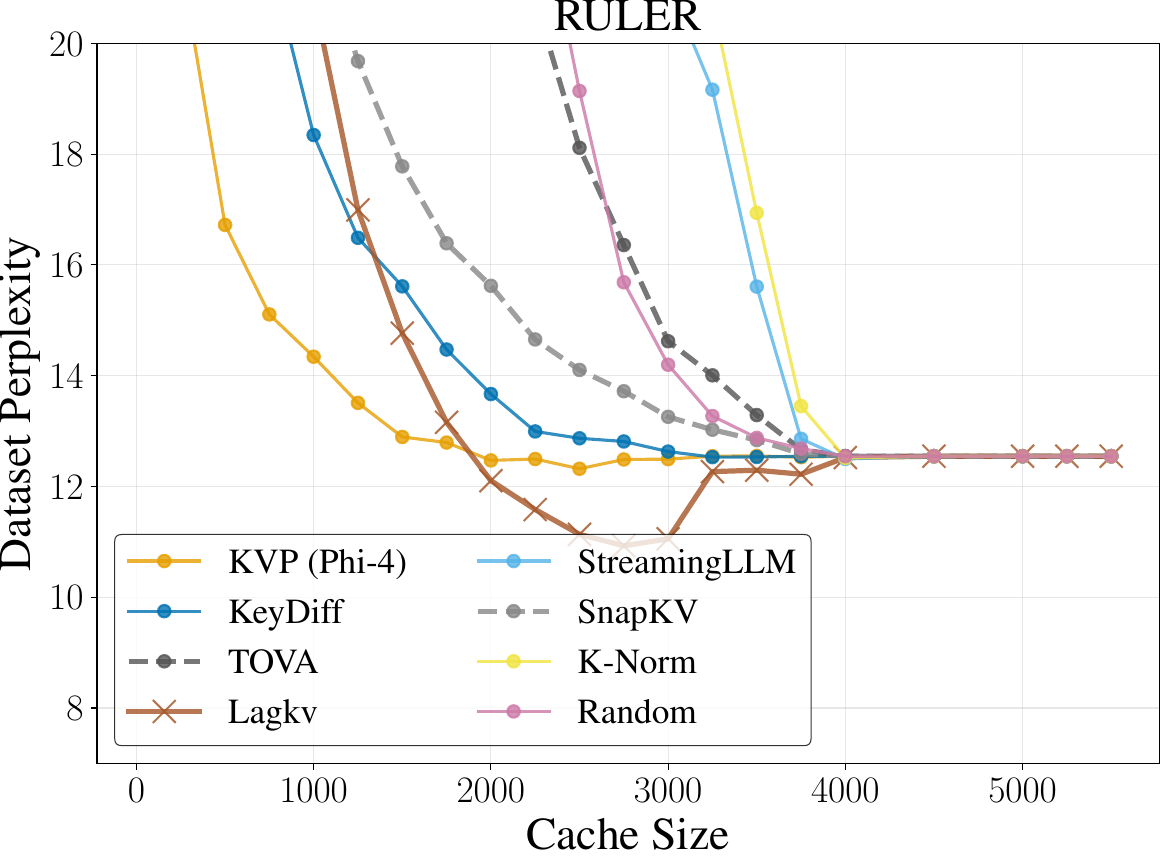}
   \hfill
   \includegraphics[width=.32\textwidth]{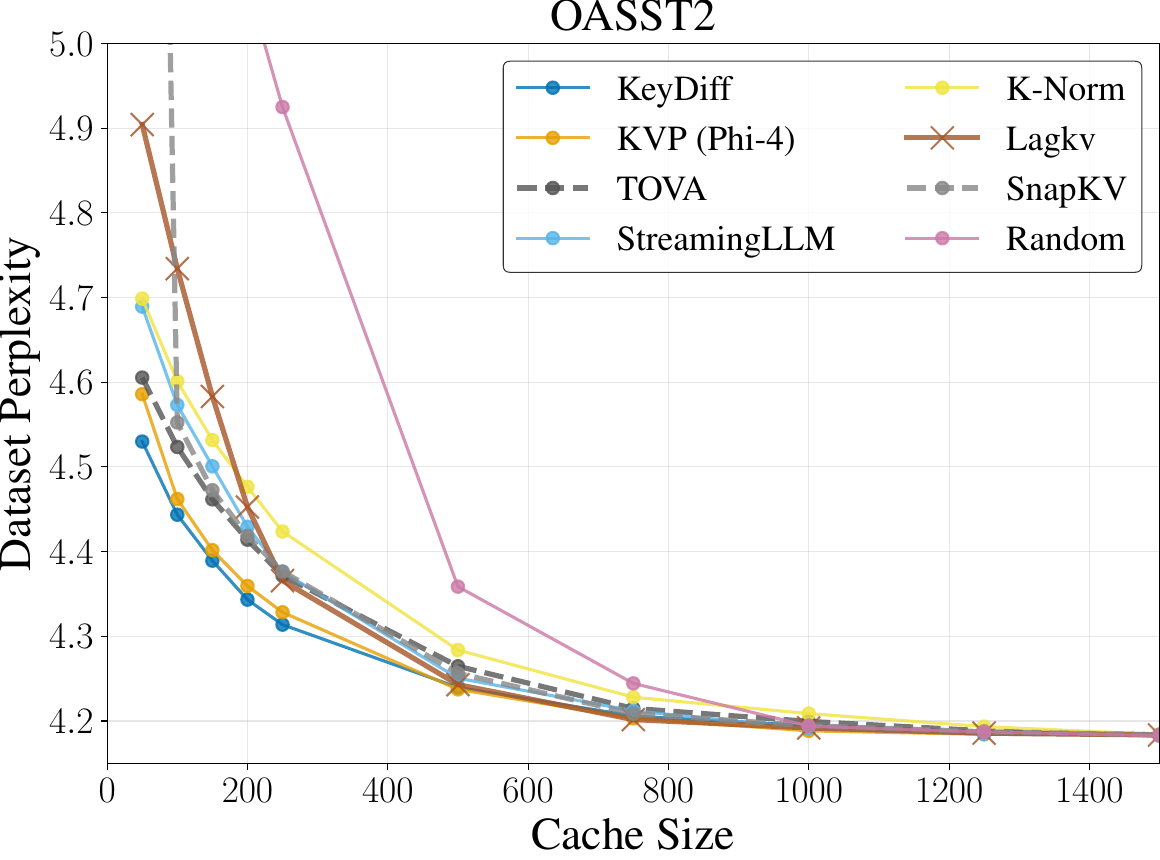}
     \caption{
      Evaluation of our KVP eviction strategy on the \texttt{Phi-4 14B} model. The results demonstrate that our learning framework generalizes effectively to a different model architecture. While KVP's performance remains consistently strong, the relative performance of heuristic baselines changes significantly between Qwen 2.5 and Phi-4. This provides strong evidence that their effectiveness is model-dependent, unlike our learned approach.
      \textbf{(left)} KVP achieves the high accuracy on the RULER benchmark, successfully adapting its learned policy where attention-based heuristics (SnapKV, TOVA) fail due to the task's nature.
      \textbf{(center)} Perplexity on the RULER test set, where KVP consistently outperforms baselines.
      \textbf{(right)} Perplexity on the OASST2-4k test set, confirming KVP's good performance.
   }\label{fig:phi}
\end{figure}

\begin{figure}[ht]
   \centering
   \vspace{2.5ex}
   \begin{overpic}[width=.32\textwidth]{figures/iclr/teaser/qualitative/rome_text_rome_continuation/layer_012/head_00/ORACLE.pdf}
      \put(50, 30){\makebox[0pt][c]{\small Future Attention Importance}}
   \end{overpic}\hfill
   \begin{overpic}[width=.32\textwidth]{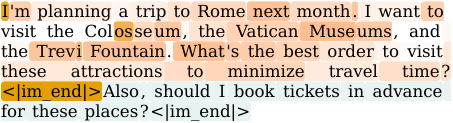}
      \put(50, 30){\makebox[0pt][c]{\small {KeyDiff}}}
   \end{overpic}\hfill
   \begin{overpic}[width=.32\textwidth]{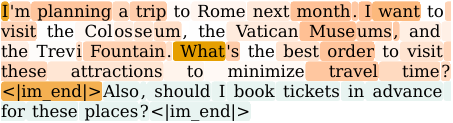}
      \put(50,30){\makebox[0pt][c]{\small {KNorm}}}
   \end{overpic} \\[5ex]
   \begin{overpic}[width=.32\textwidth]{figures/iclr/teaser/qualitative/rome_text_rome_continuation/layer_012/head_00/ruler_new_moreram_randlenFalse_20250922_02_nonbest_stream.pdf}
      \put(50, 30){\makebox[0pt][c]{\small \gls{kvs}}}
   \end{overpic}\hfill
   \begin{overpic}[width=.32\textwidth]{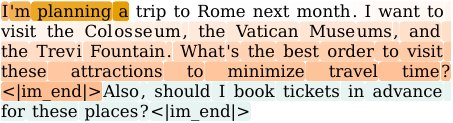}
      \put(50, 30){\makebox[0pt][c]{\small {LagKV}}}
   \end{overpic}\hfill
   \begin{overpic}[width=.32\textwidth]{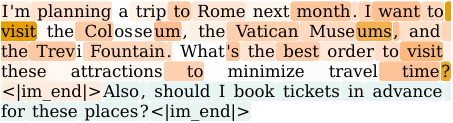}
      \put(50,30){\makebox[0pt][c]{{\small Random}}}
   \end{overpic} \\[5ex]
   \begin{overpic}[width=.32\textwidth]{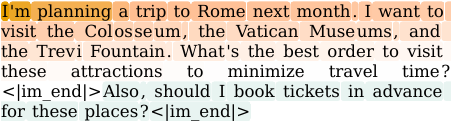}
      \put(50, 30){\makebox[0pt][c]{\small SnapKV}}
   \end{overpic}\hfill
   \begin{overpic}[width=.32\textwidth]{figures/iclr/teaser/qualitative/rome_text_rome_continuation/layer_012/head_00/streamingllm.pdf}
      \put(50, 30){\makebox[0pt][c]{\small {StreamingLLM}}}
   \end{overpic}\hfill
   \begin{overpic}[width=.32\textwidth]{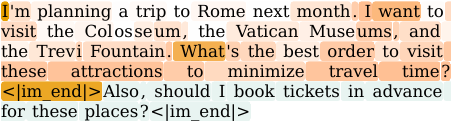}
      \put(50,30){\makebox[0pt][c]{\small {TOVA}}}
   \end{overpic}
   \caption{All strategies compared on the qualitative example shown in \Cref{fig:teaser}. The attention scores considered are from layer 12 head 0. Another qualitative example in \Cref{fig:teaser_ext_2} and a failure case in \Cref{fig:teaser_failure}}
   \label{fig:teaser_ext}
\end{figure}

\begin{figure}[ht]
   \centering
   \vspace{2.5ex}
   \begin{overpic}[width=.32\textwidth]{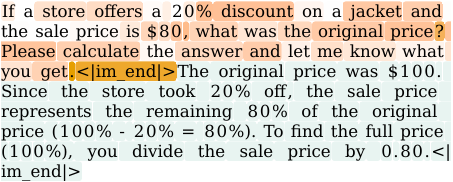}
      \put(50, 43){\makebox[0pt][c]{\small Future Attention Importance}}
   \end{overpic}\hfill
   \begin{overpic}[width=.32\textwidth]{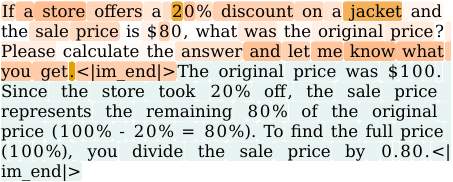}
      \put(50, 43){\makebox[0pt][c]{\small {KeyDiff}}}
   \end{overpic}\hfill
   \begin{overpic}[width=.32\textwidth]{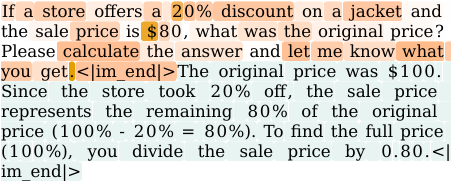}
      \put(50,43){\makebox[0pt][c]{\small {KNorm}}}
   \end{overpic} \\[5ex]
   \begin{overpic}[width=.32\textwidth]{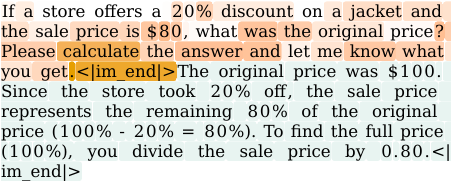}
      \put(50, 43){\makebox[0pt][c]{\small \gls{kvs}}}
   \end{overpic}\hfill
   \begin{overpic}[width=.32\textwidth]{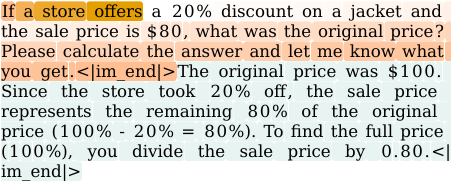}
      \put(50,43){\makebox[0pt][c]{\small {LagKV}}}
   \end{overpic}\hfill
   \begin{overpic}[width=.32\textwidth]{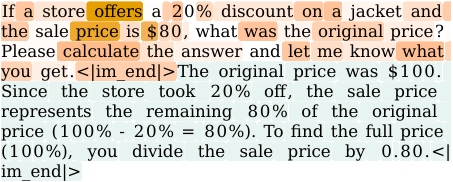}
      \put(50,43){\makebox[0pt][c]{{\small Random}}}
   \end{overpic} \\[5ex]
   \begin{overpic}[width=.32\textwidth]{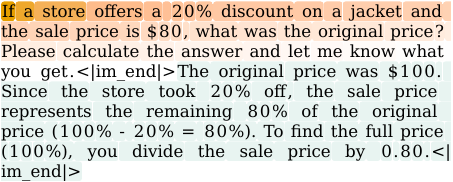}
      \put(50, 43){\makebox[0pt][c]{\small SnapKV}}
   \end{overpic}\hfill
   \begin{overpic}[width=.32\textwidth]{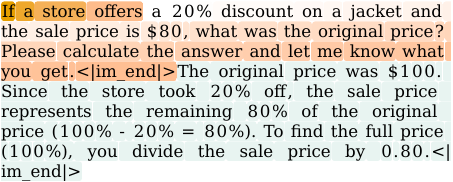}
      \put(50, 43){\makebox[0pt][c]{\small {StreamingLLM}}}
   \end{overpic}\hfill
   \begin{overpic}[width=.32\textwidth]{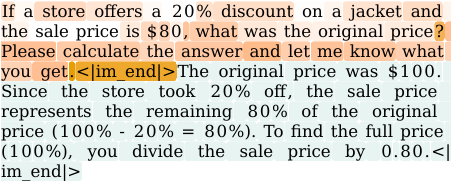}
      \put(50,43){\makebox[0pt][c]{\small {TOVA}}}
   \end{overpic}
   \caption{A qualitative comparison of all strategies, similar to \Cref{fig:teaser}. In this particular failure case, the attention scores from layer 10, head 0, show that our proposed KVP strategy struggles to fully capture the ground-truth future attention pattern.}
   \label{fig:teaser_failure}
\end{figure}

\begin{figure}[ht]
   \centering
   \includegraphics[width=.55\textwidth]{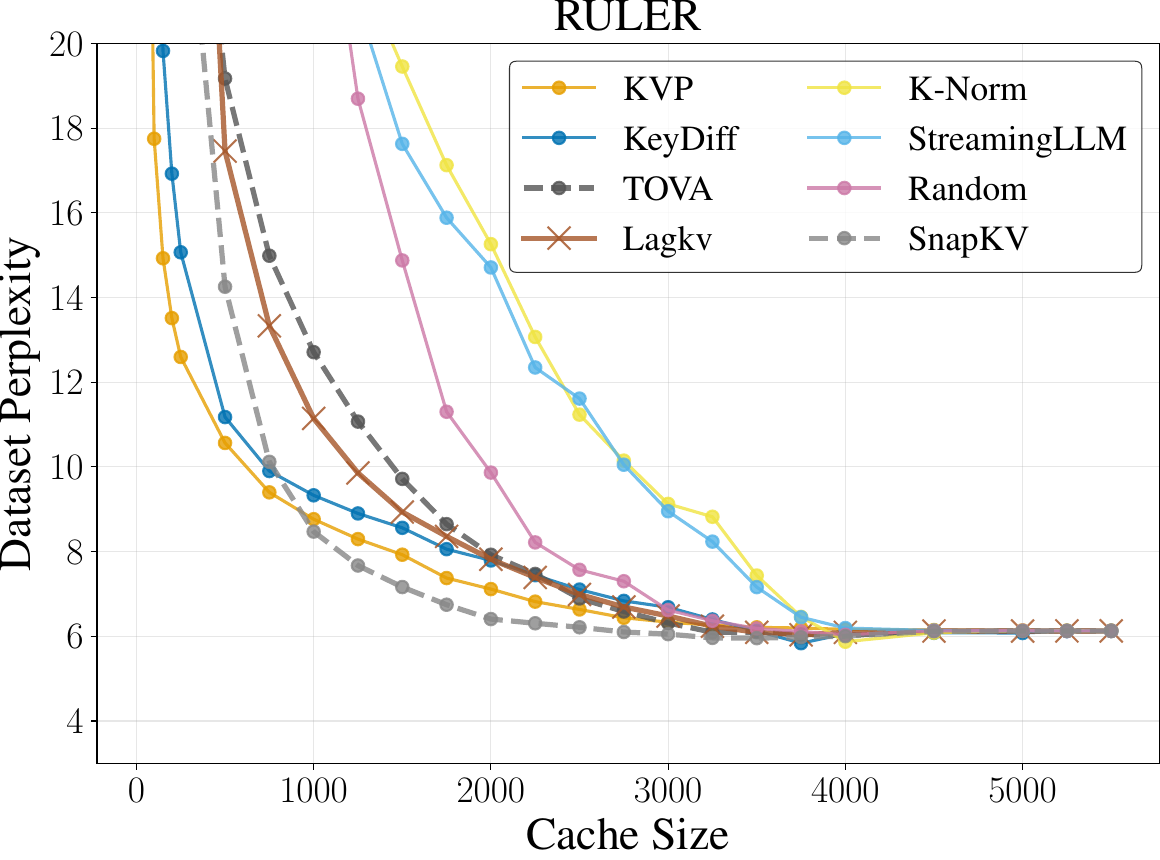}

   \caption{%
      Perplexity (PPL) as a function of KV cache size. \gls{kvs} achieves highly competitive perplexity, performing on par with or better than the leading baselines at most cache sizes and significantly outperforming other methods. This result is particularly notable given that RULER's structure, which includes random sentences preceding a final question, heavily advantages methods that can isolate tokens relevant to that question.}
   \label{appendix:fig:ruler_ppl}
\end{figure}

\begin{figure}[ht]
   \centering
   \includegraphics[width=.49\textwidth]{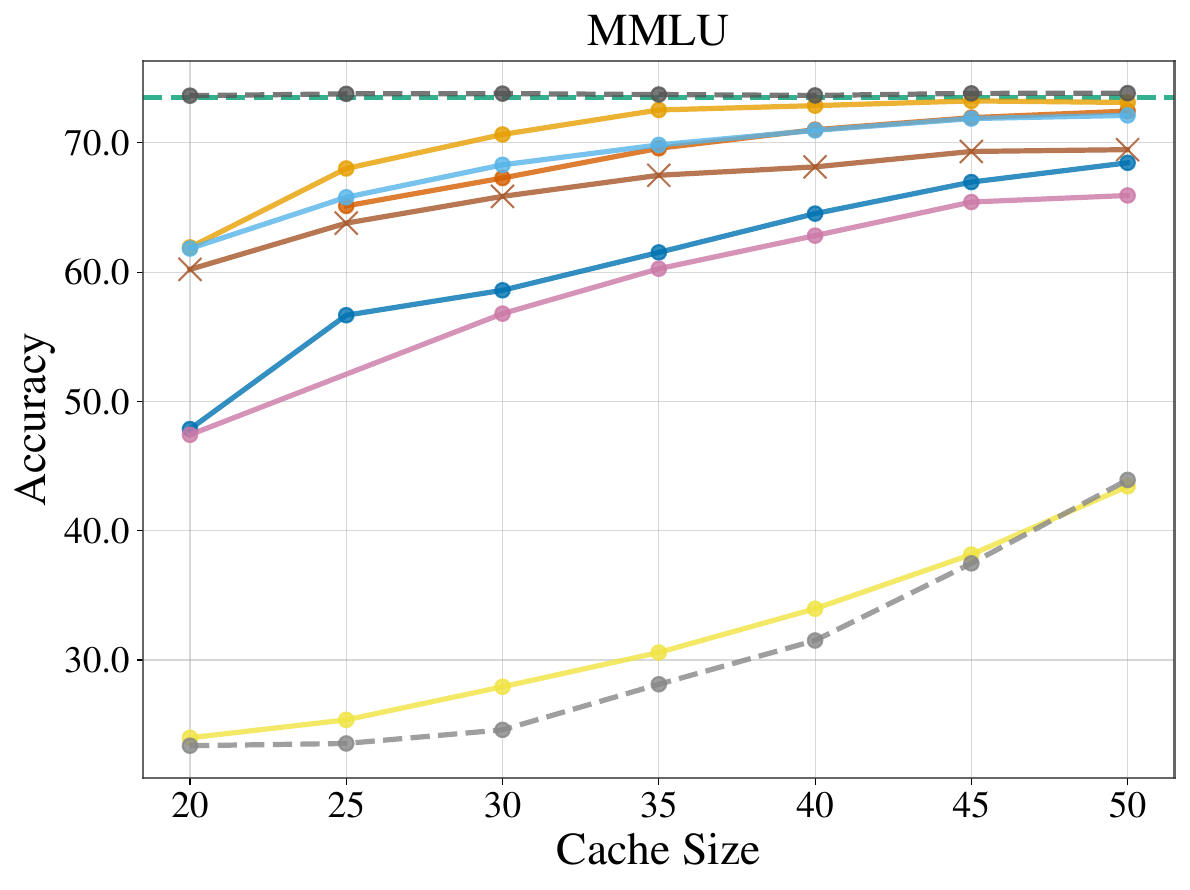}
   \includegraphics[width=.49\textwidth]{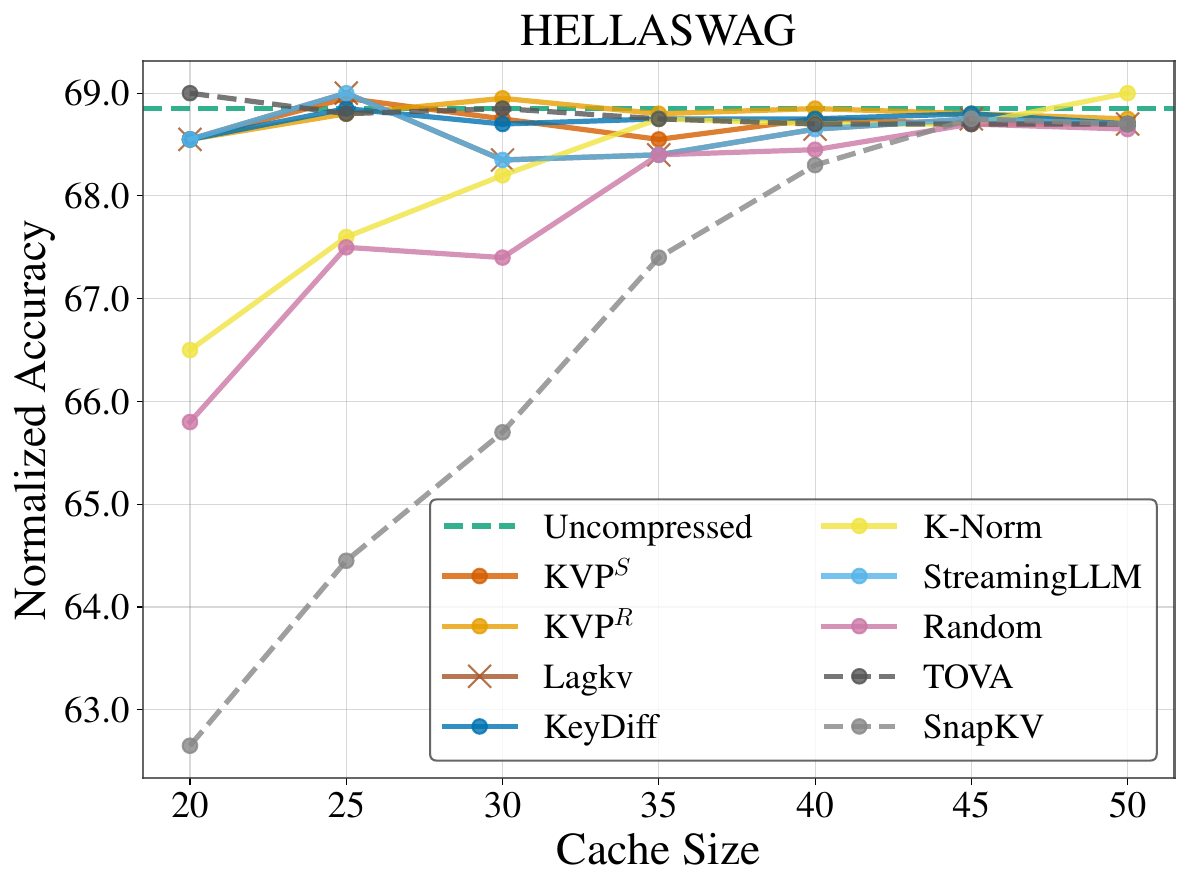}
   \caption{%
      (Left) Average test accuracy on MMLU and (Right) average normalized accuracy on Hellaswag as a function of KV cache size. Higher is better.}
   \label{fig:lm_eval_harness:accuracy_extra}
\end{figure}

\begin{figure}[ht]
   \centering
   \includegraphics[width=.6\linewidth]{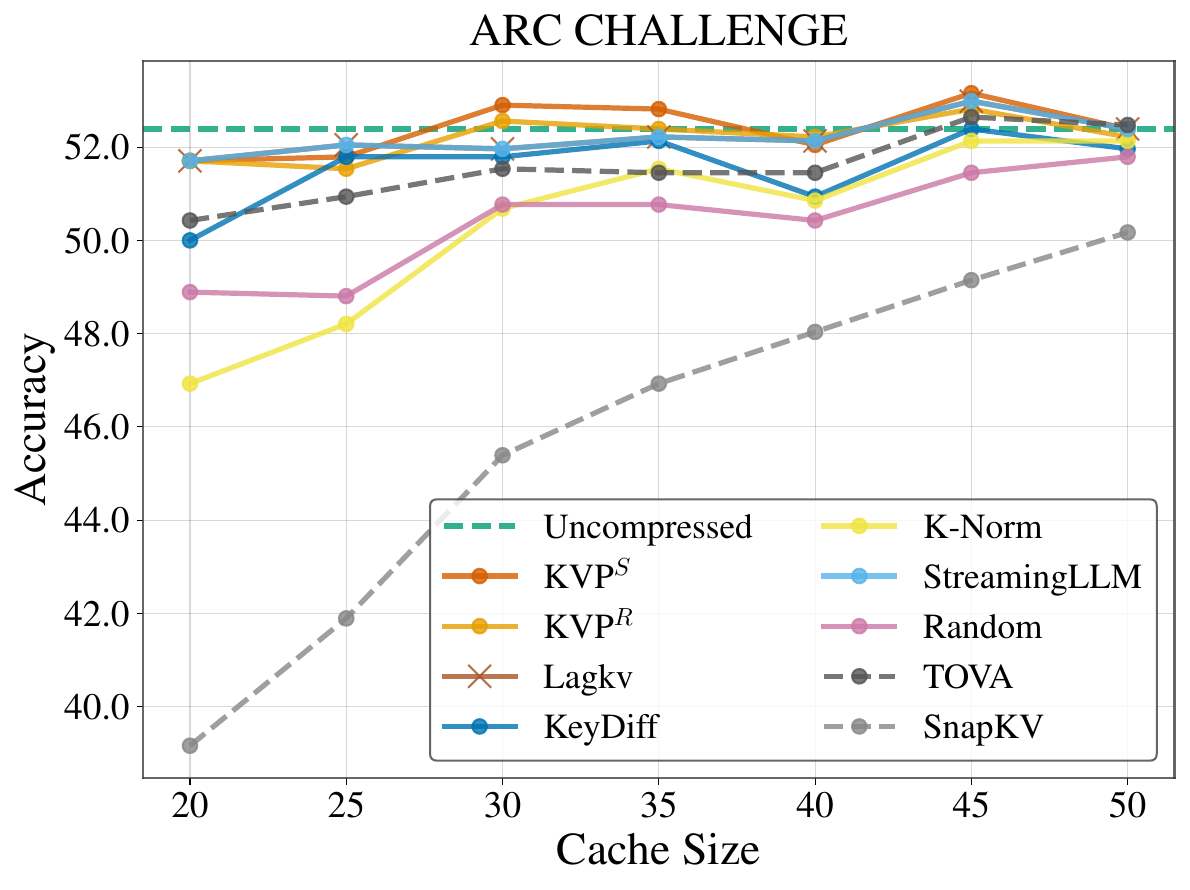}
   \caption{Average test accuracy on \emph{ARC-Challenge} as a function of KV cache size. Higher is better. ARC-Challenge probes multi-step science reasoning; both \gls{kvs}$^R$ and \gls{kvs}$^S$ remain competitive with the uncompressed model across the budget range.}
   \label{appendix:fig:arc_challenge}
\end{figure}

\begin{figure}[ht]
   \centering
   \begin{overpic}[width=.27\textwidth]{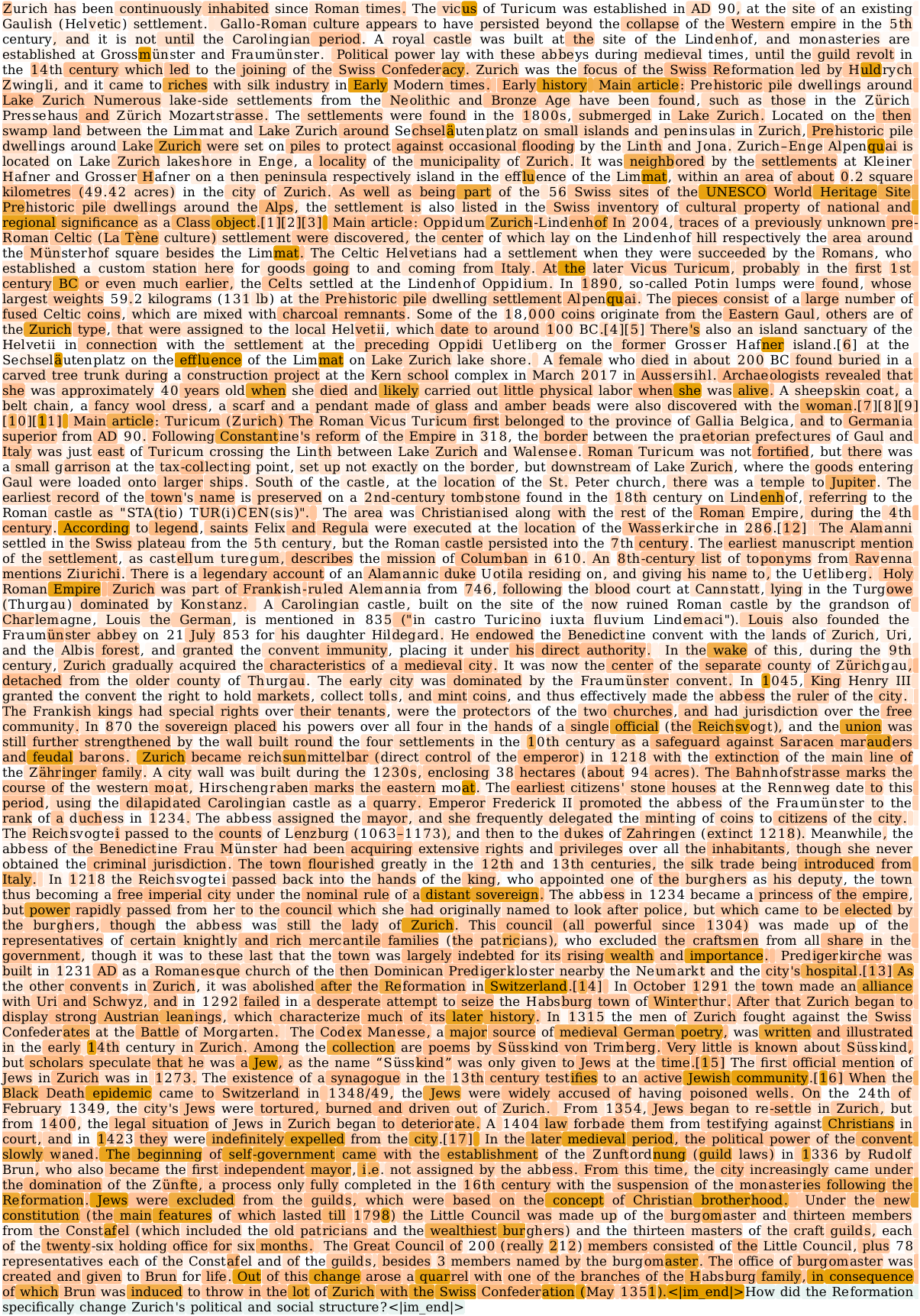}
      \put(35, 102){\makebox[0pt][c]{\small Future Attention Importance}}
   \end{overpic}\hfill
   \begin{overpic}[width=.27\textwidth]{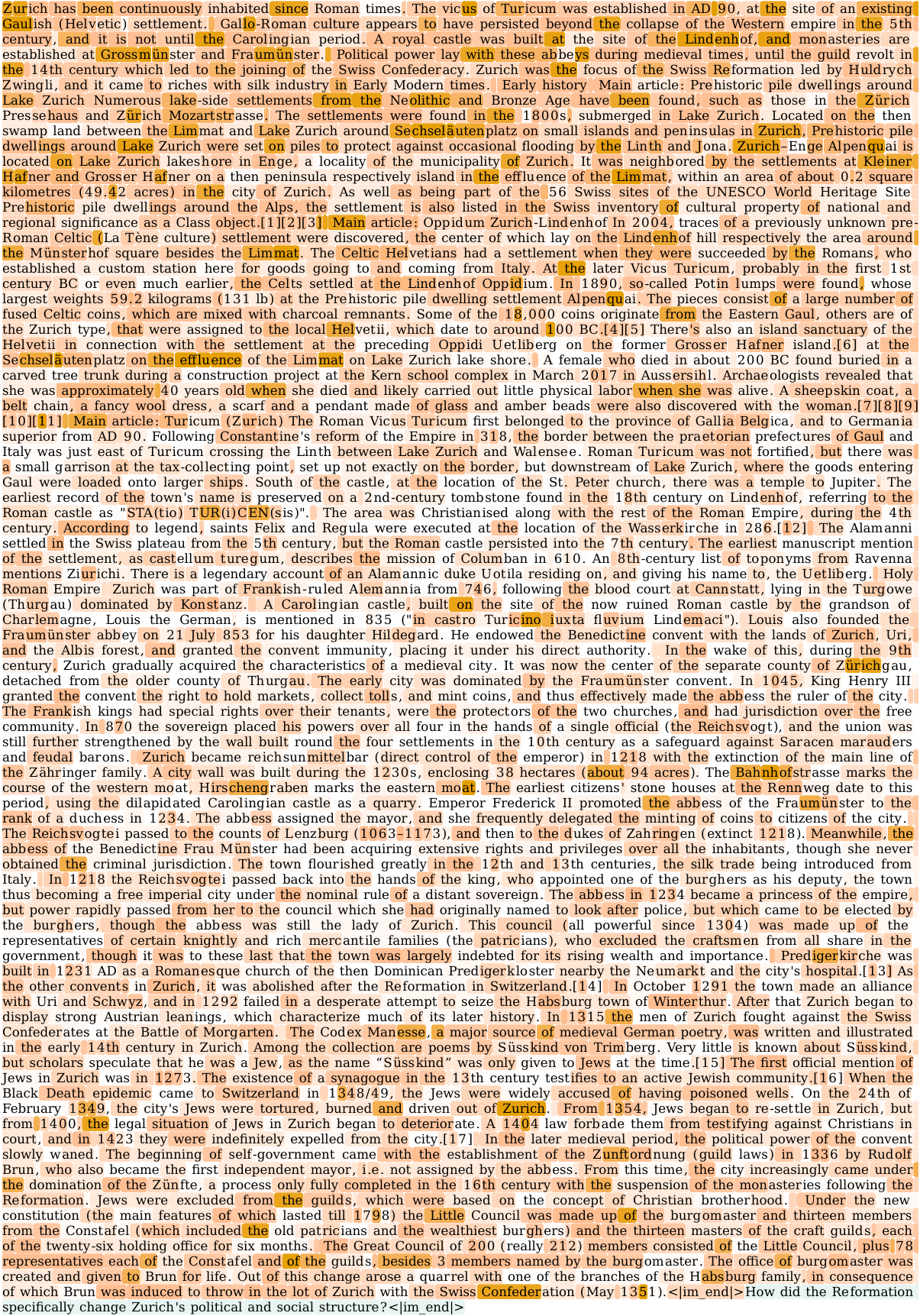}
      \put(35, 102){\makebox[0pt][c]{\small {KeyDiff}}}
   \end{overpic}\hfill
   \begin{overpic}[width=.27\textwidth]{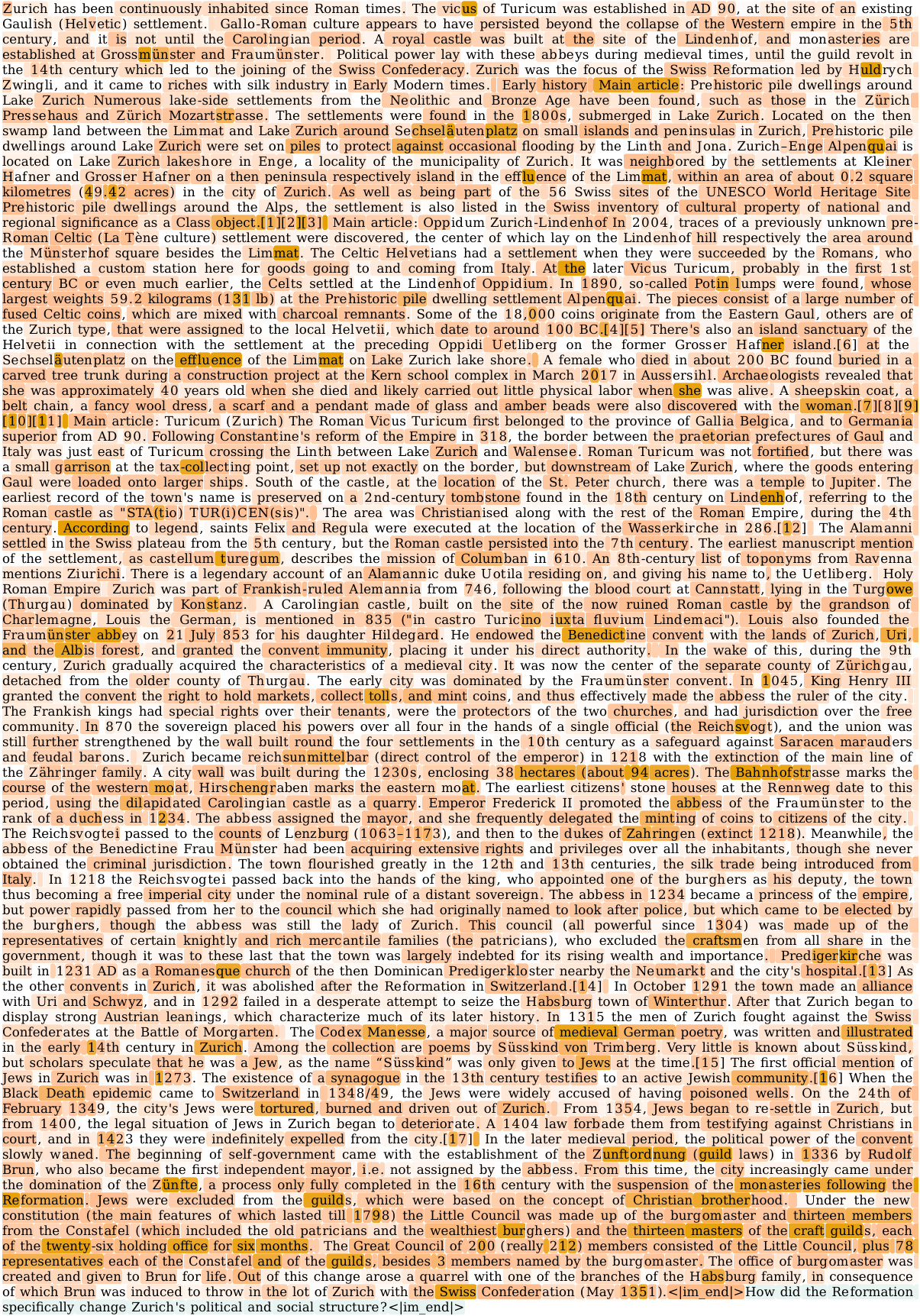}
      \put(35, 102){\makebox[0pt][c]{\small {KNorm}}}
   \end{overpic} \\[5ex]
   \begin{overpic}[width=.27\textwidth]{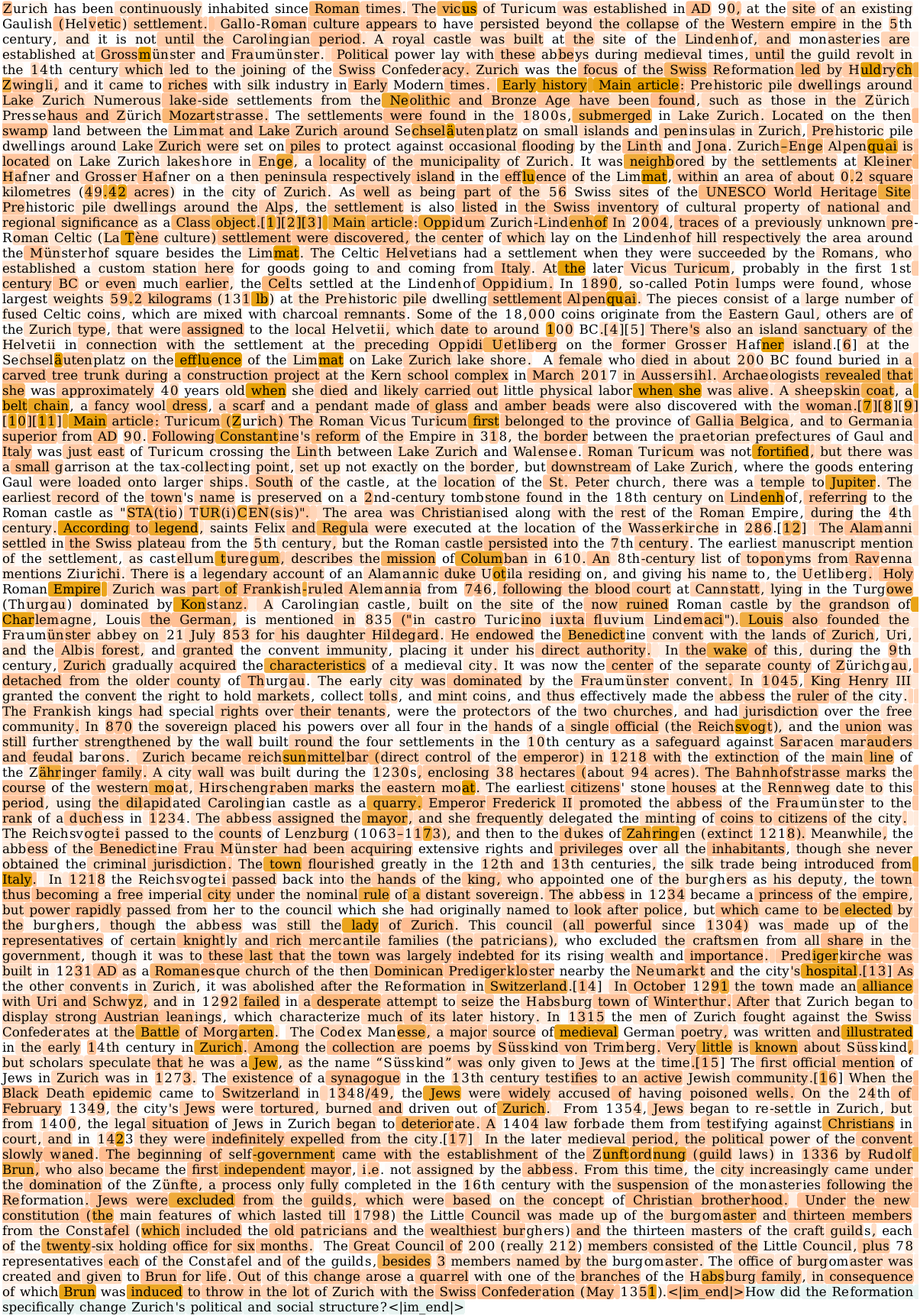}
      \put(35, 102){\makebox[0pt][c]{\small \gls{kvs}}}
   \end{overpic}\hfill
   \begin{overpic}[width=.27\textwidth]{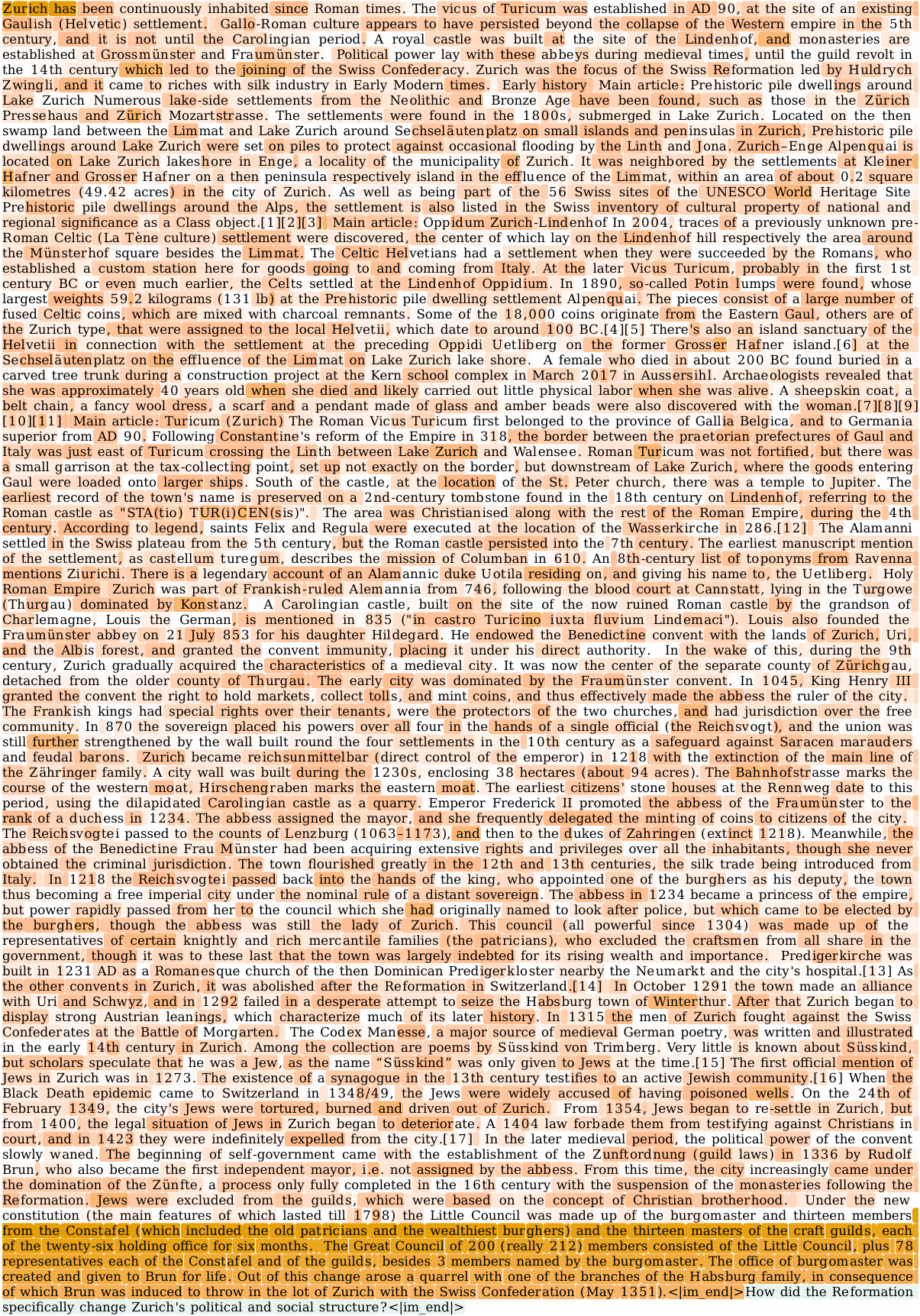}
      \put(35, 102){\makebox[0pt][c]{\small {LagKV}}}
   \end{overpic}\hfill
   \begin{overpic}[width=.27\textwidth]{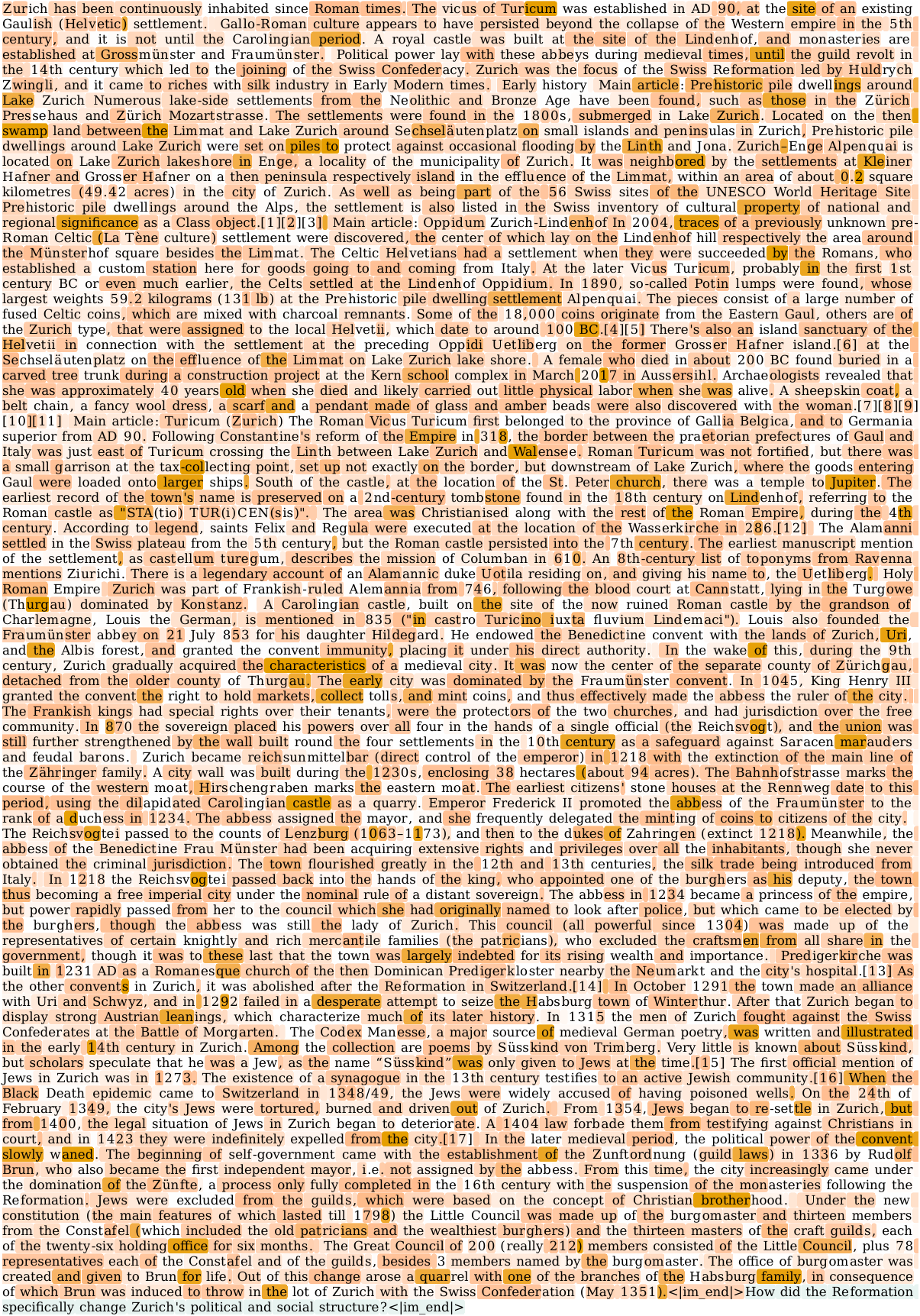}
      \put(35, 102){\makebox[0pt][c]{{\small Random}}}
   \end{overpic} \\[5ex]
   \begin{overpic}[width=.27\textwidth]{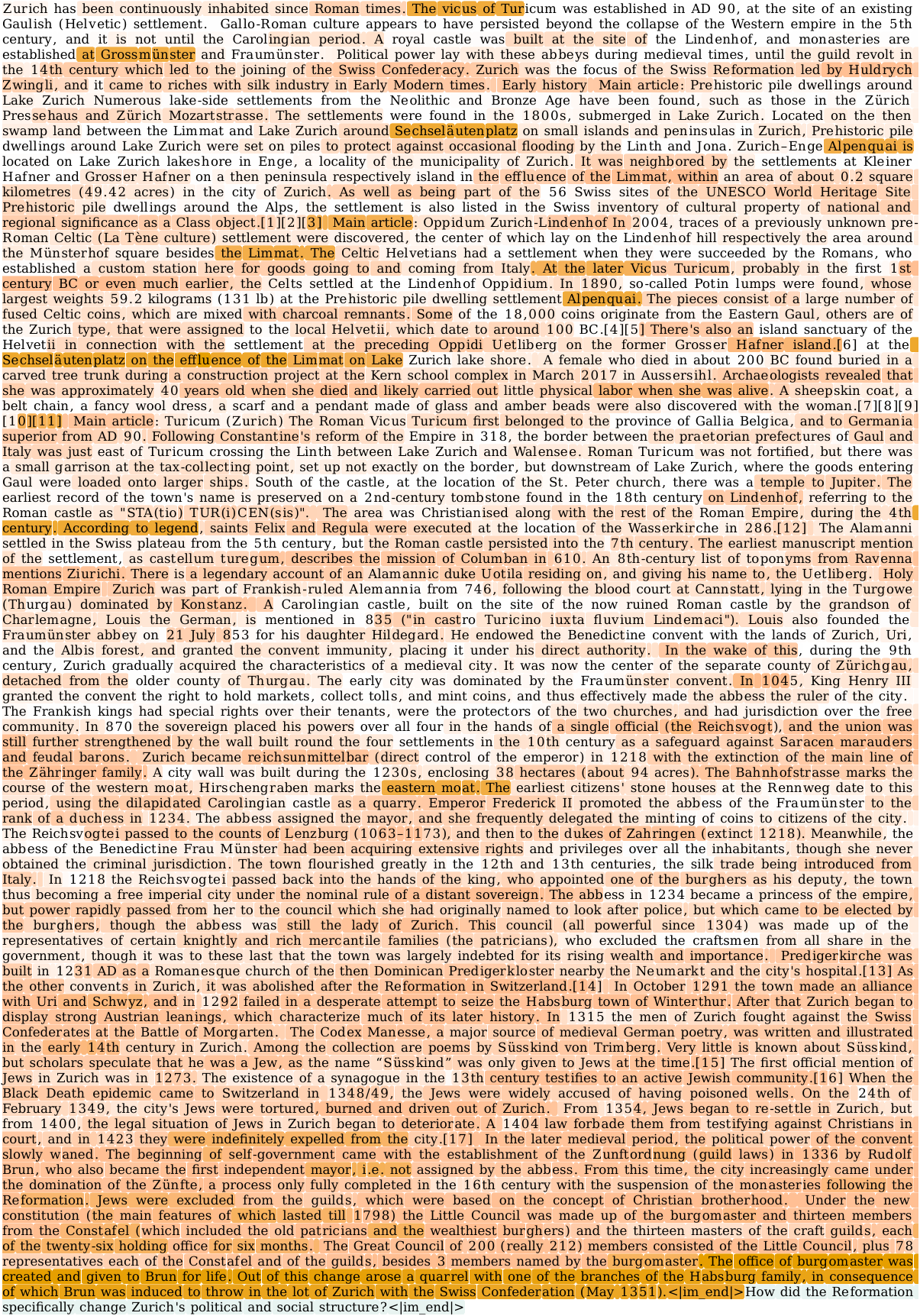}
      \put(35, 102){\makebox[0pt][c]{\small SnapKV}}
   \end{overpic}\hfill
   \begin{overpic}[width=.27\textwidth]{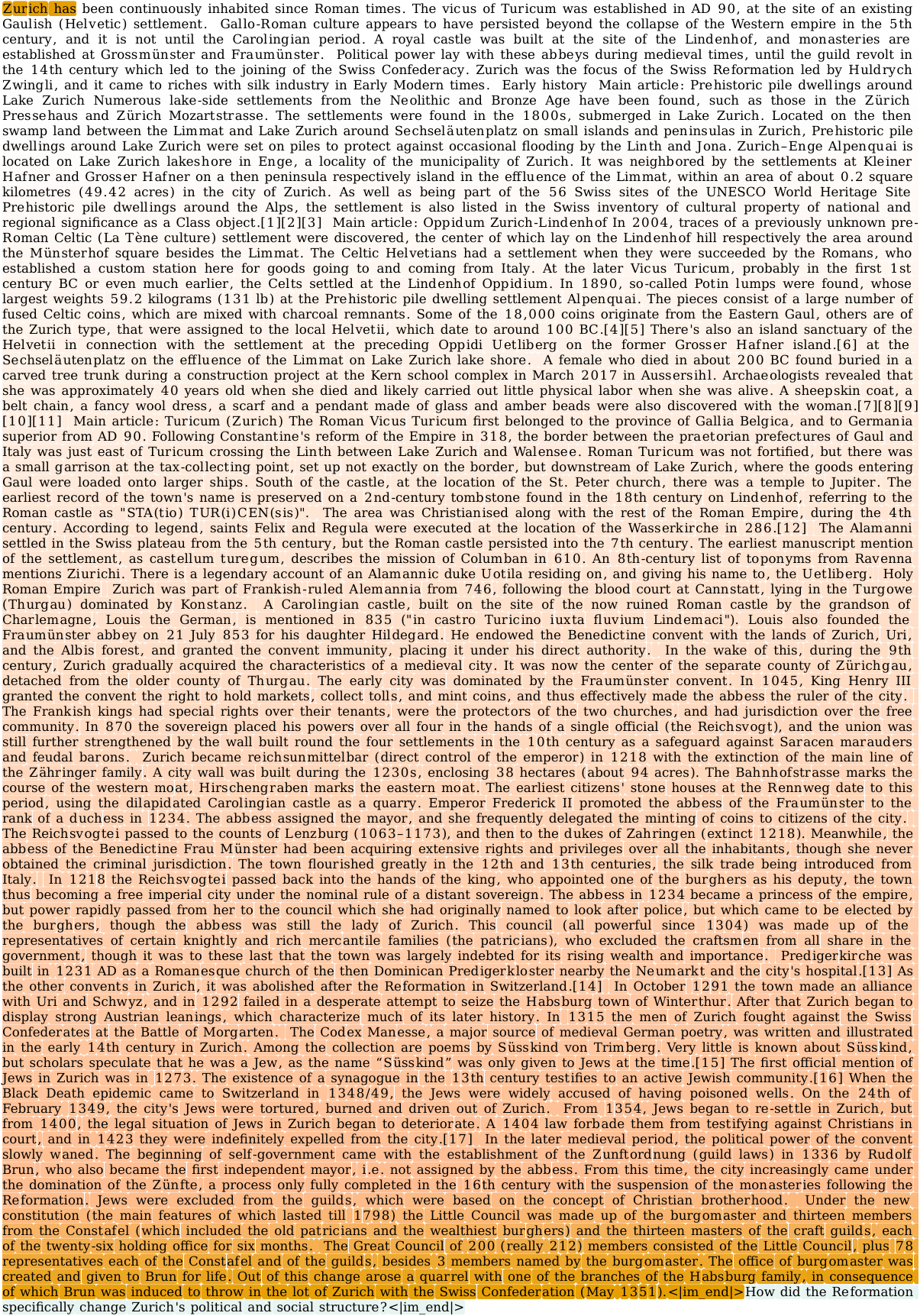}
      \put(35, 102){\makebox[0pt][c]{\small {StreamingLLM}}}
   \end{overpic}\hfill
   \begin{overpic}[width=.27\textwidth]{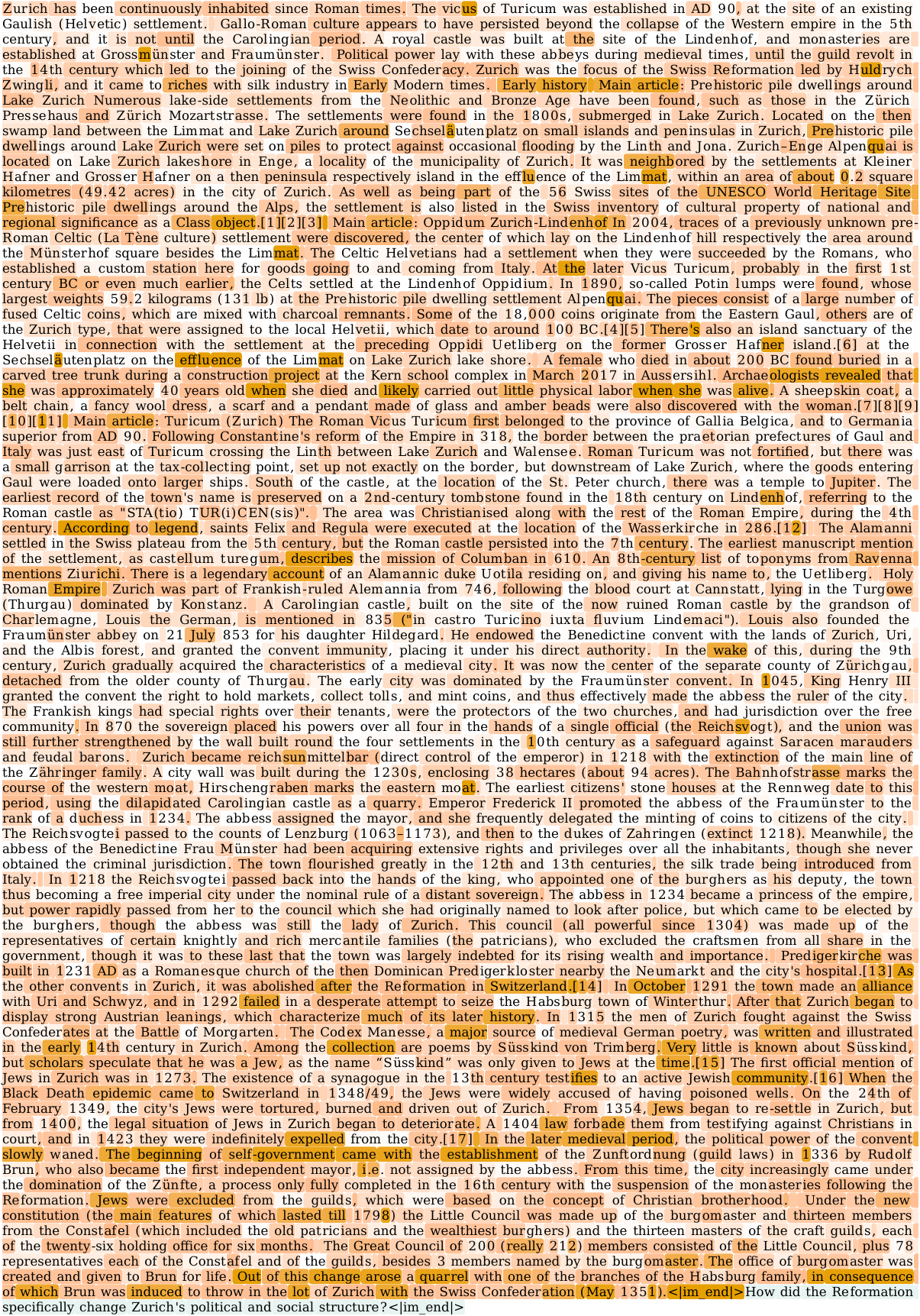}
      \put(35, 102){\makebox[0pt][c]{\small {TOVA}}}
   \end{overpic}
   \caption{All strategies compared on a long qualitative example. The attention scores considered are from layer 19 head 0.}
   \label{fig:teaser_ext_2}
\end{figure}

\begin{figure}[ht]
   \centering

   \includegraphics[width=.32\textwidth]{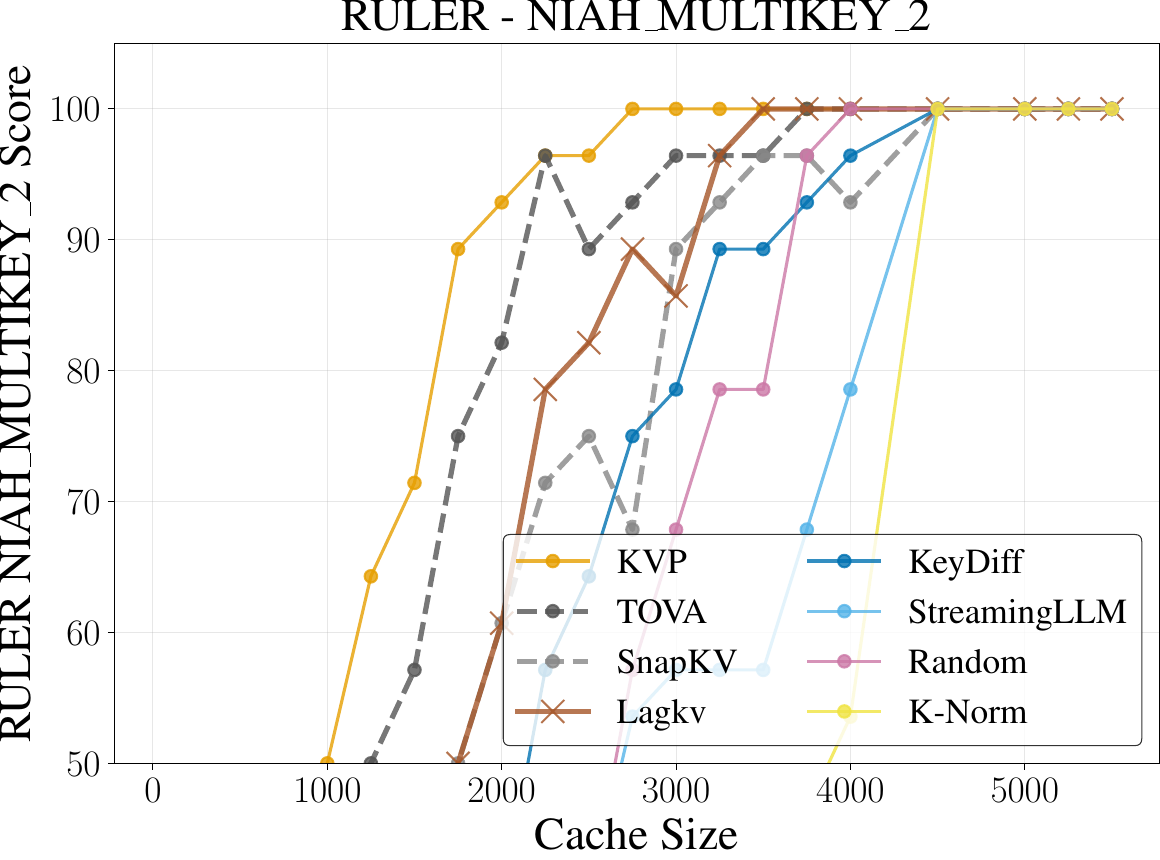}
   \includegraphics[width=.32\textwidth]{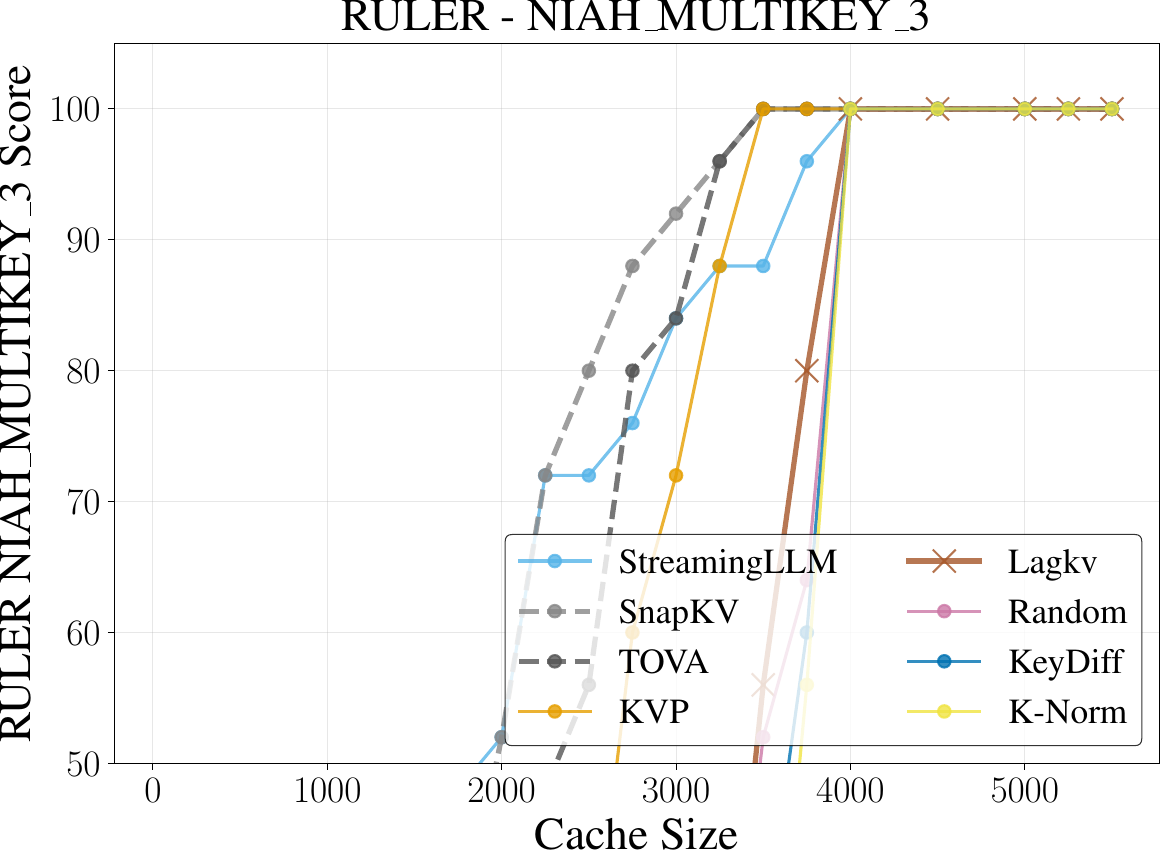}
   \includegraphics[width=.32\textwidth]{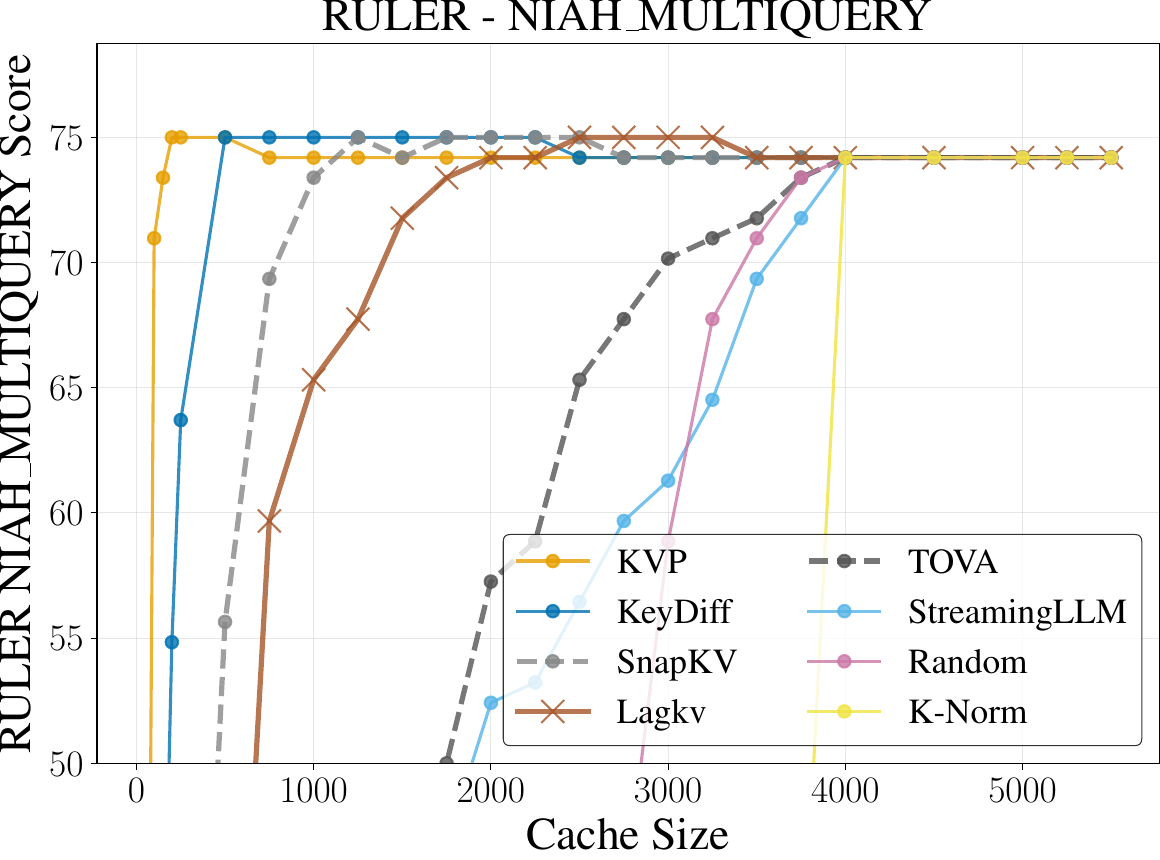} \\
   \includegraphics[width=.32\textwidth]{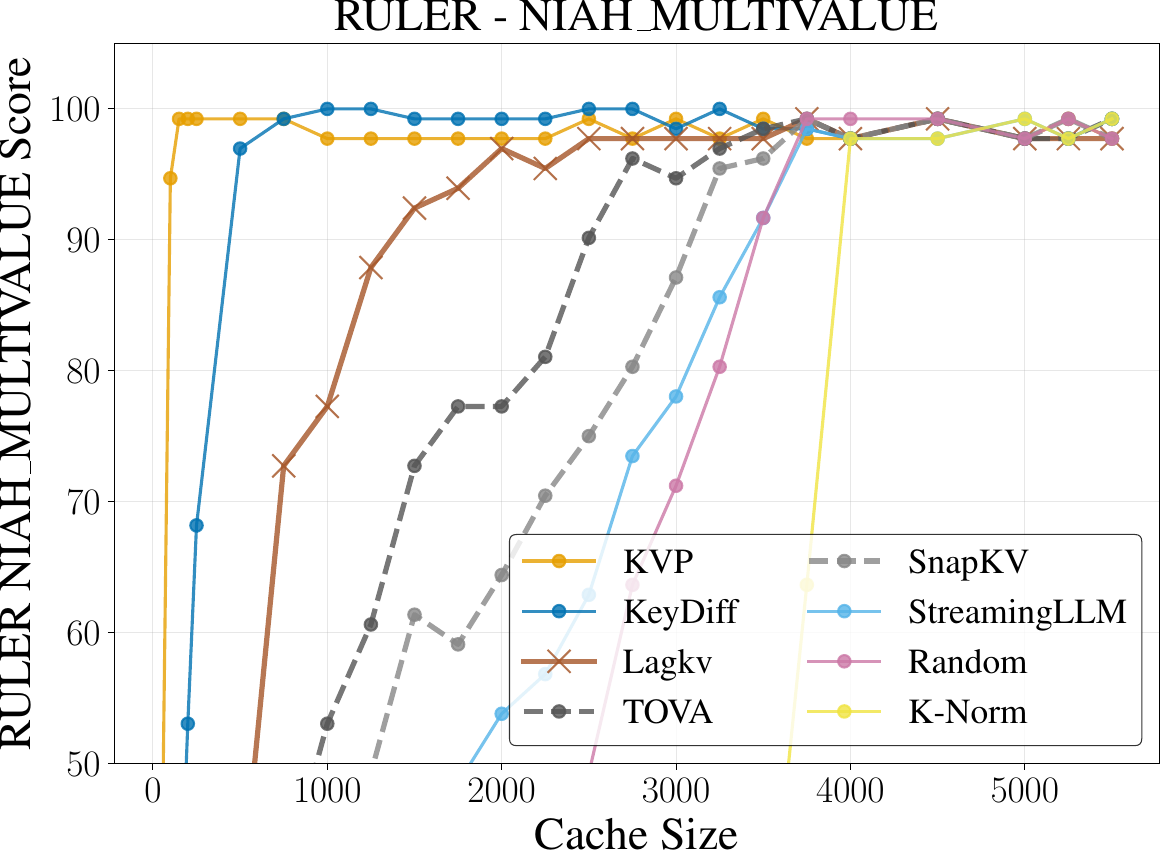}
   \includegraphics[width=.32\textwidth]{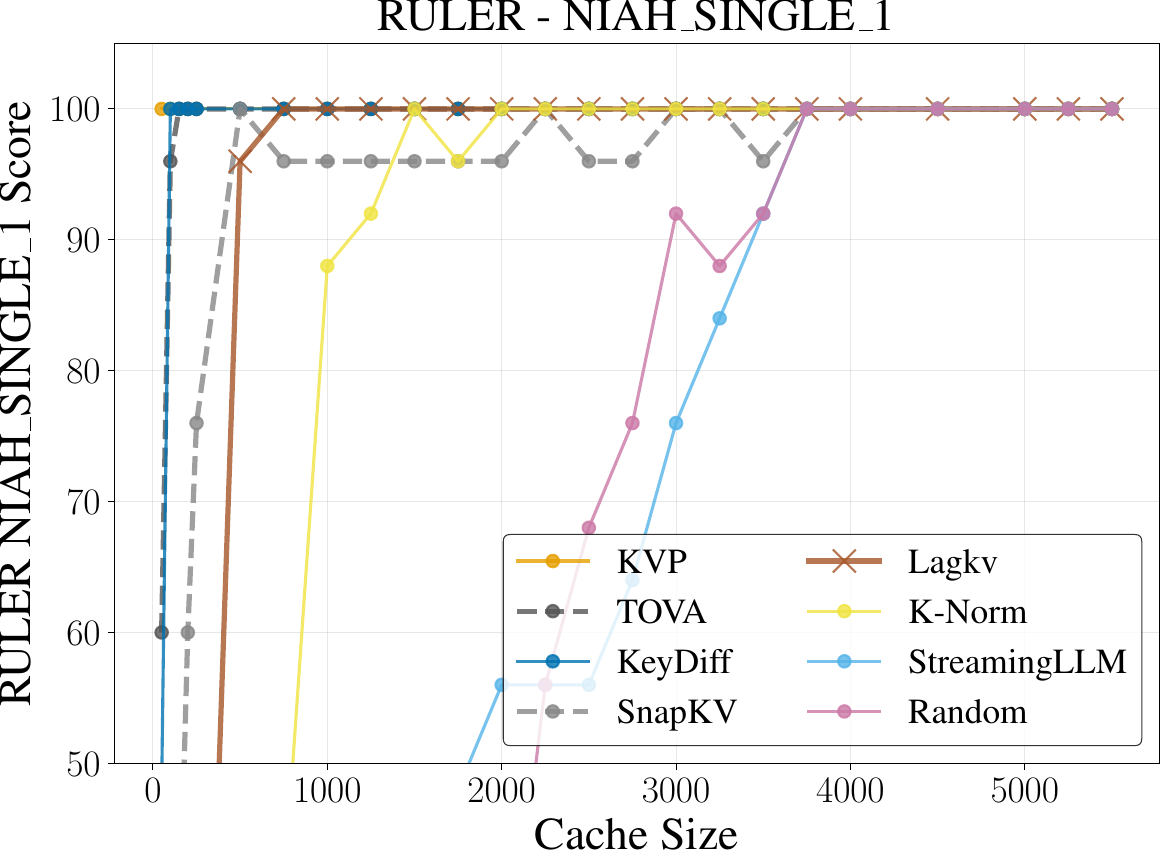}
   \includegraphics[width=.32\textwidth]{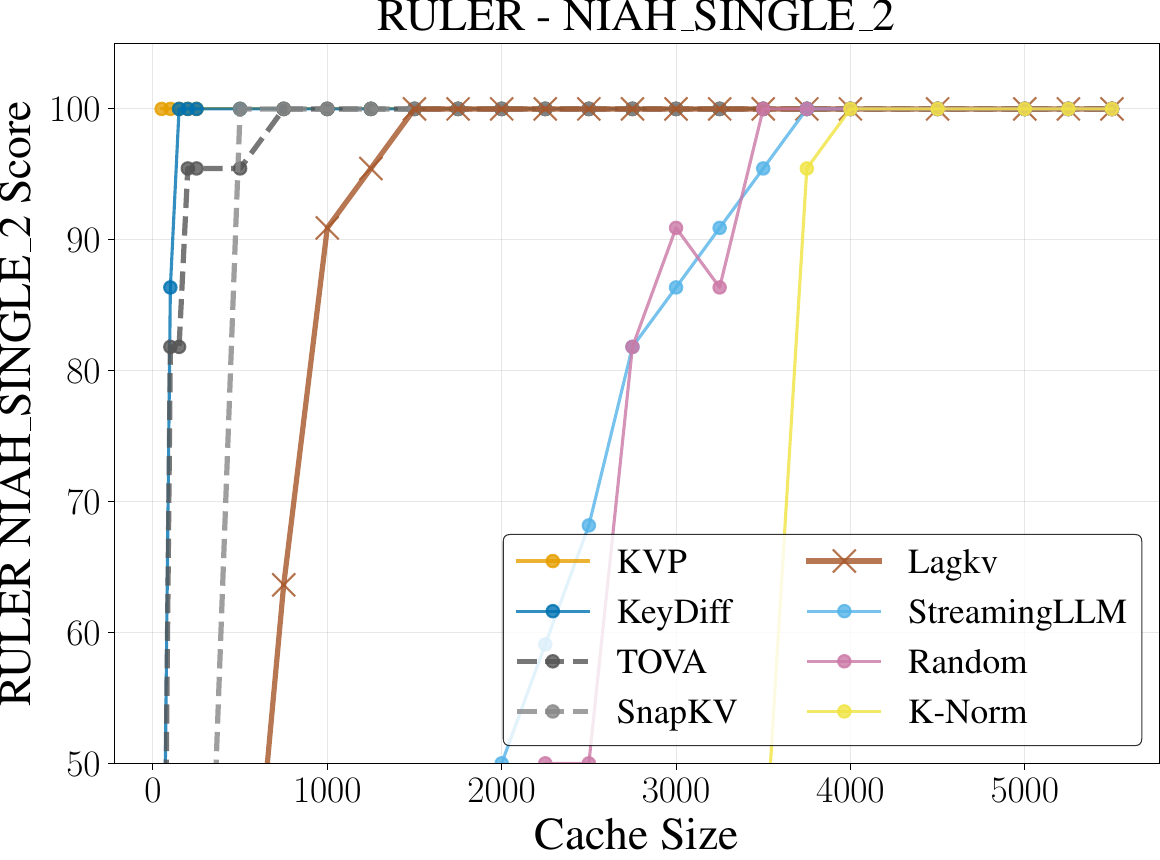} \\
   \includegraphics[width=.32\textwidth]{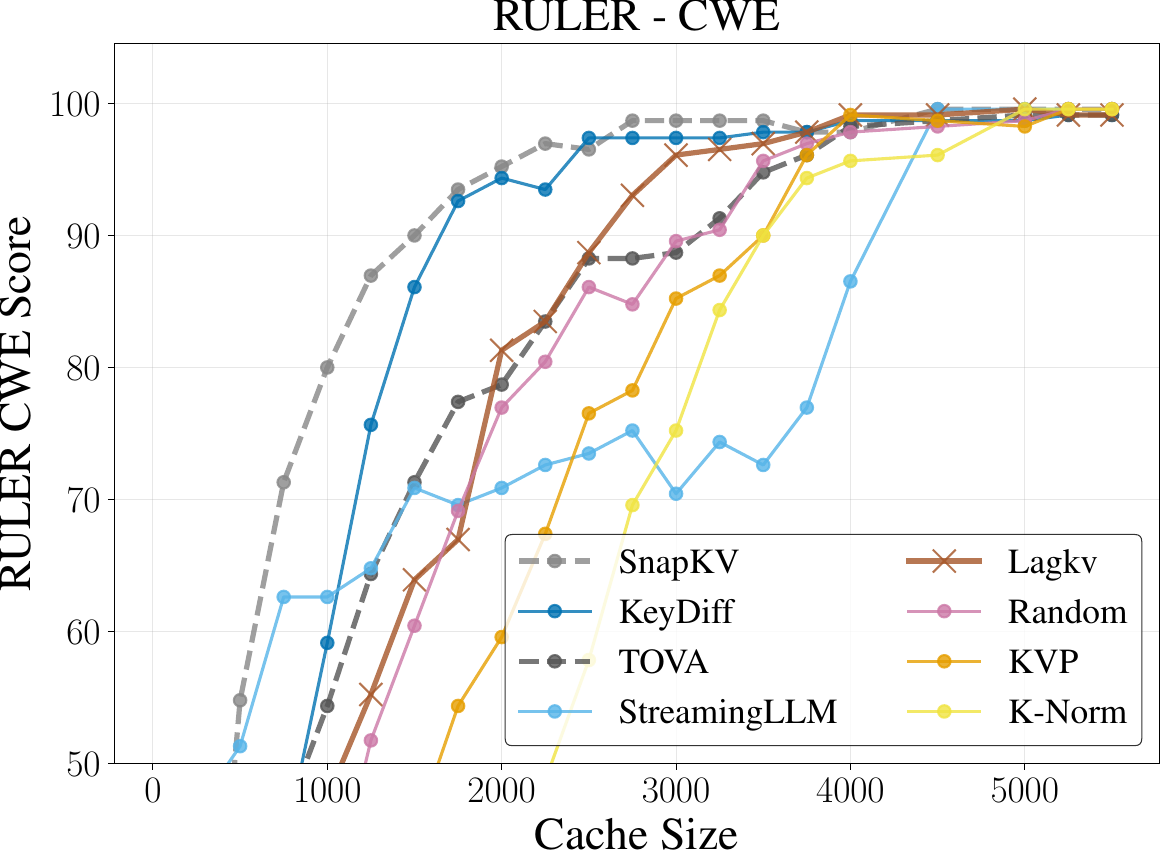}
   \includegraphics[width=.32\textwidth]{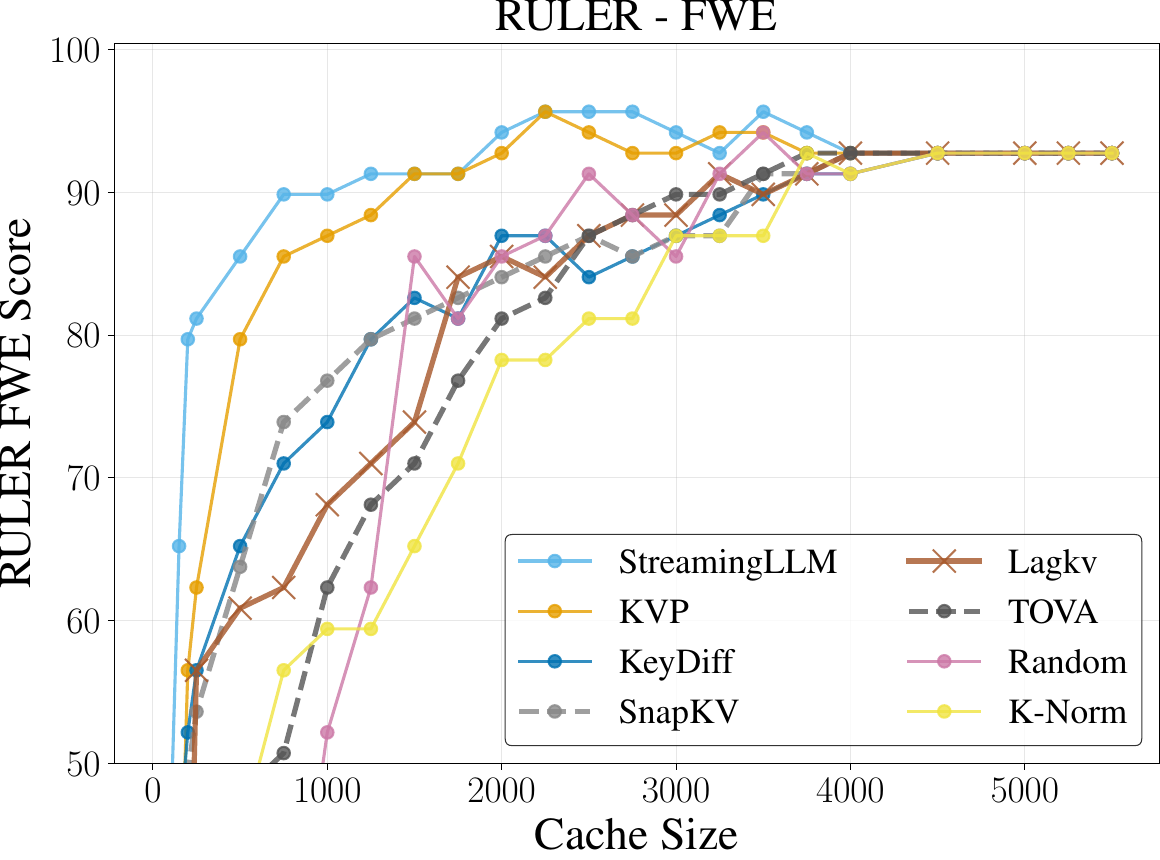}
   \includegraphics[width=.32\textwidth]{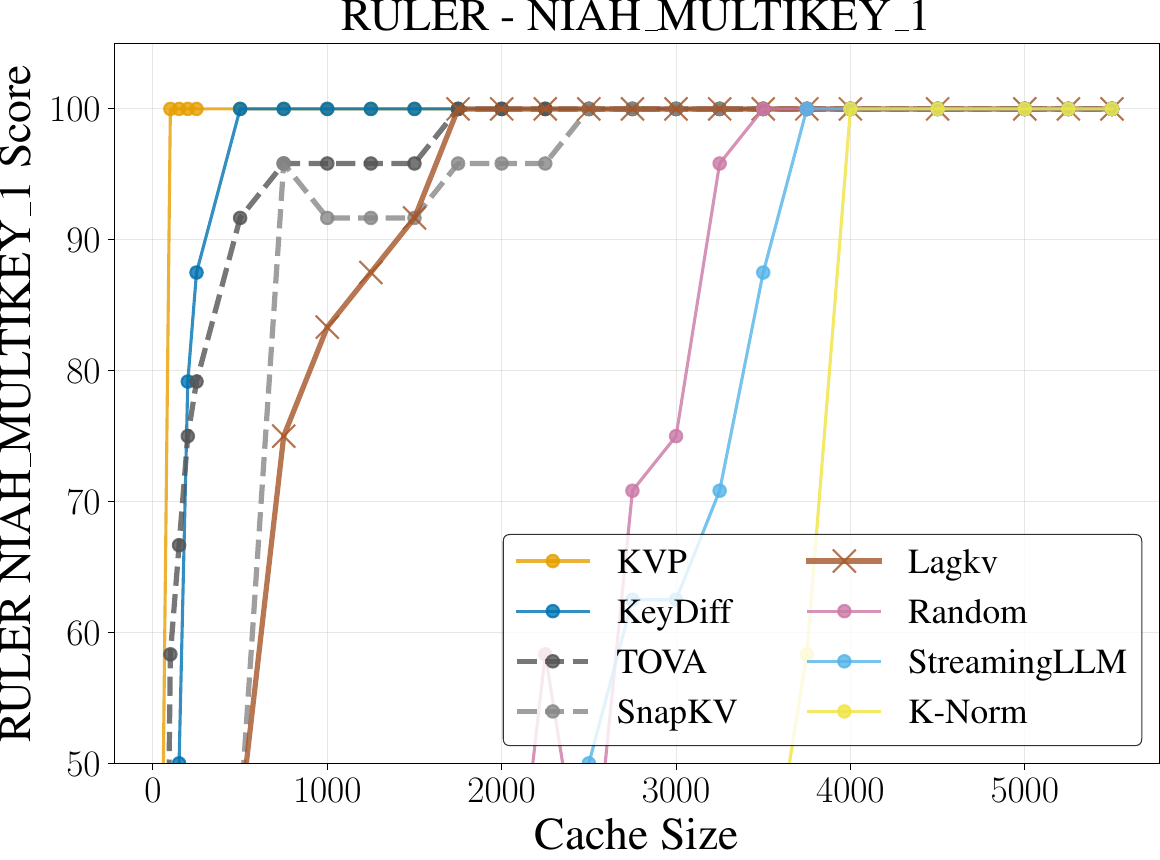} \\
   \includegraphics[width=.32\textwidth]{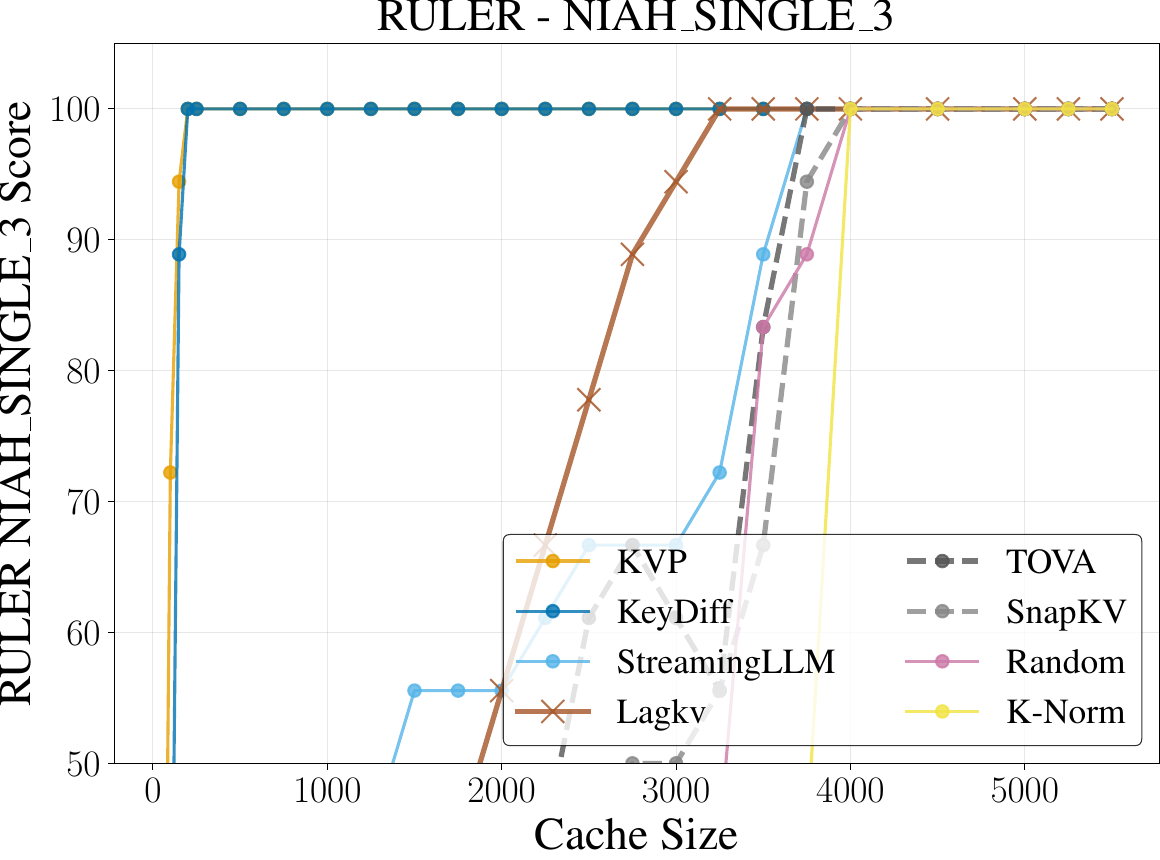}
   \includegraphics[width=.32\textwidth]{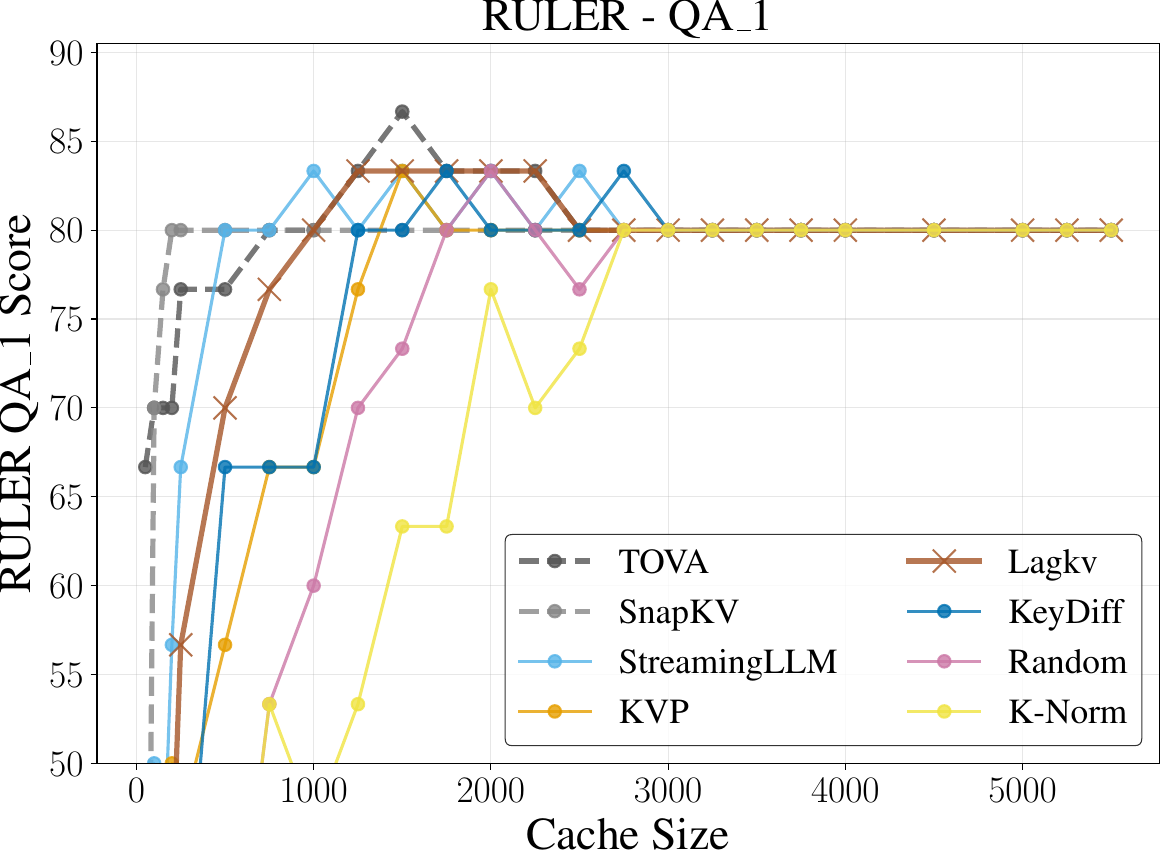}
   \includegraphics[width=.32\textwidth]{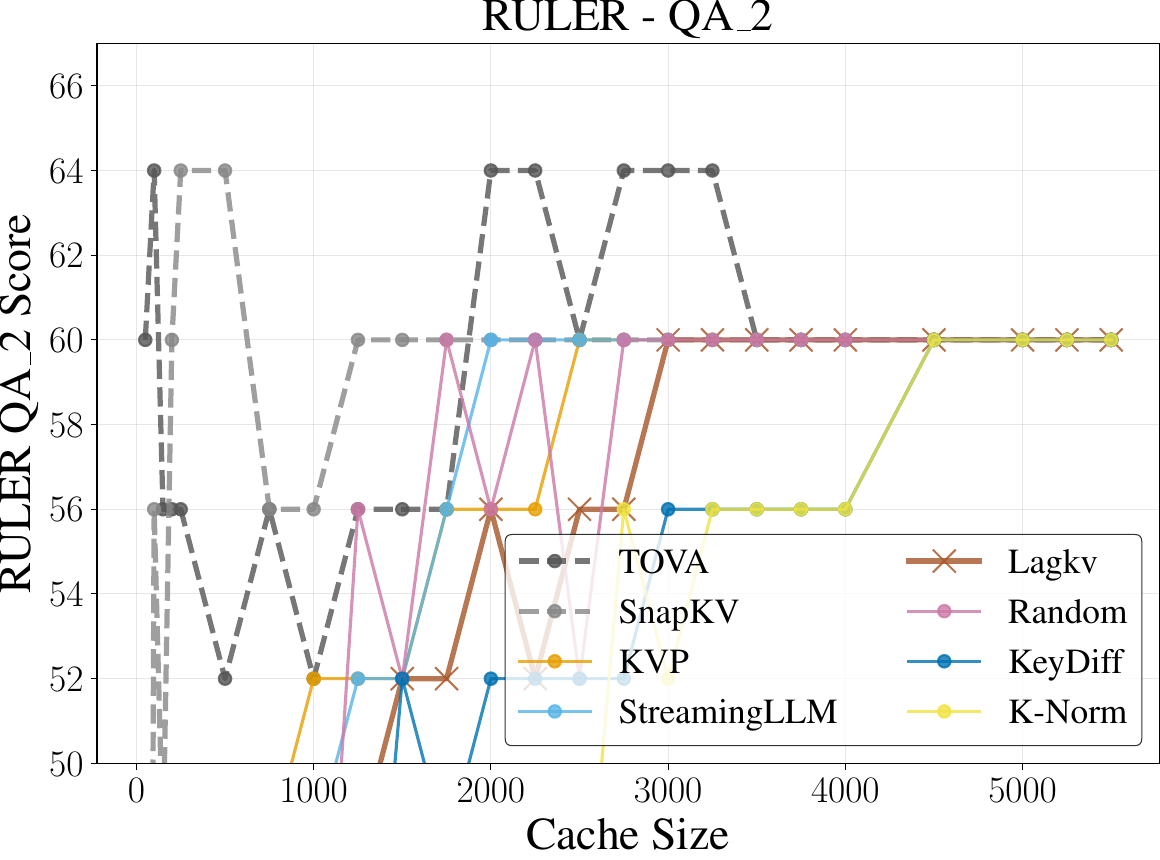} \\
   \includegraphics[width=.32\textwidth]{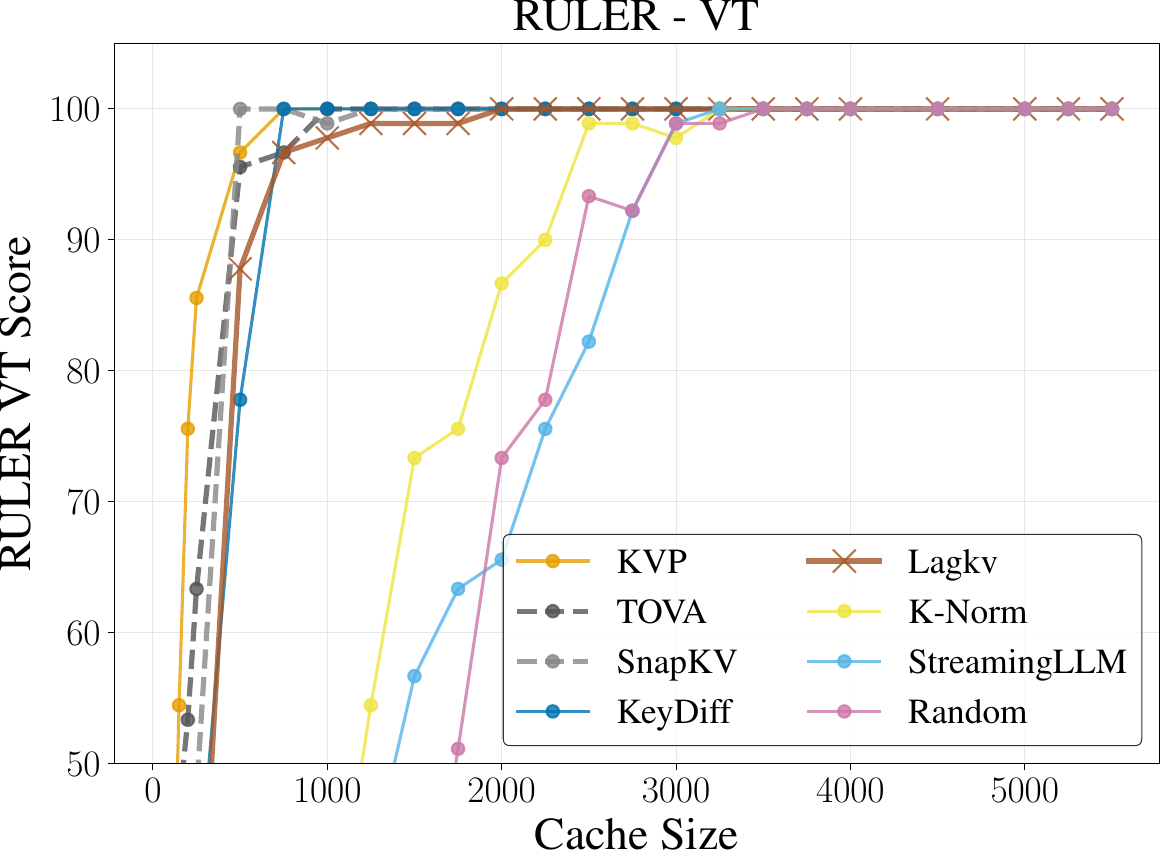}
   \caption{Per-task accuracy on all RULER subtasks as a function of absolute KV cache size. }
   \label{appendix:fig:ruler_subtasks}
\end{figure}

\begin{figure}[htbp]
   \includegraphics[width=.23\textwidth]{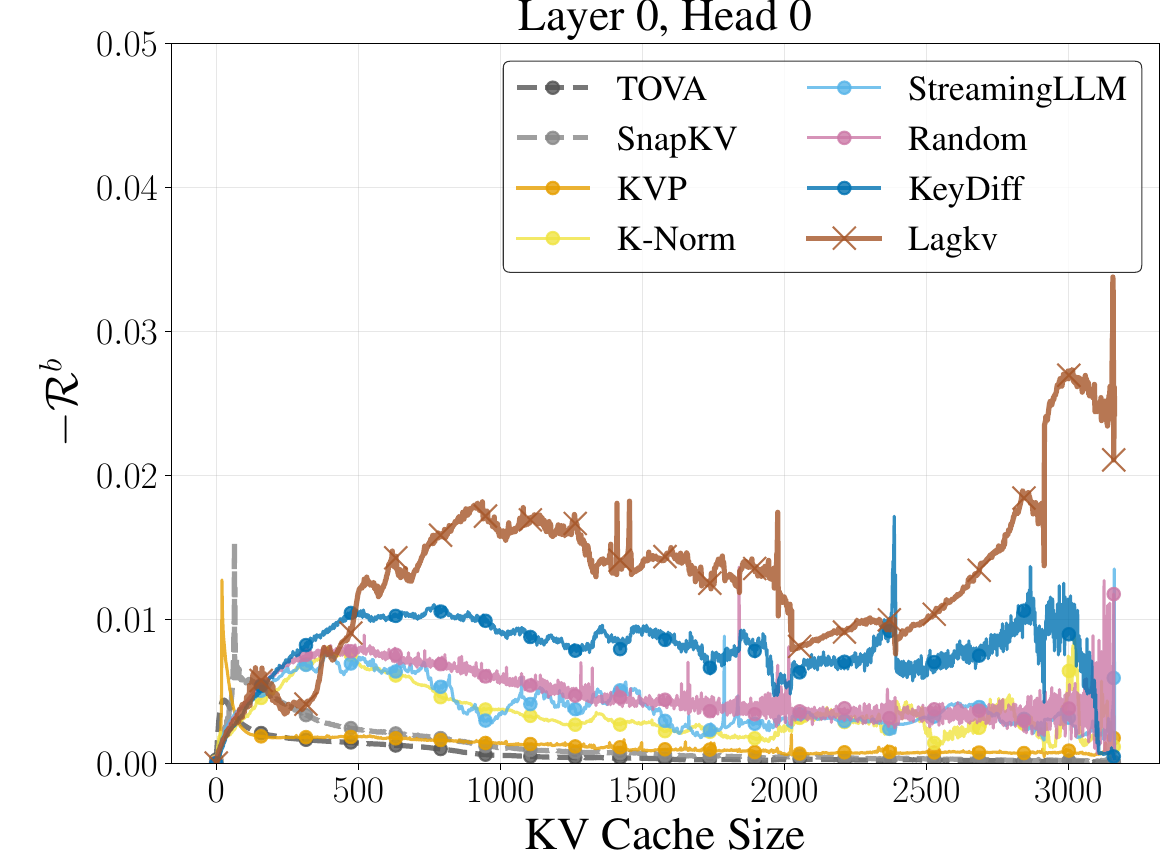} \hfill
   \includegraphics[width=.23\textwidth]{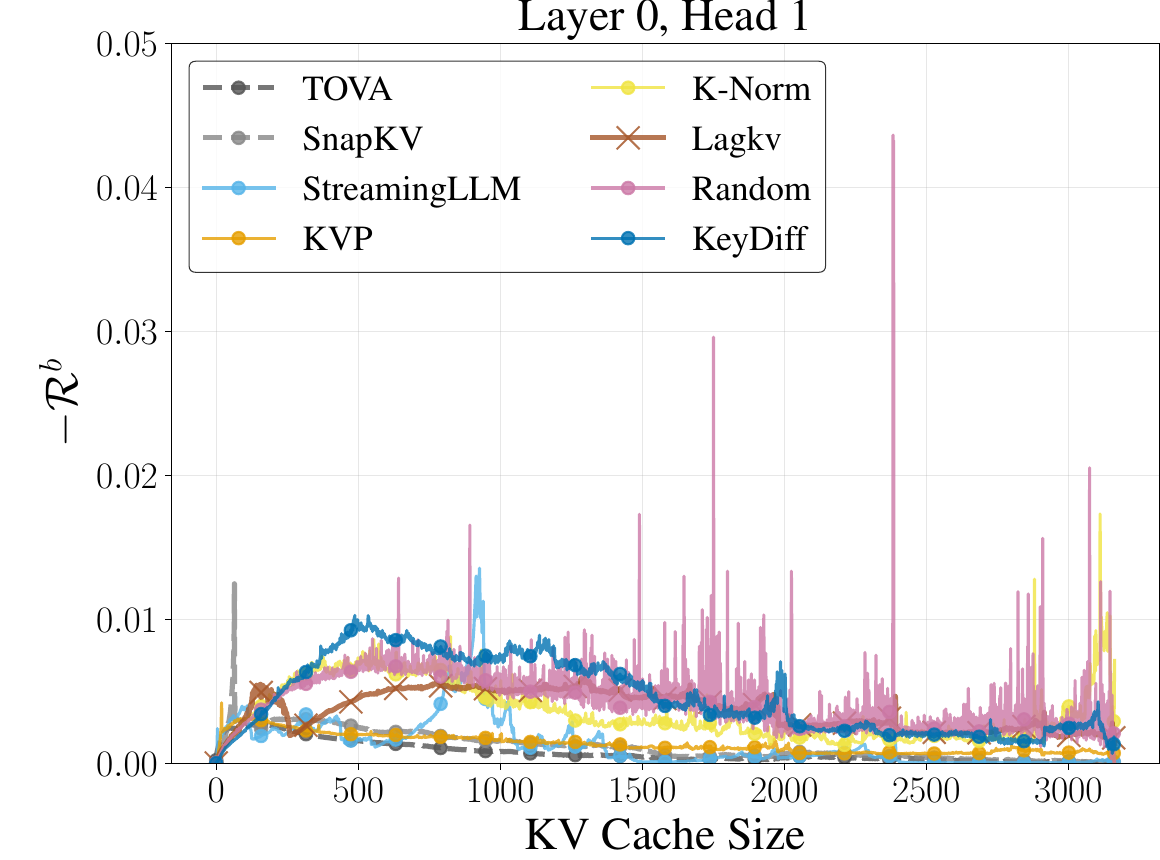} \hfill
   \includegraphics[width=.23\textwidth]{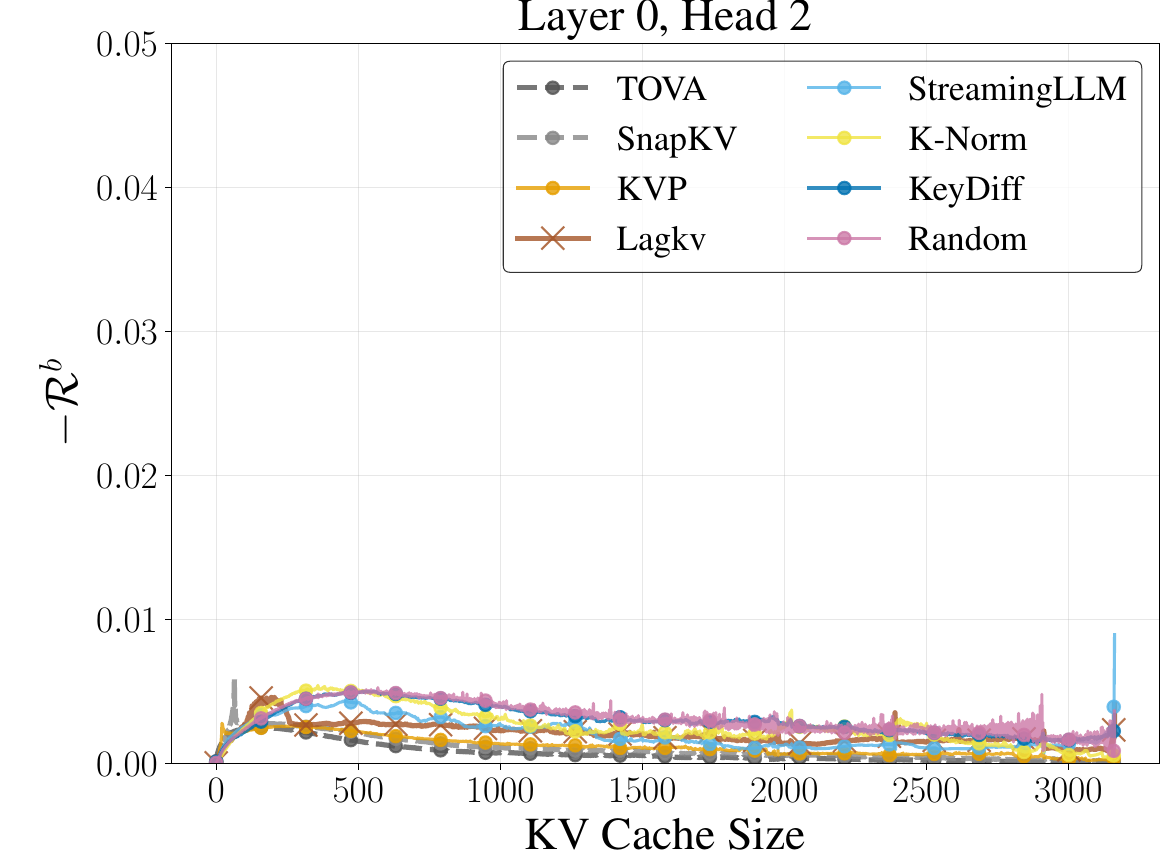} \hfill
   \includegraphics[width=.23\textwidth]{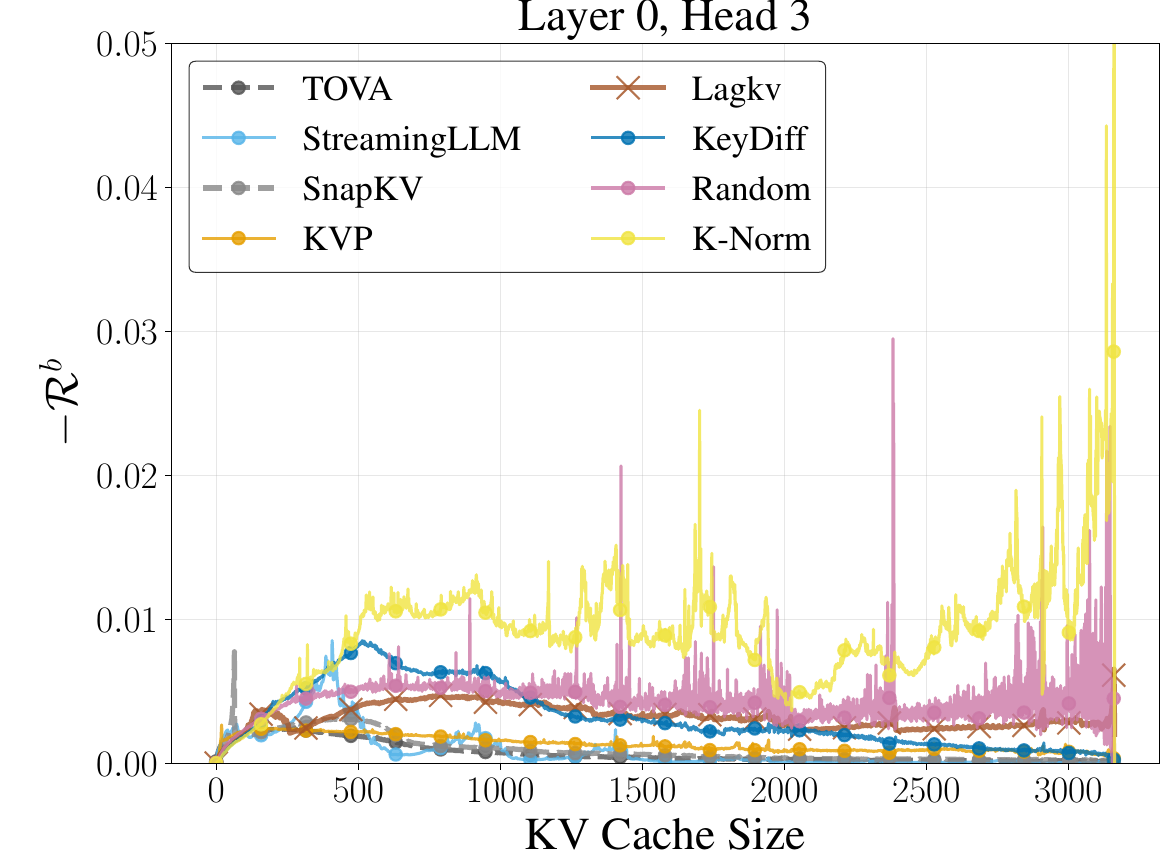} \\
   \includegraphics[width=.23\textwidth]{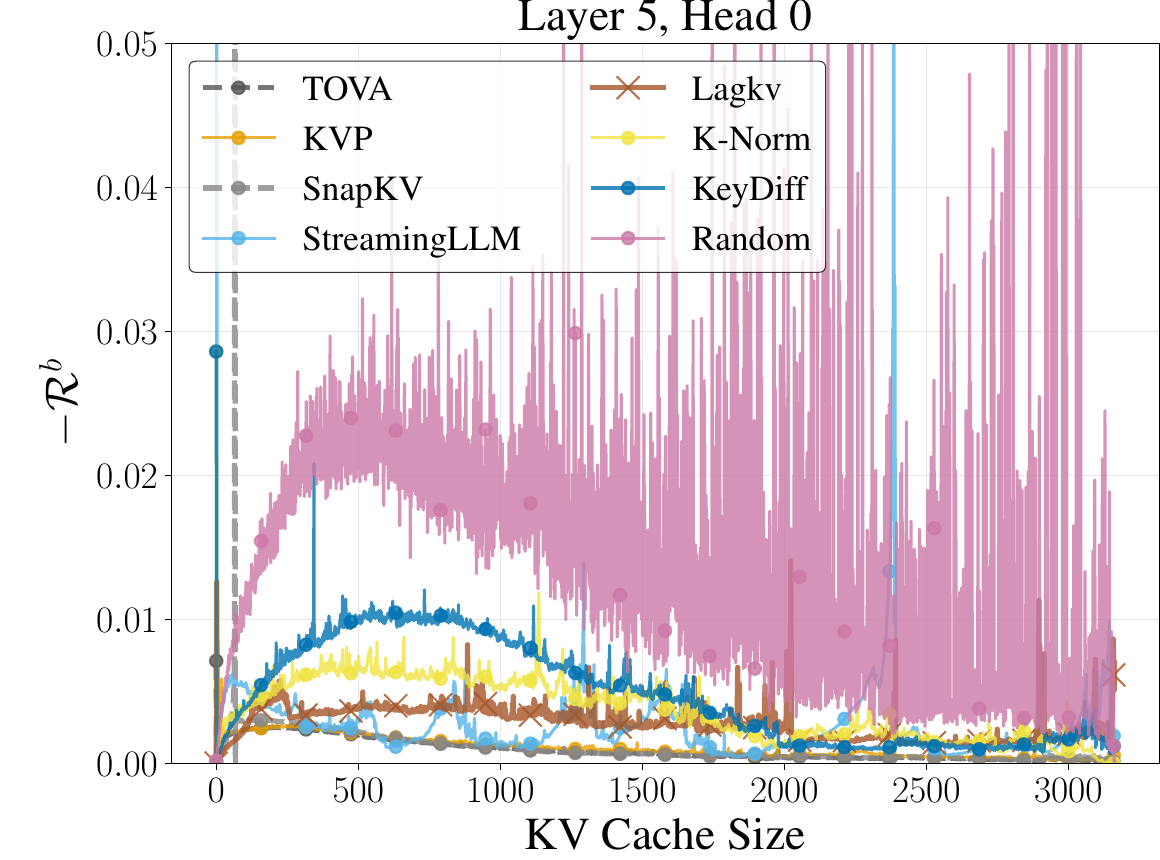} \hfill
   \includegraphics[width=.23\textwidth]{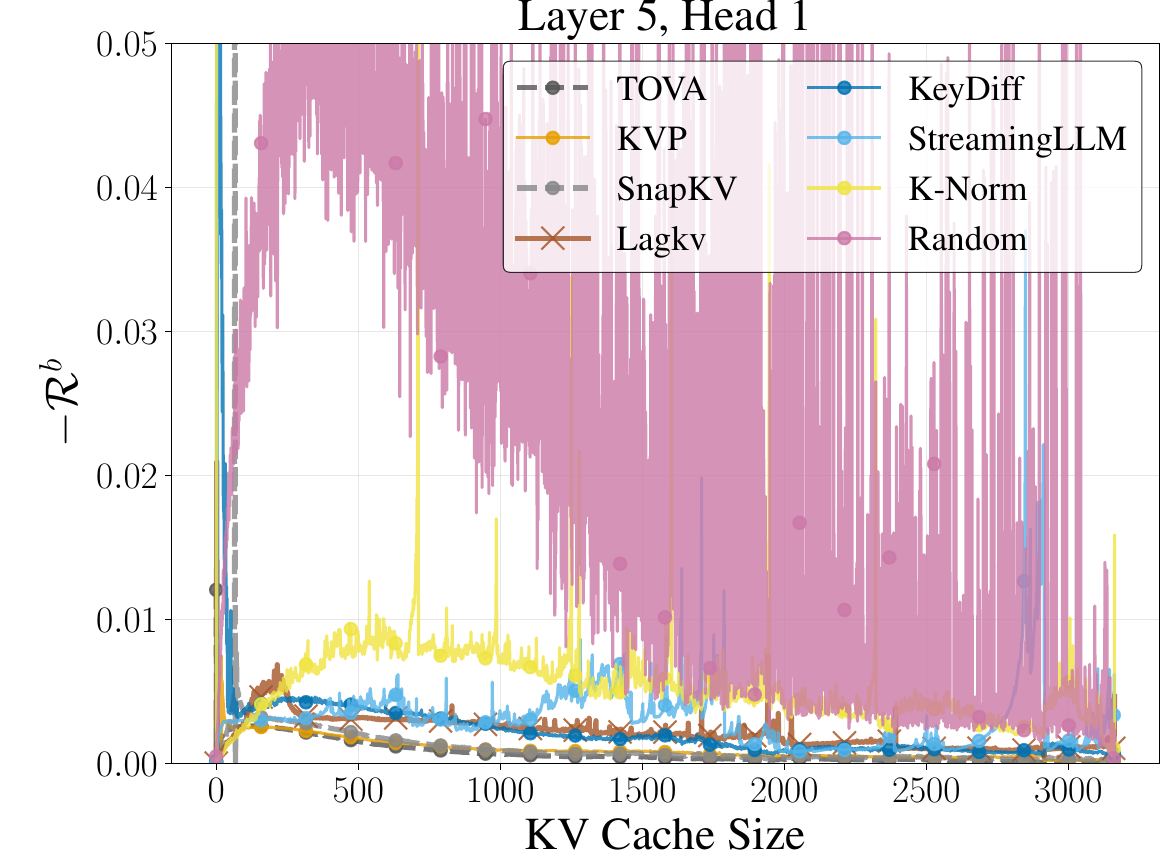} \hfill
   \includegraphics[width=.23\textwidth]{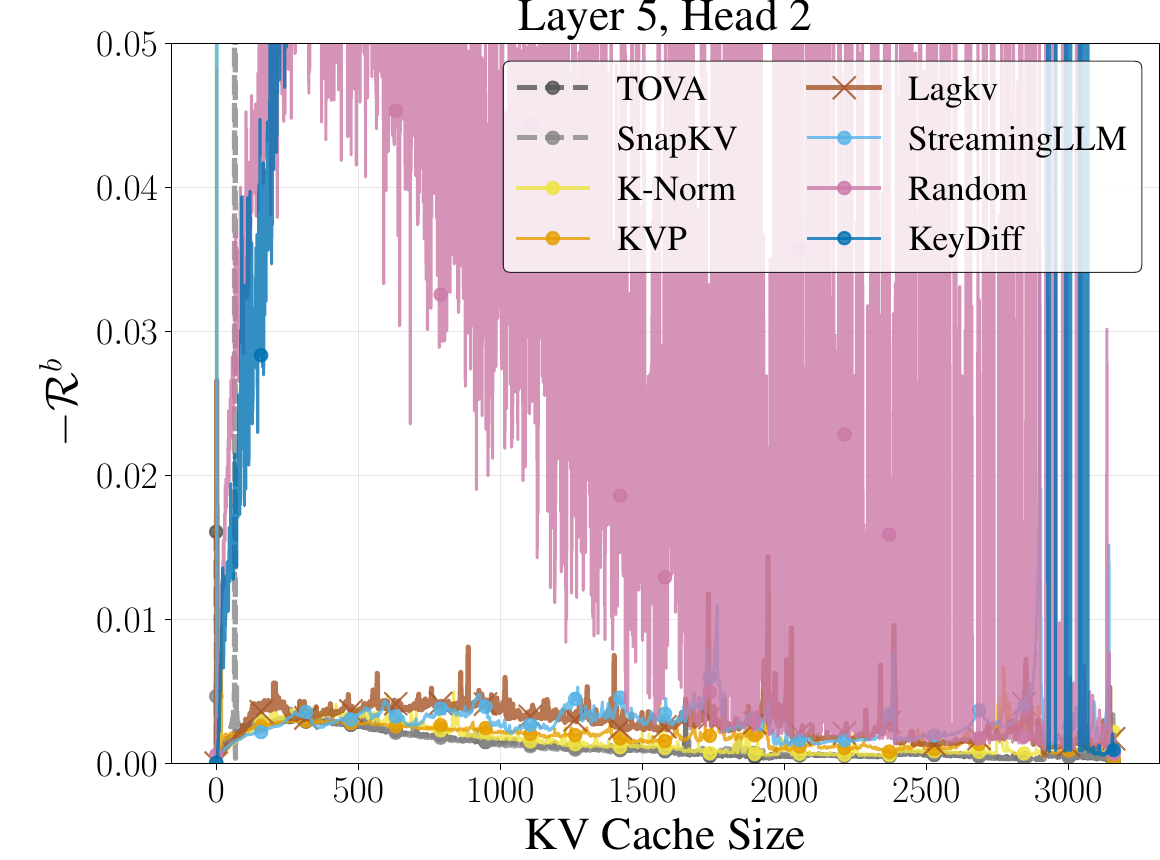} \hfill
   \includegraphics[width=.23\textwidth]{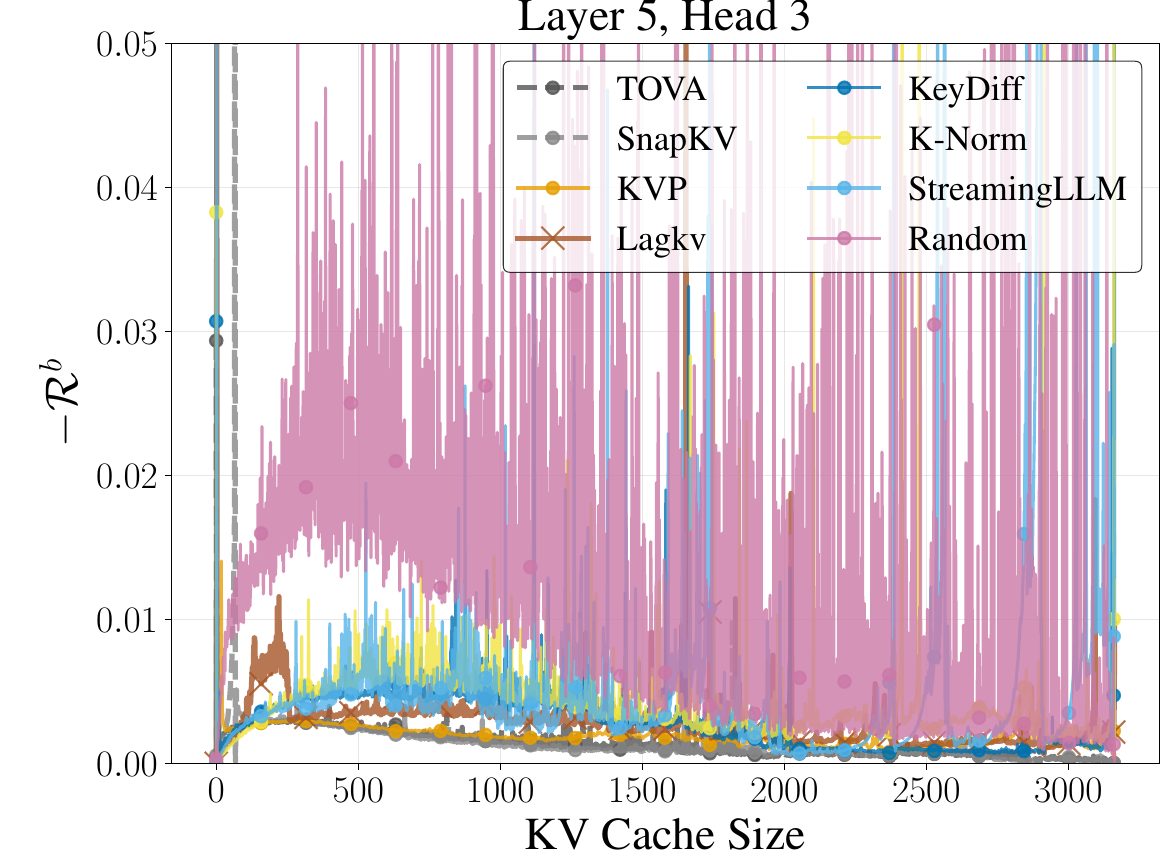} \\
   \includegraphics[width=.23\textwidth]{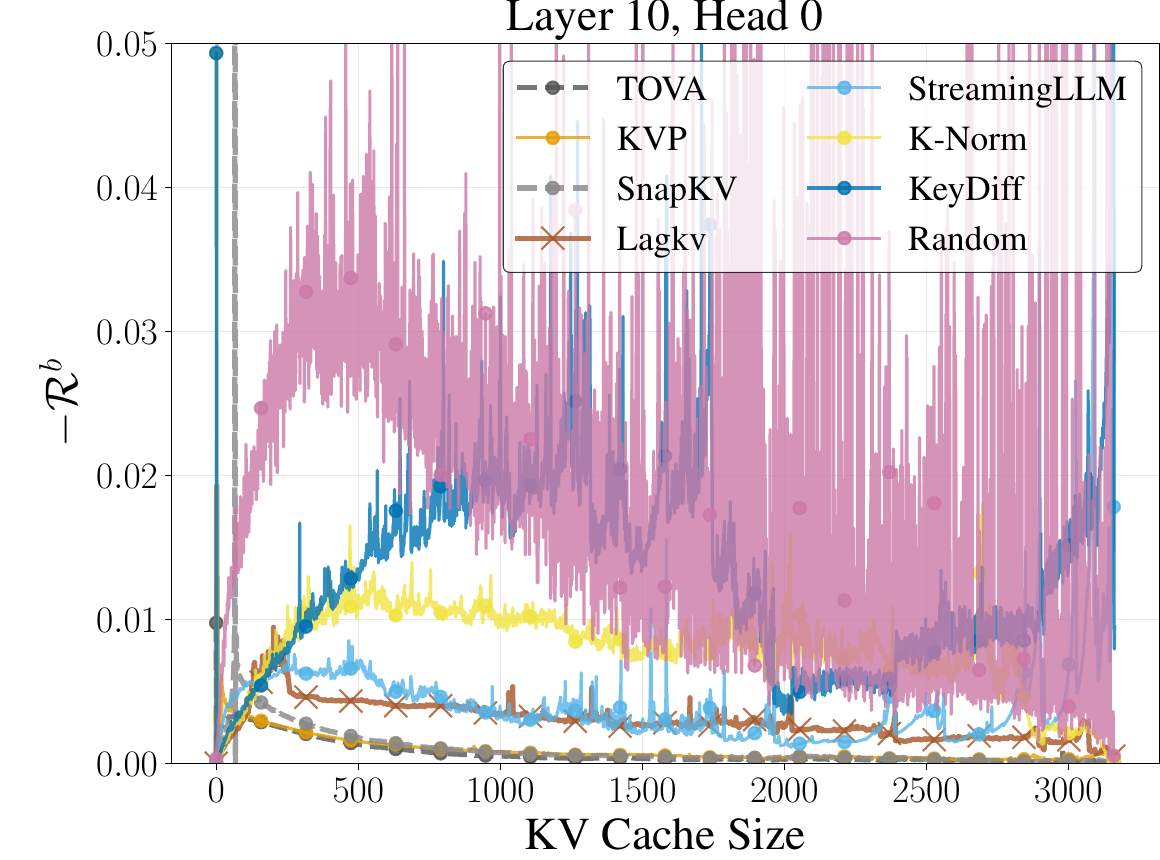} \hfill
   \includegraphics[width=.23\textwidth]{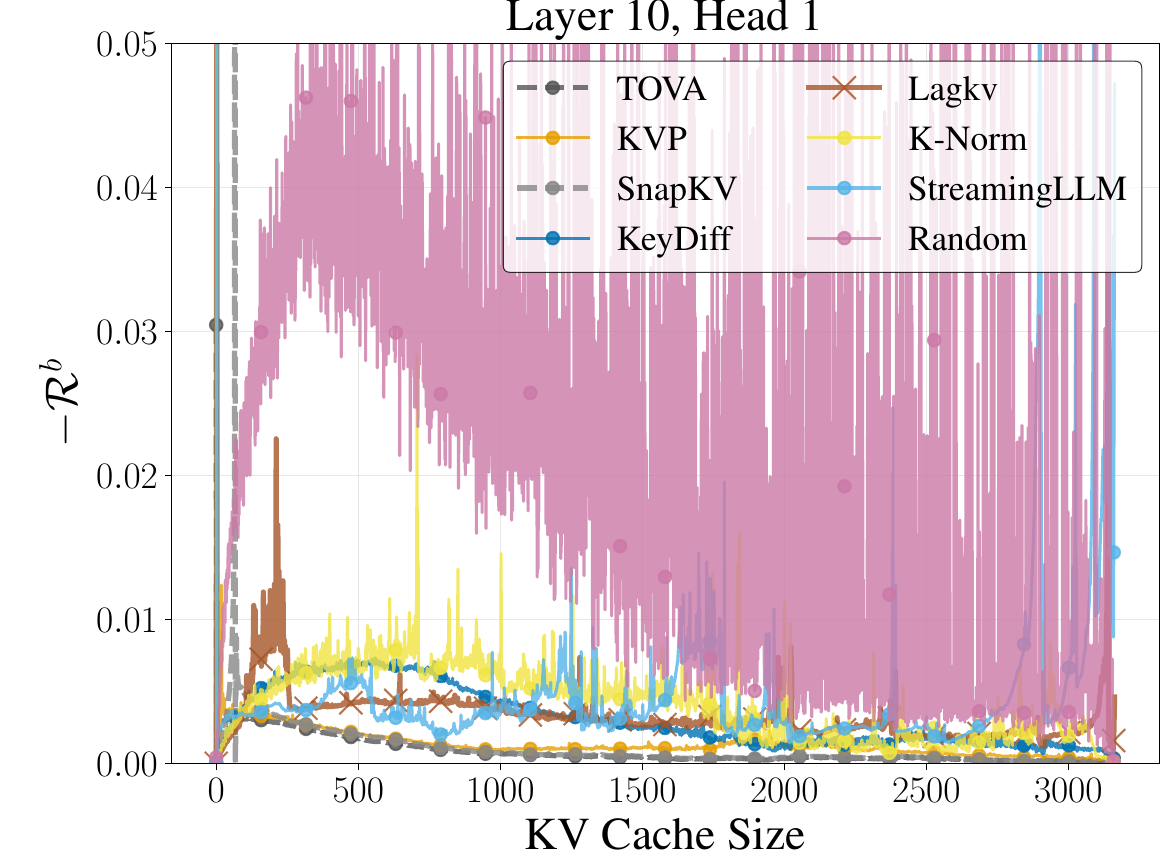} \hfill
   \includegraphics[width=.23\textwidth]{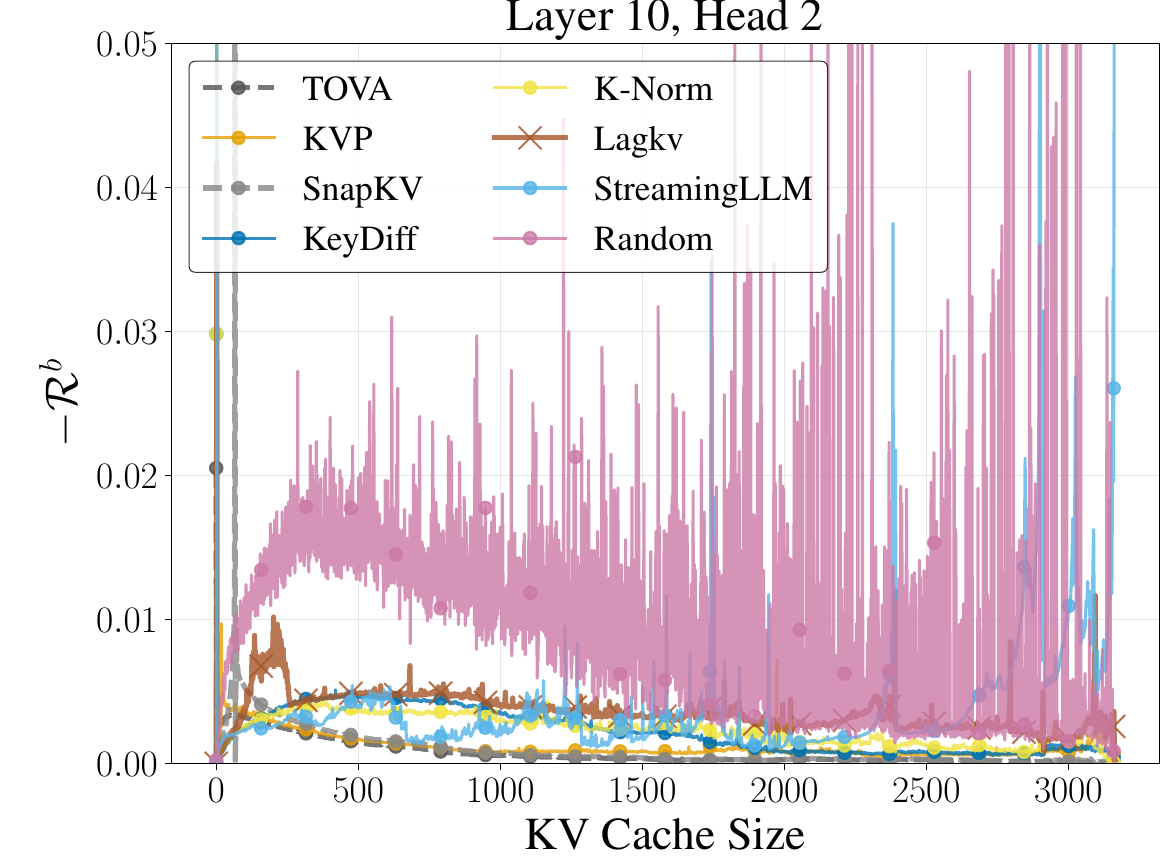} \hfill
   \includegraphics[width=.23\textwidth]{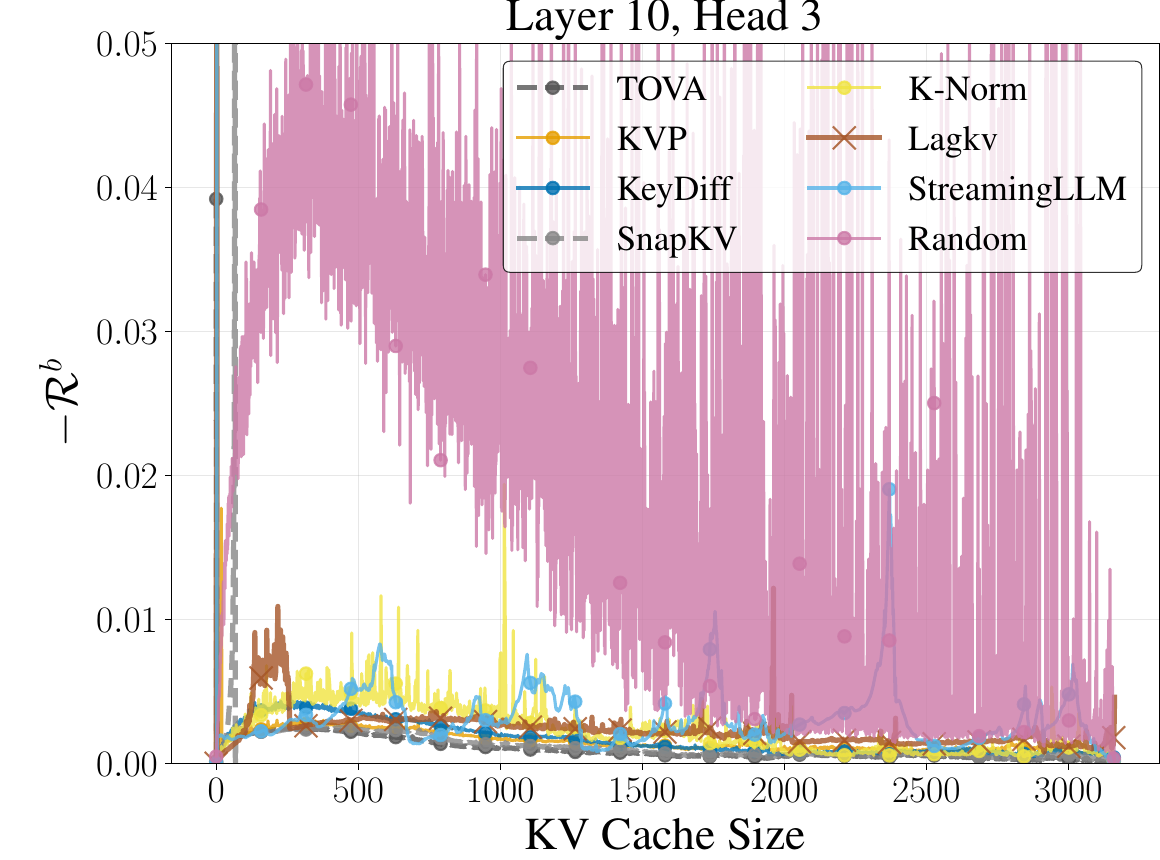} \\
   \includegraphics[width=.23\textwidth]{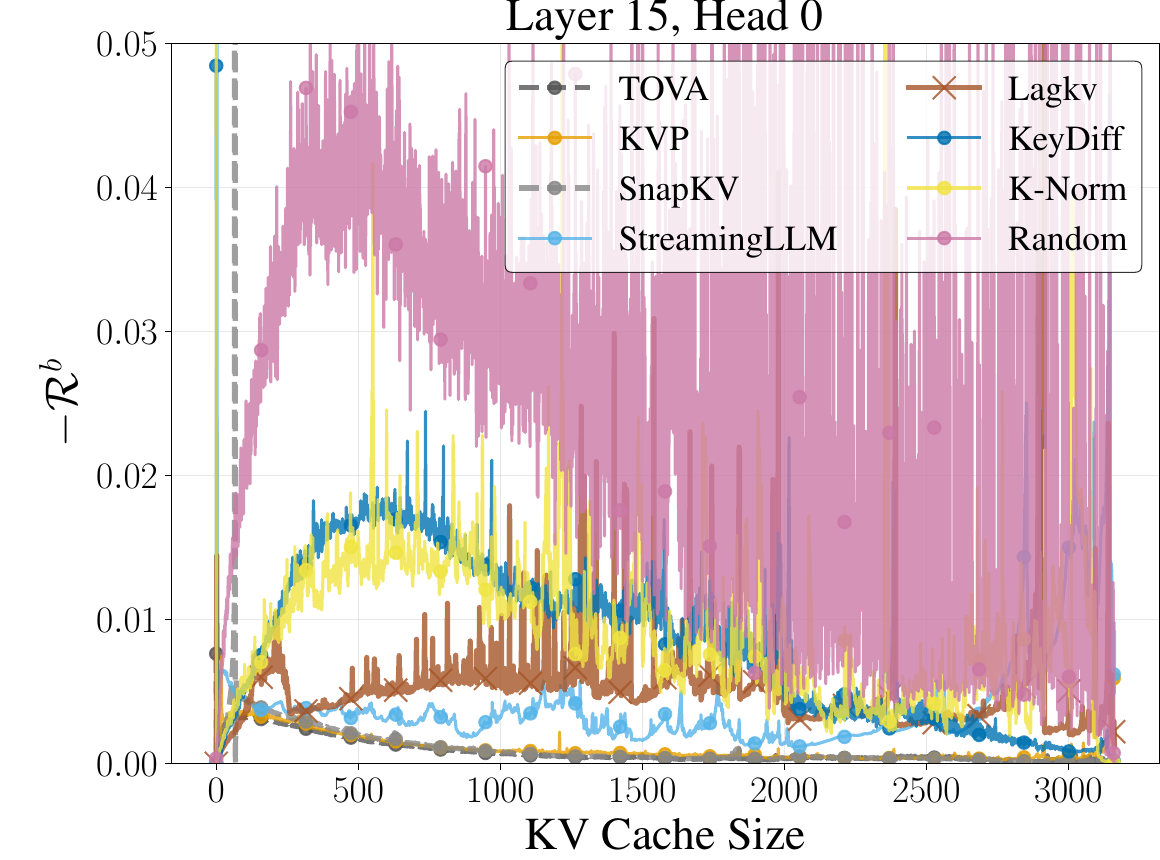} \hfill
   \includegraphics[width=.23\textwidth]{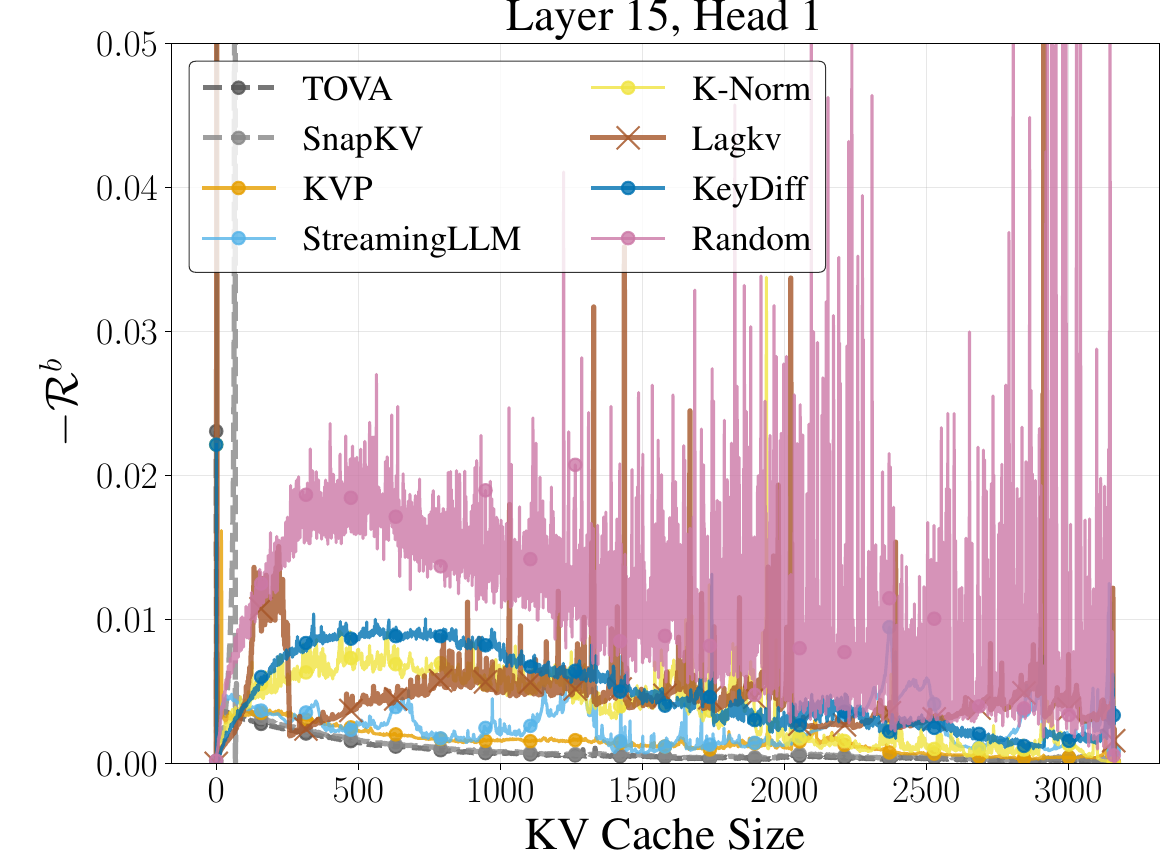} \hfill
   \includegraphics[width=.23\textwidth]{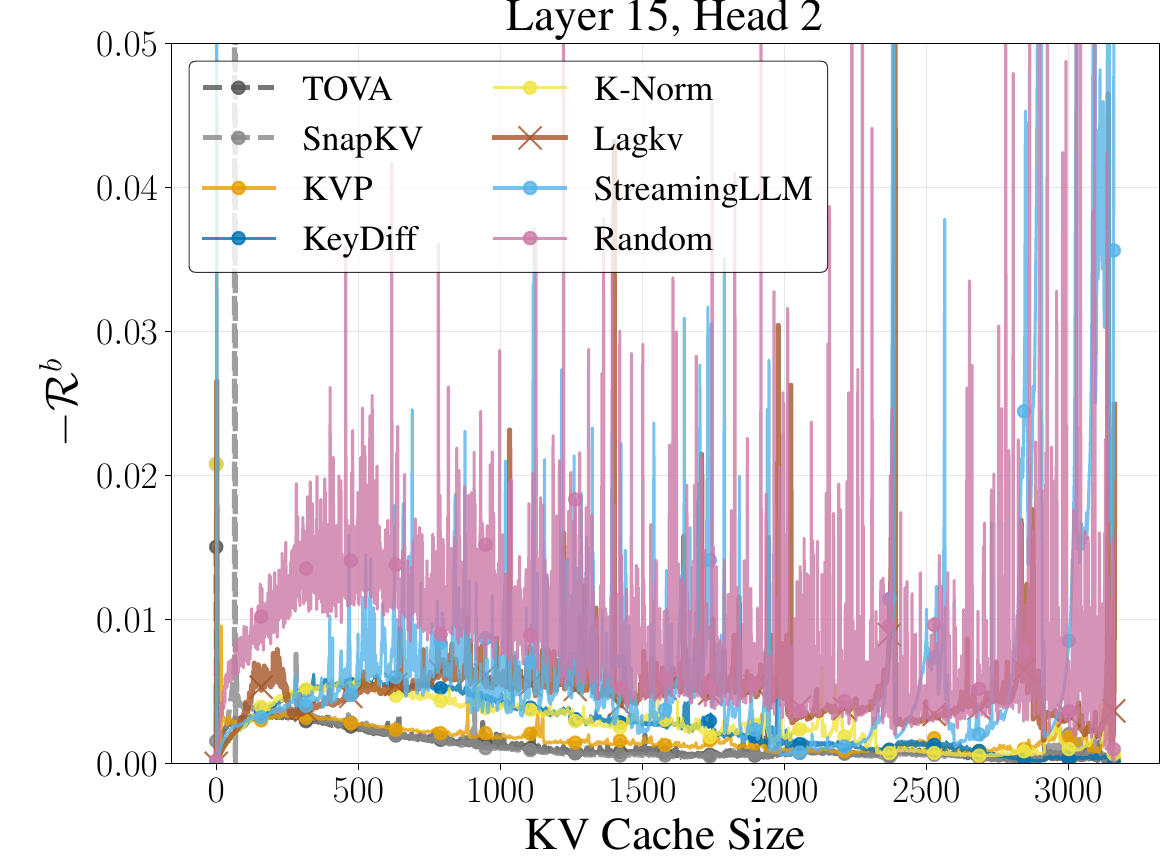} \hfill
   \includegraphics[width=.23\textwidth]{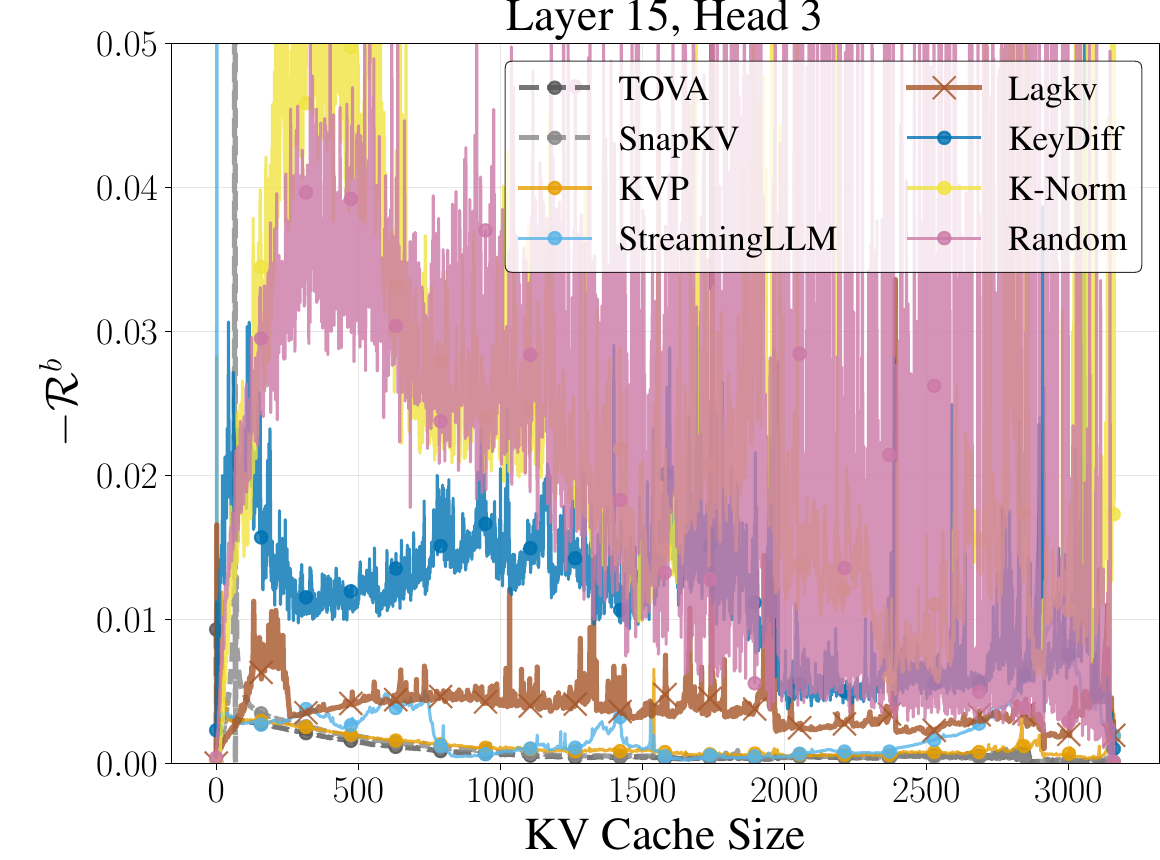} \\
   \includegraphics[width=.23\textwidth]{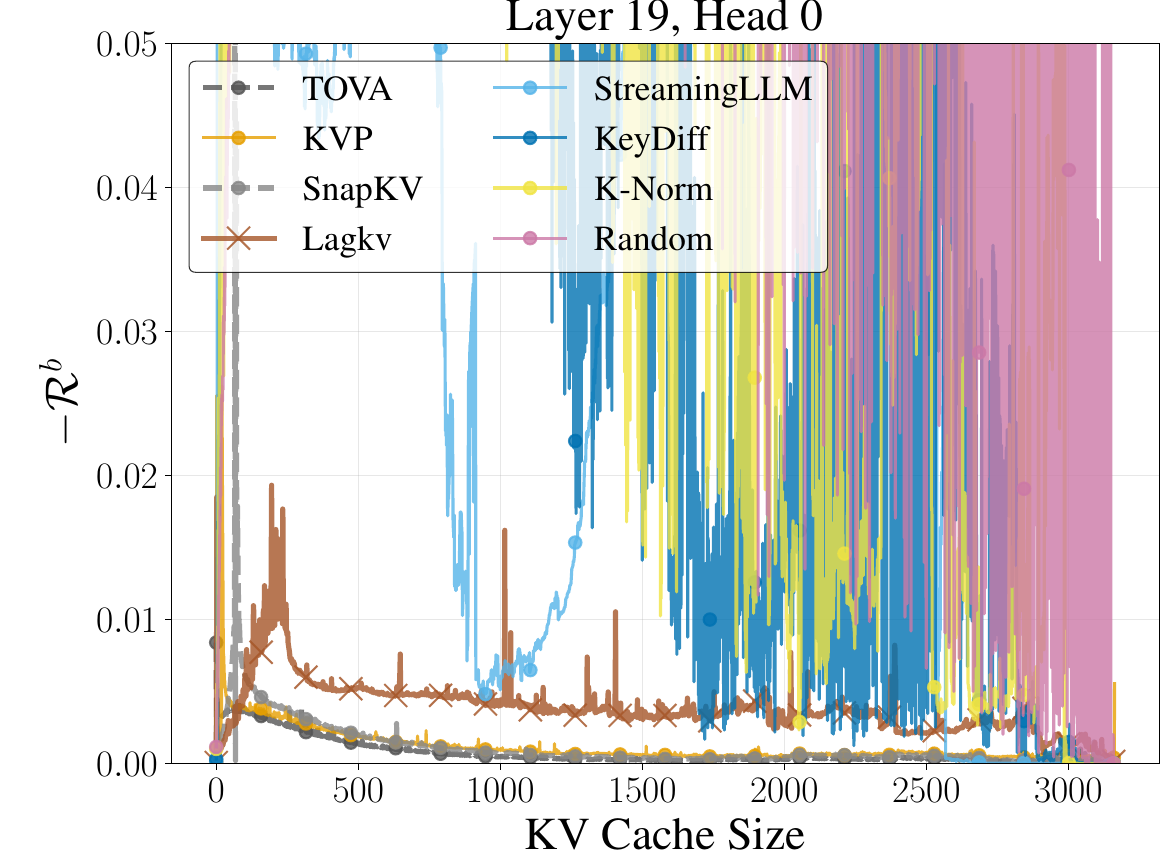} \hfill
   \includegraphics[width=.23\textwidth]{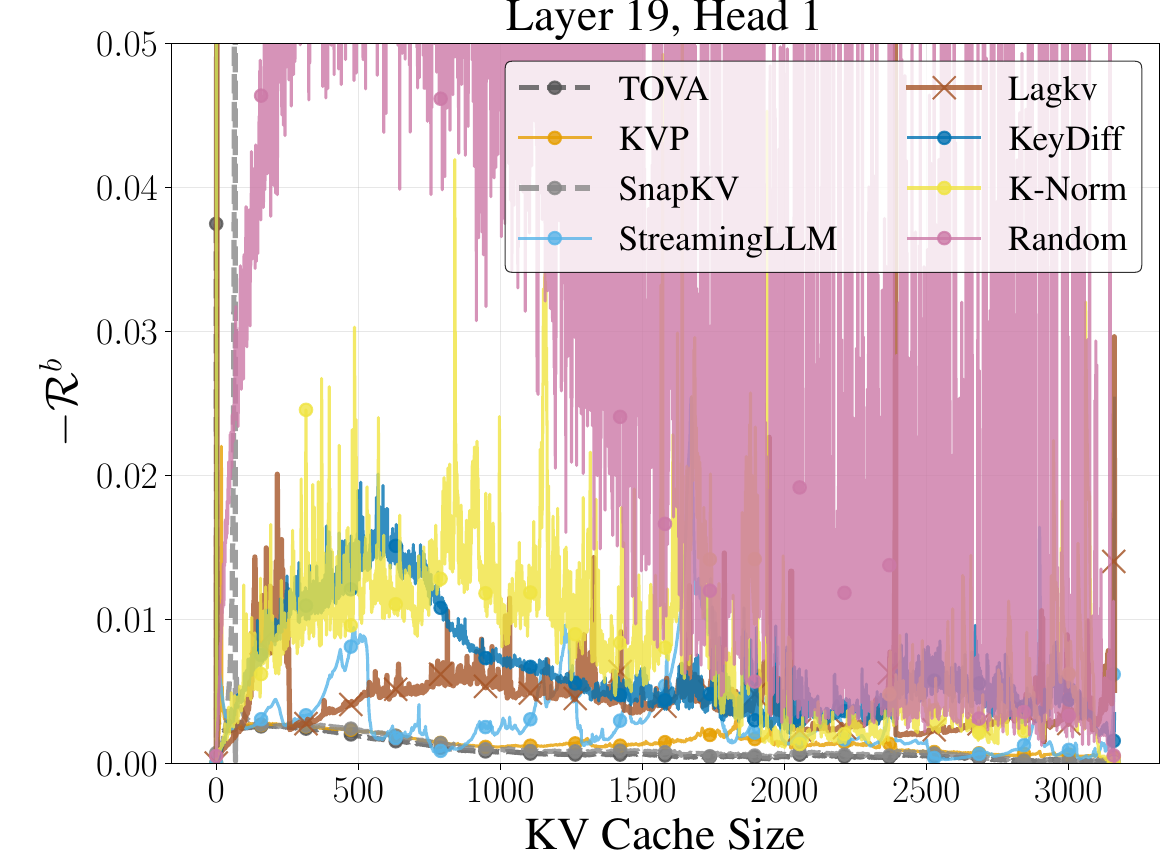} \hfill
   \includegraphics[width=.23\textwidth]{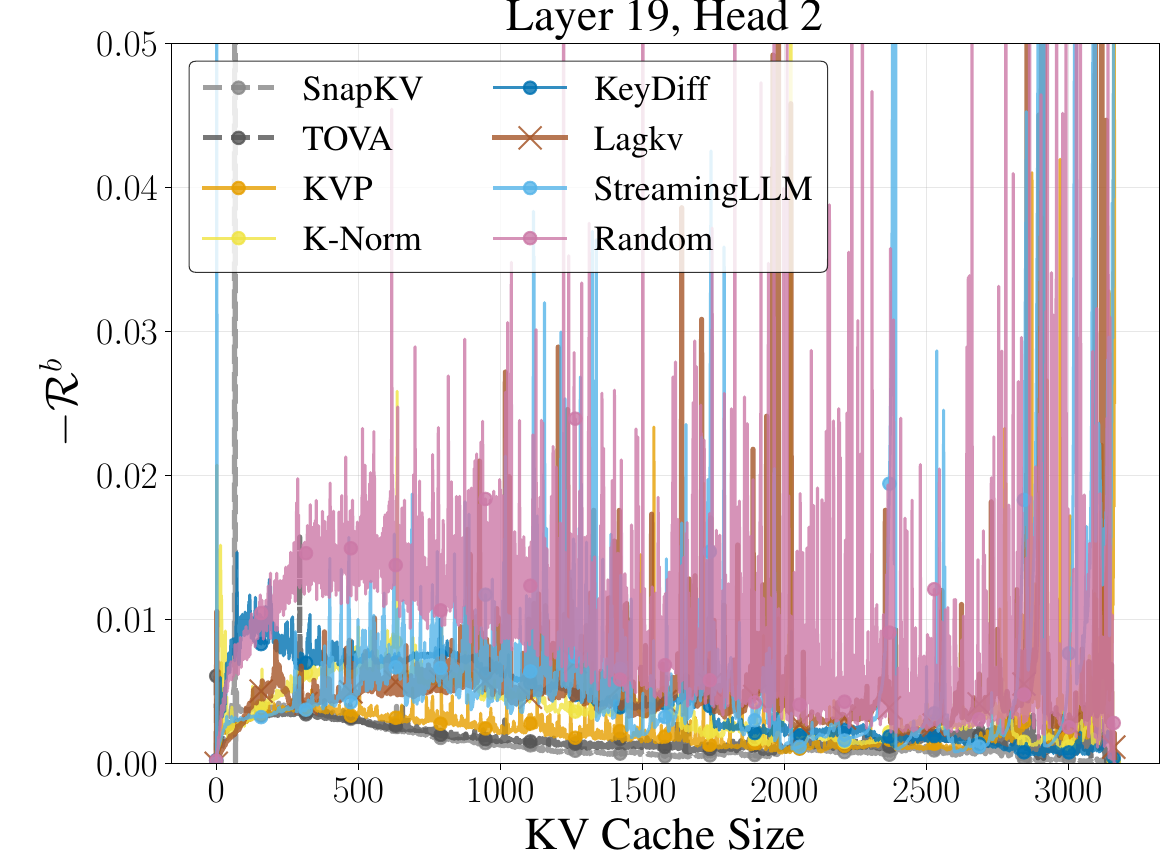} \hfill
   \includegraphics[width=.23\textwidth]{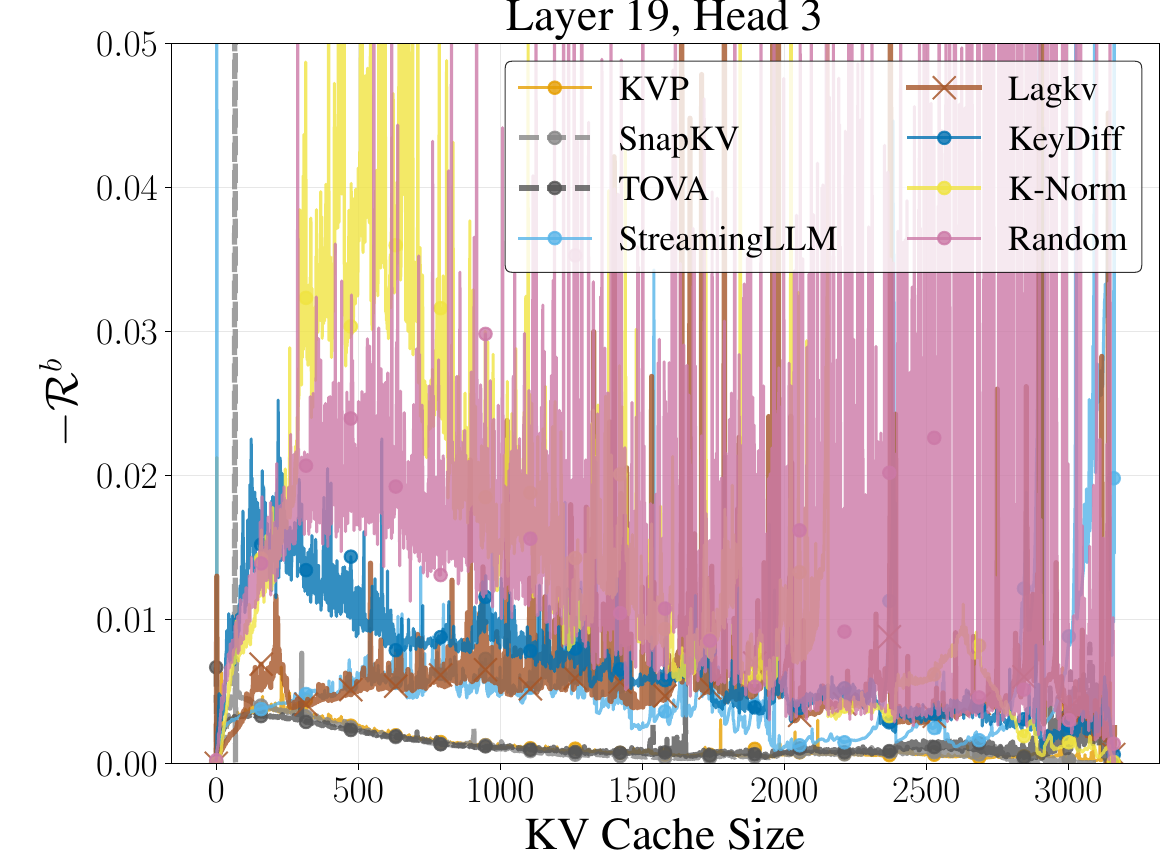} \\
   \includegraphics[width=.23\textwidth]{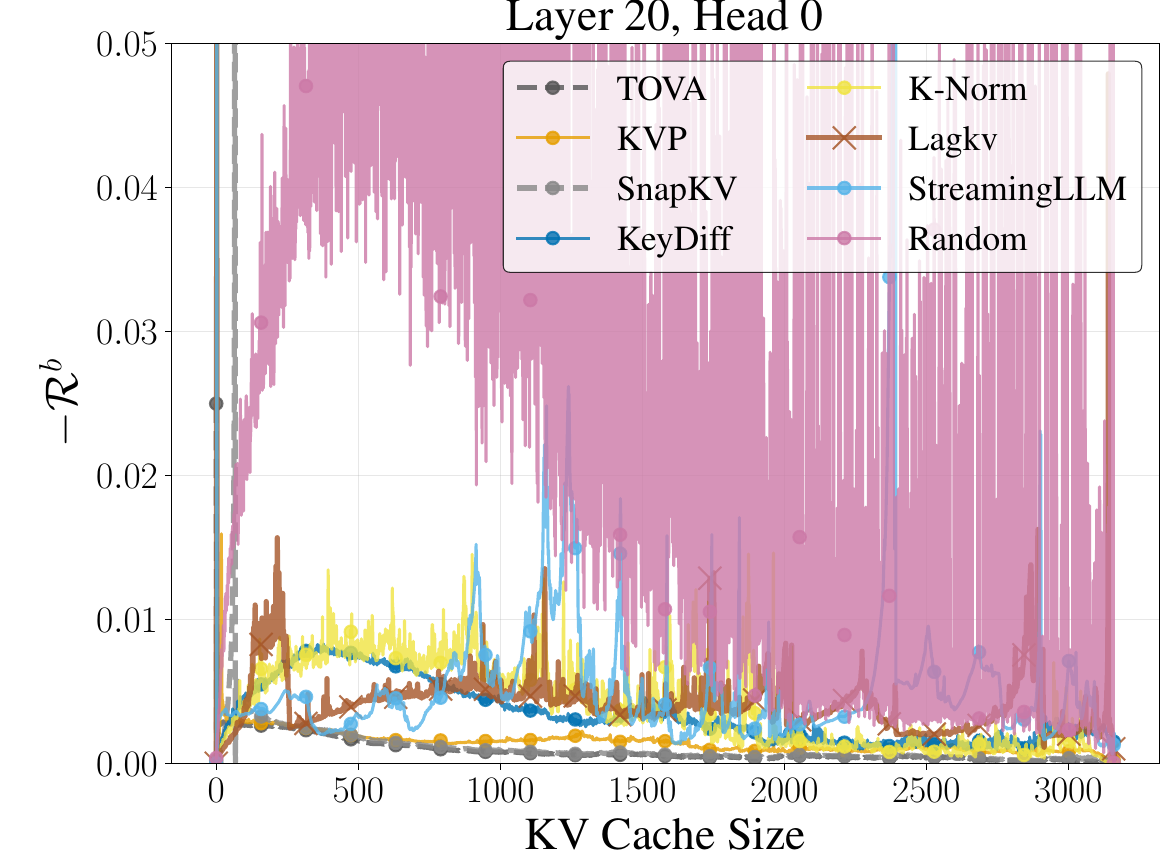} \hfill
   \includegraphics[width=.23\textwidth]{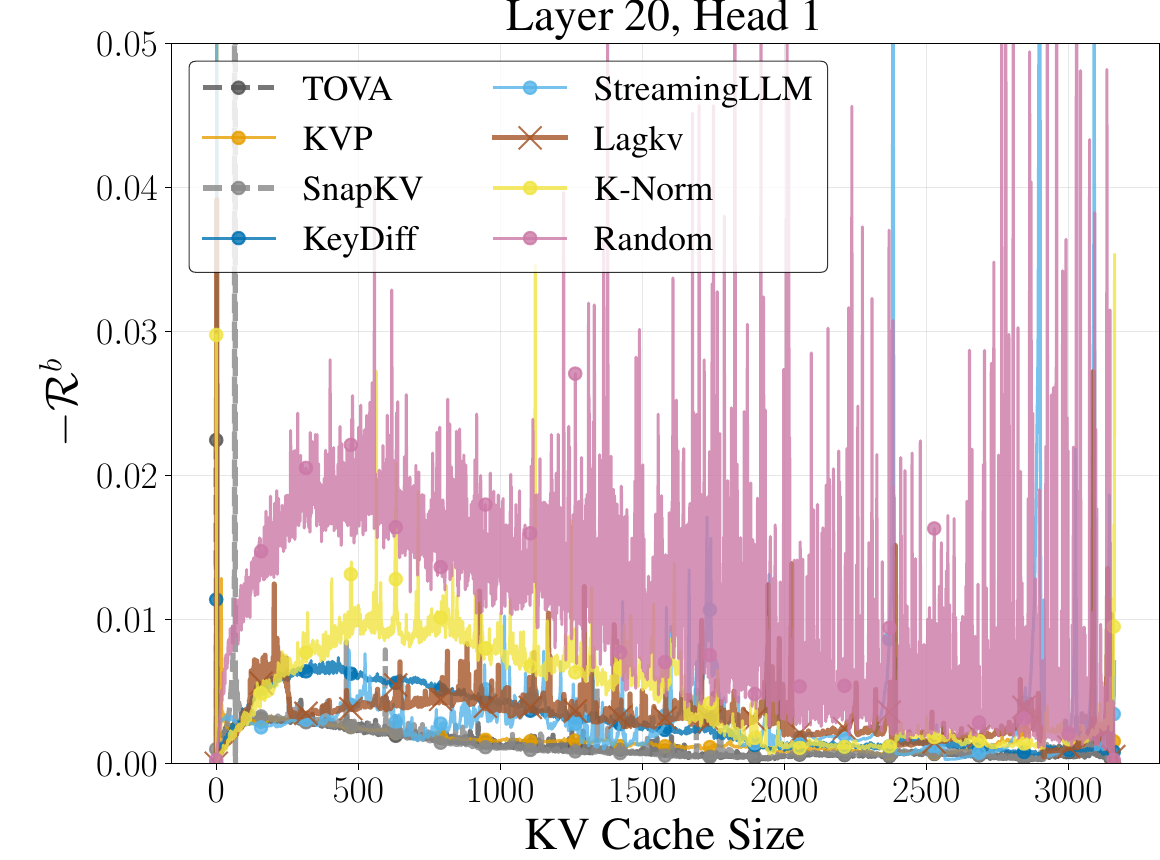} \hfill
   \includegraphics[width=.23\textwidth]{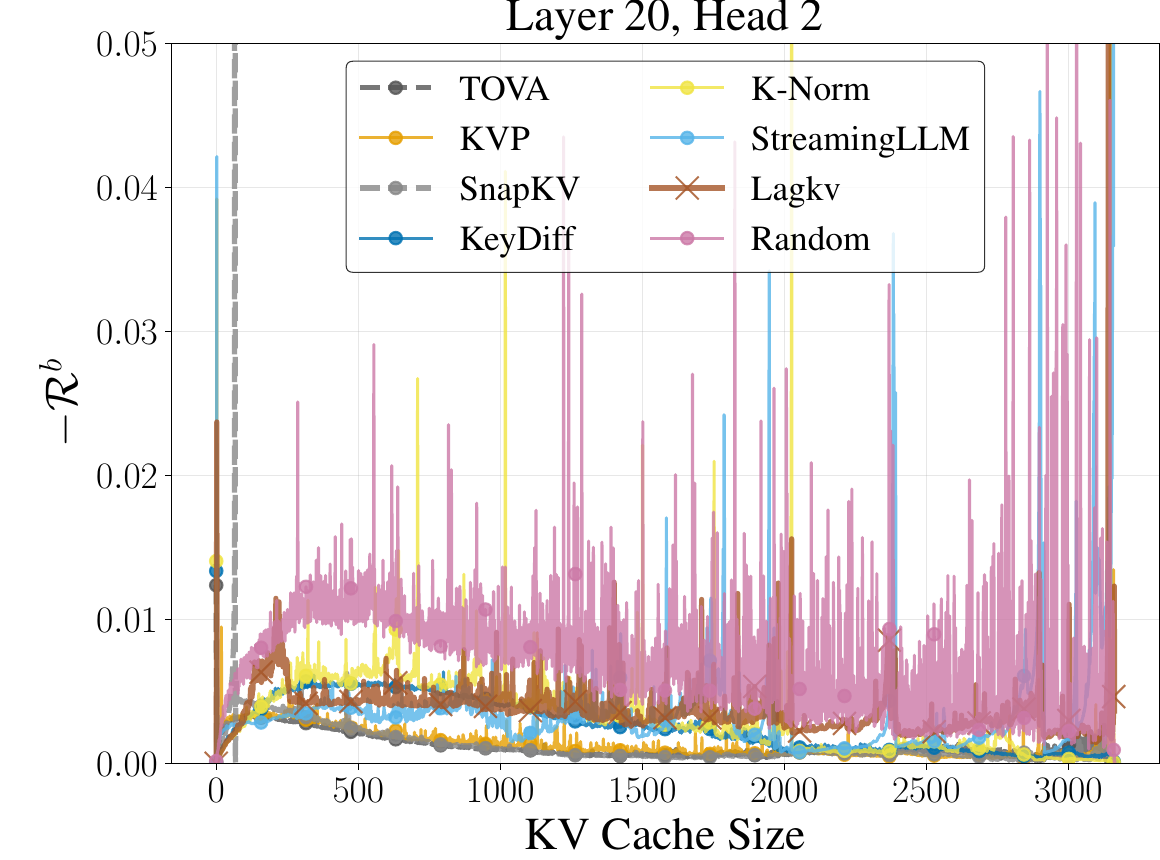} \hfill
   \includegraphics[width=.23\textwidth]{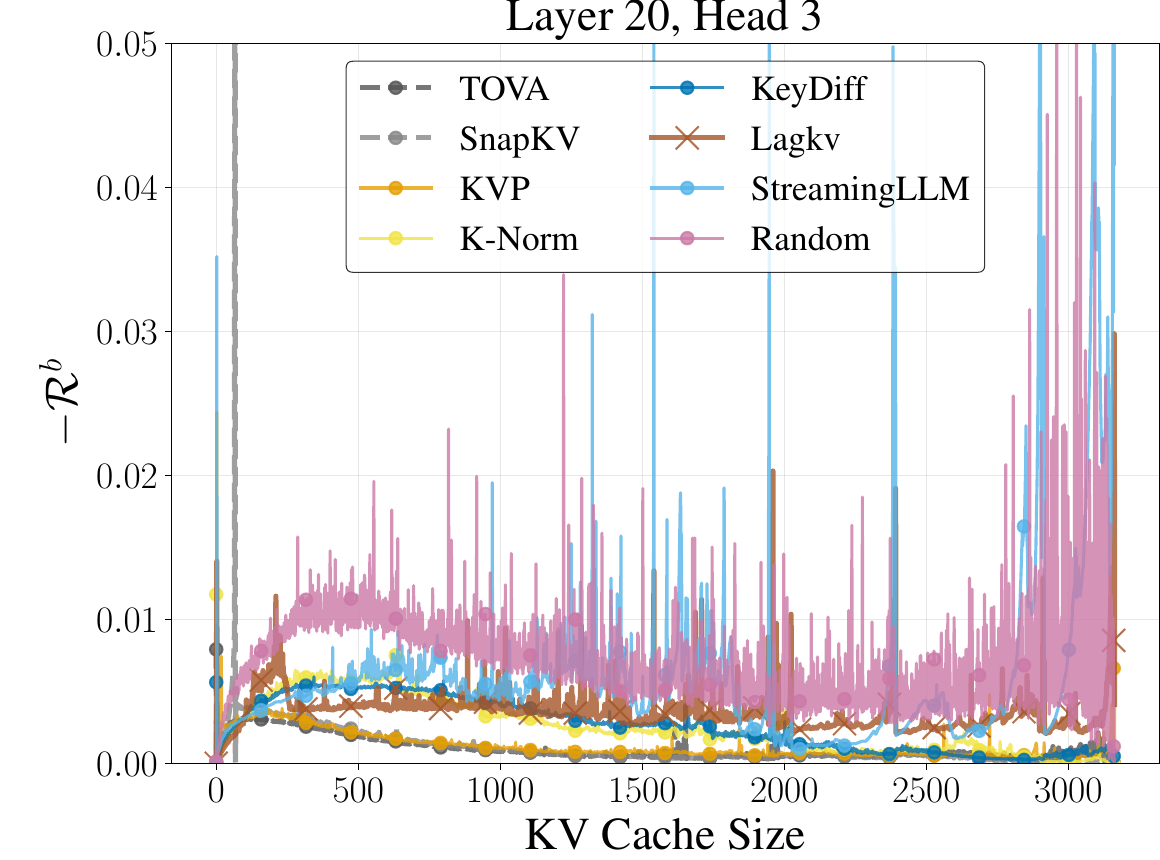} \\
   \includegraphics[width=.23\textwidth]{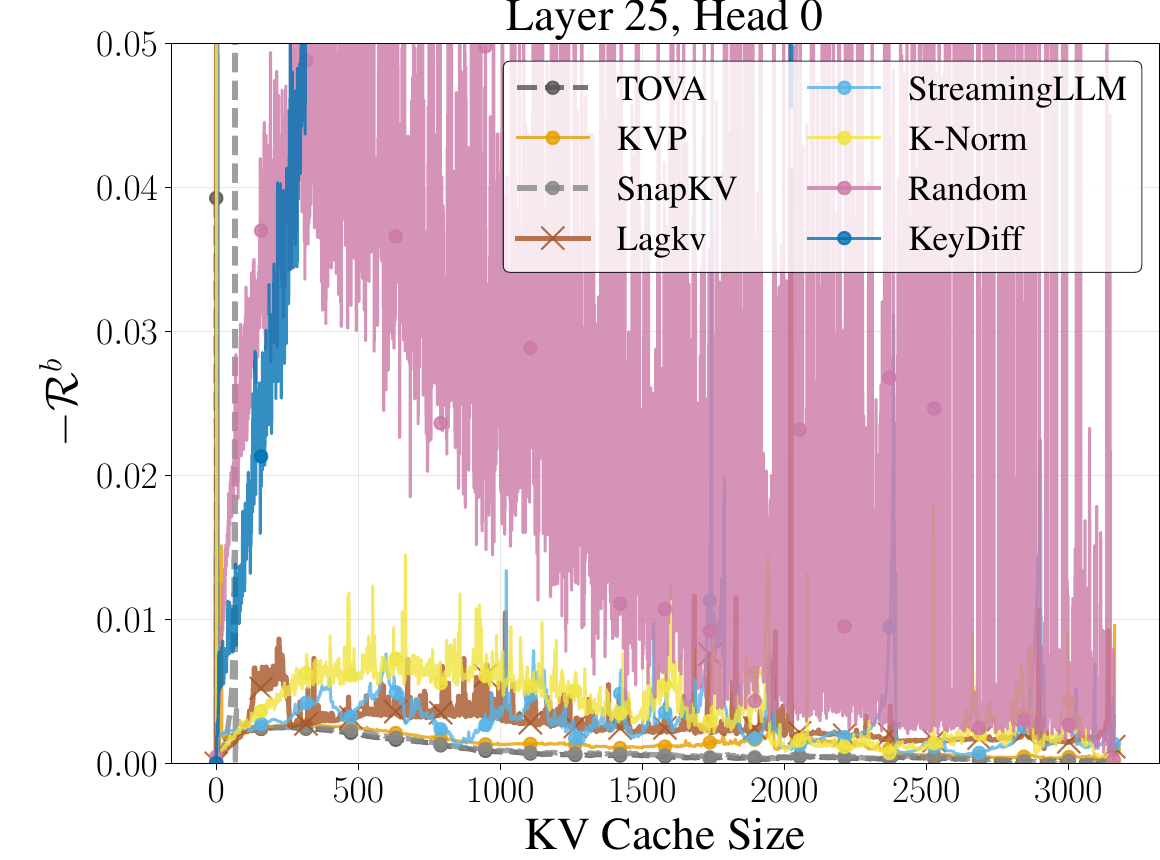} \hfill
   \includegraphics[width=.23\textwidth]{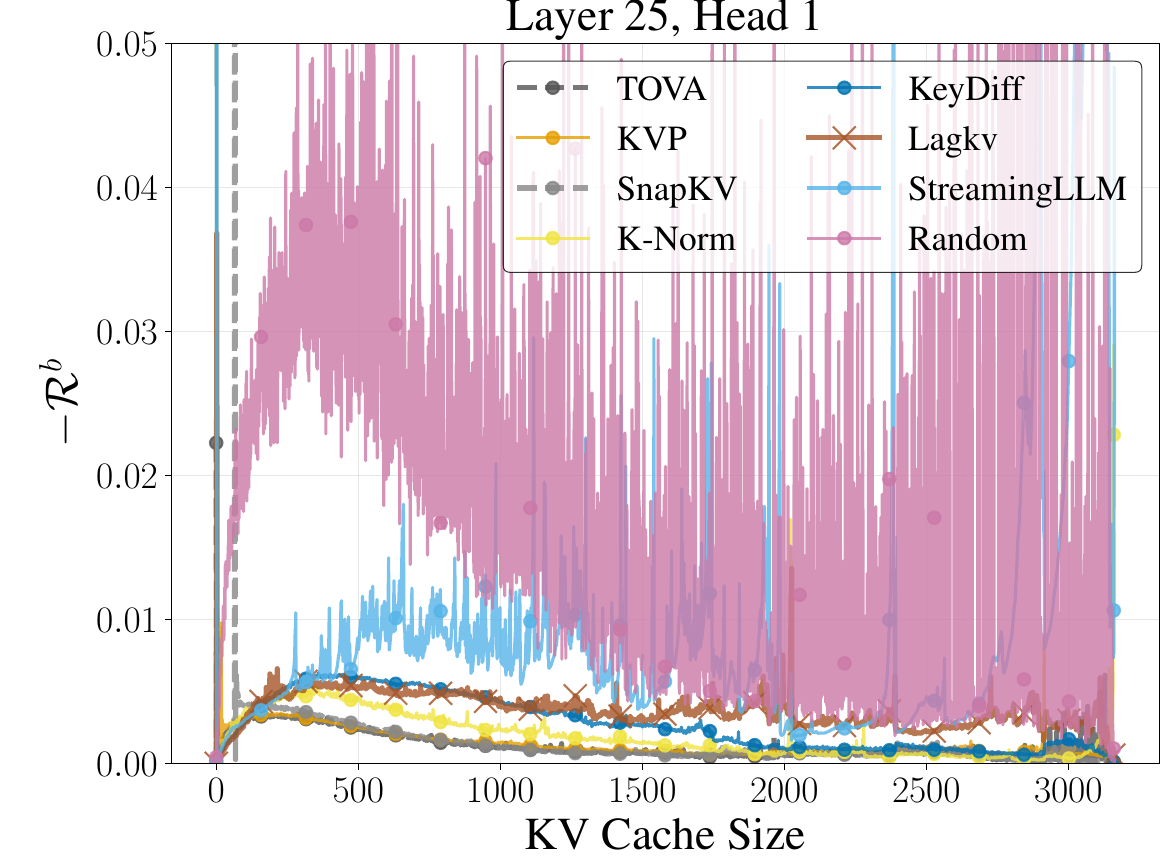} \hfill
   \includegraphics[width=.23\textwidth]{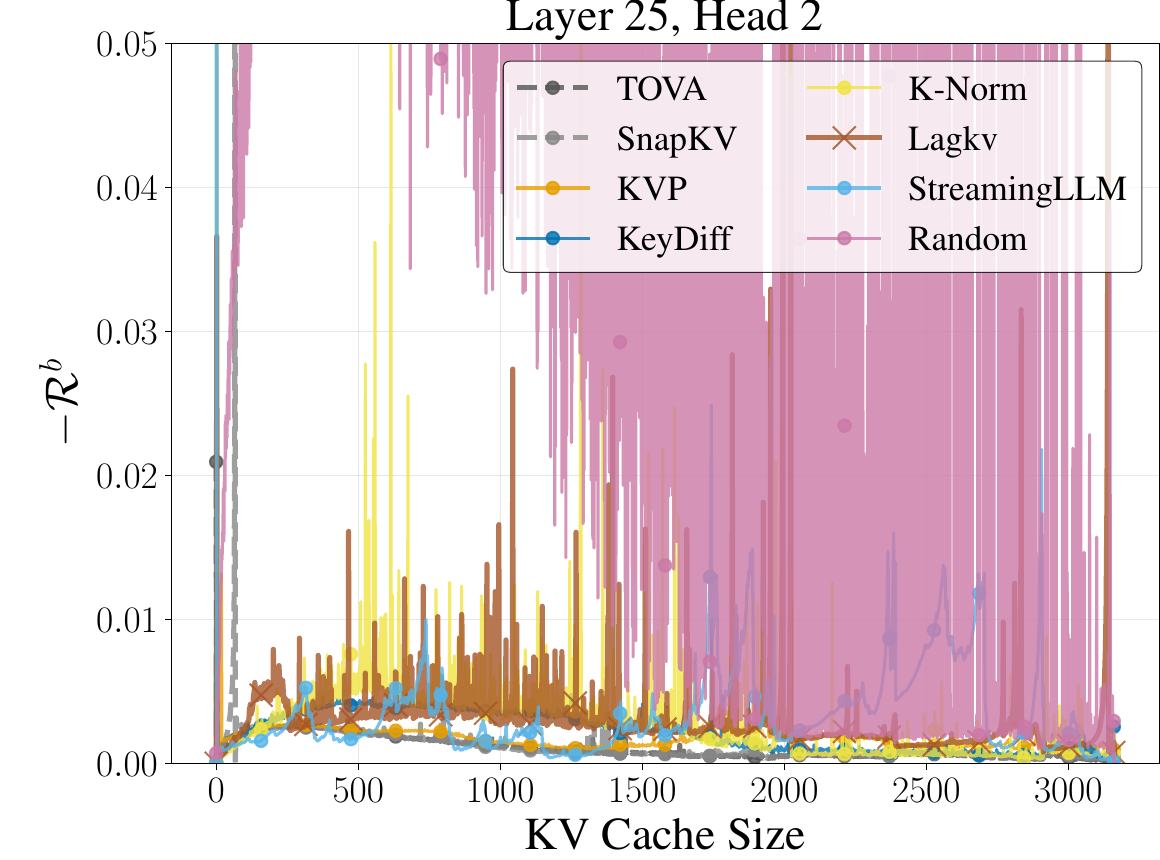} \hfill
   \includegraphics[width=.23\textwidth]{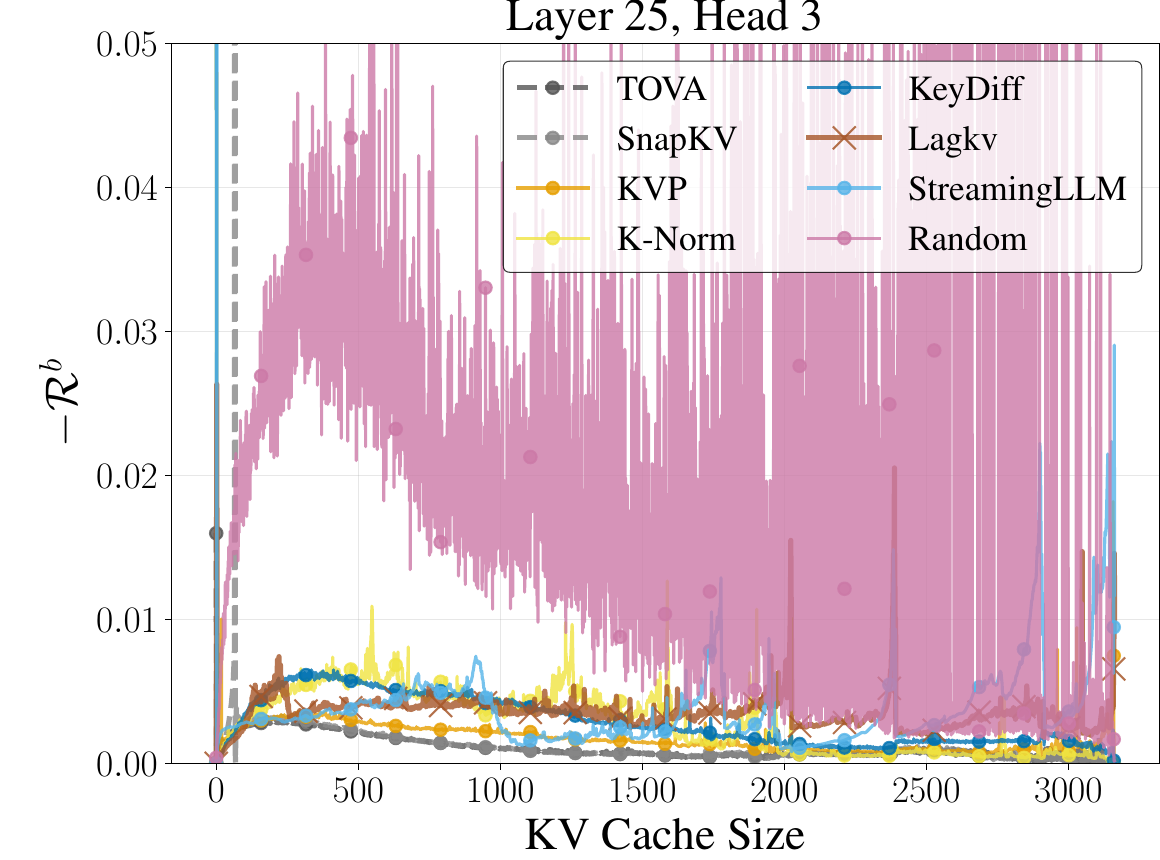} \\
   \caption{Cost ($-\mathcal{R}^b$) for all strategies across a selection of layers (rows) and all available heads (columns) on the OASST2 test set. The plots show that the relative performance of different strategies varies significantly across heads, highlighting the benefit of learning specialized per-head policies. Lower is better.}
   \label{appendix:fig:oastt2_rewards_headsvariance}
\end{figure}

\begin{figure}[htbp]
   \centering
   \includegraphics[width=.23\textwidth]{figures/iclr/oasst2/rewards/oasst2_reward_layer_0_head_0.pdf} \hfill
   \includegraphics[width=.23\textwidth]{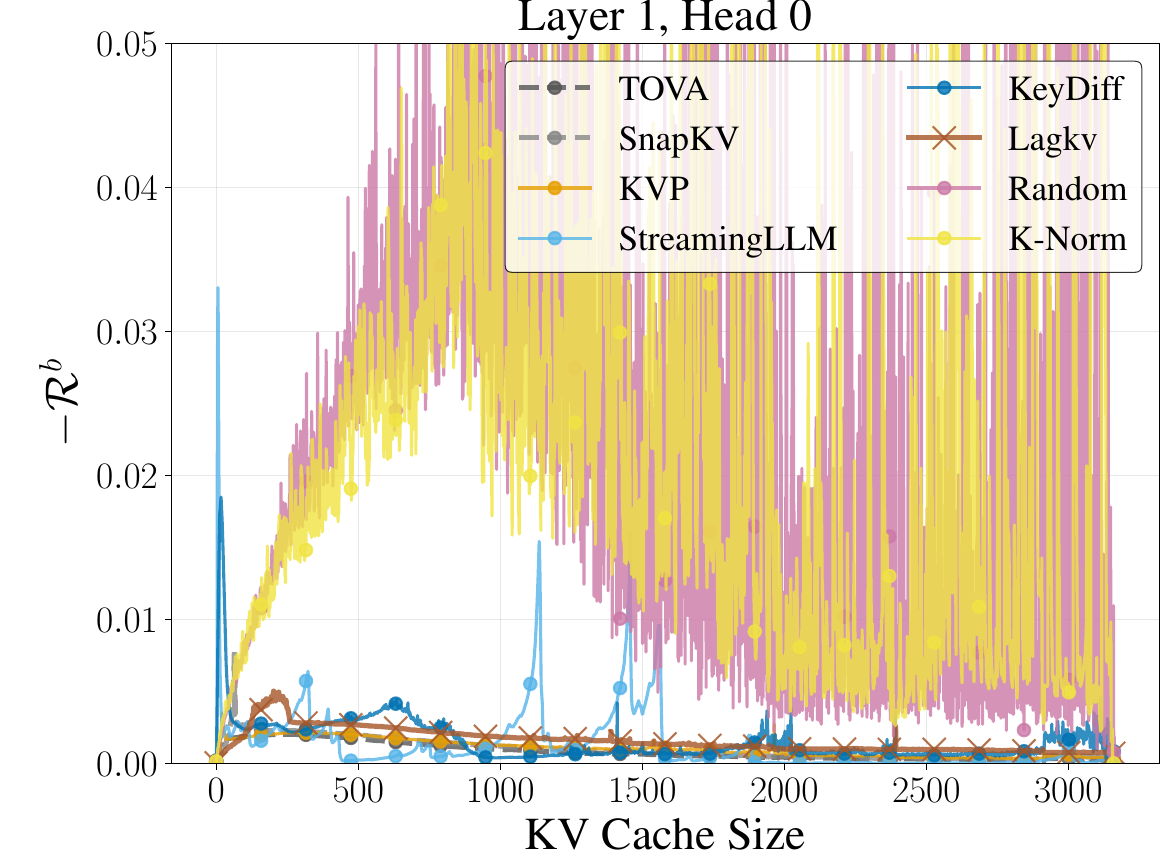} \hfill
   \includegraphics[width=.23\textwidth]{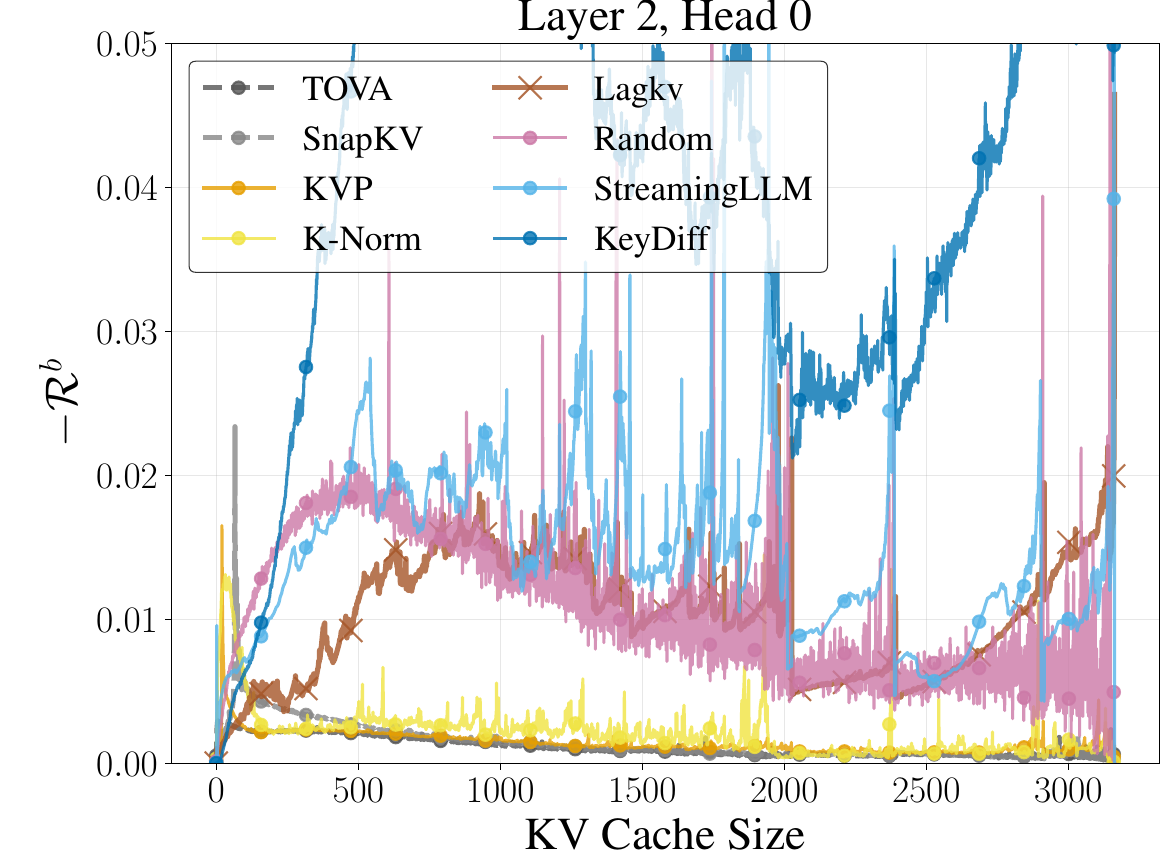} \hfill
   \includegraphics[width=.23\textwidth]{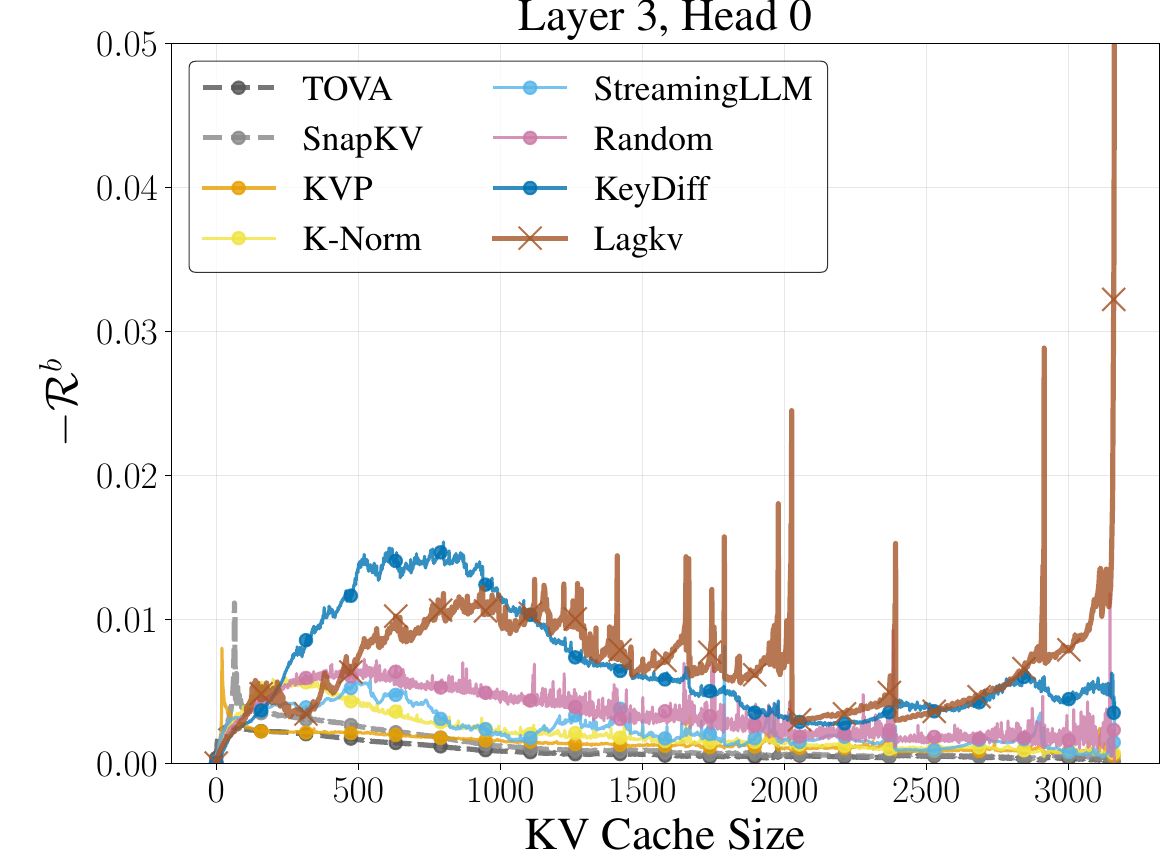} \hfill \\
   \includegraphics[width=.23\textwidth]{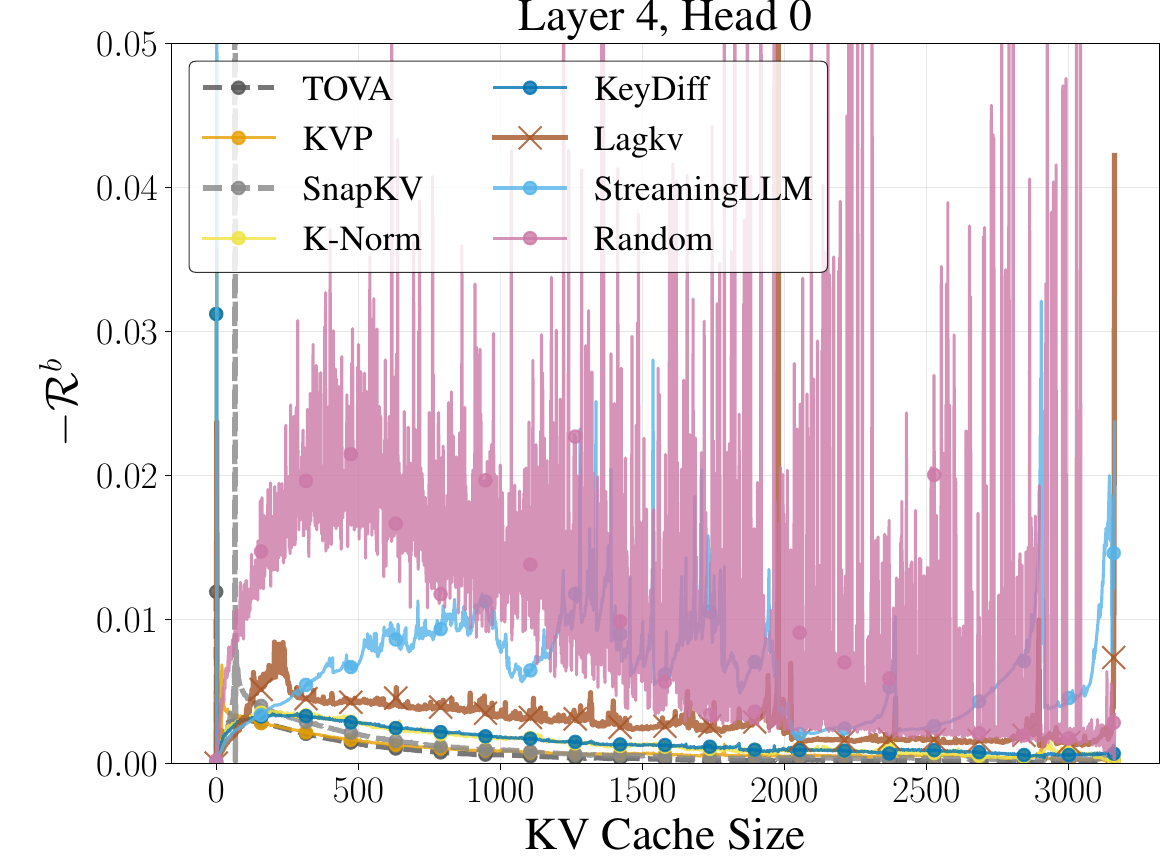} \hfill
   \includegraphics[width=.23\textwidth]{figures/iclr/oasst2/rewards/oasst2_reward_layer_5_head_0.pdf} \hfill
   \includegraphics[width=.23\textwidth]{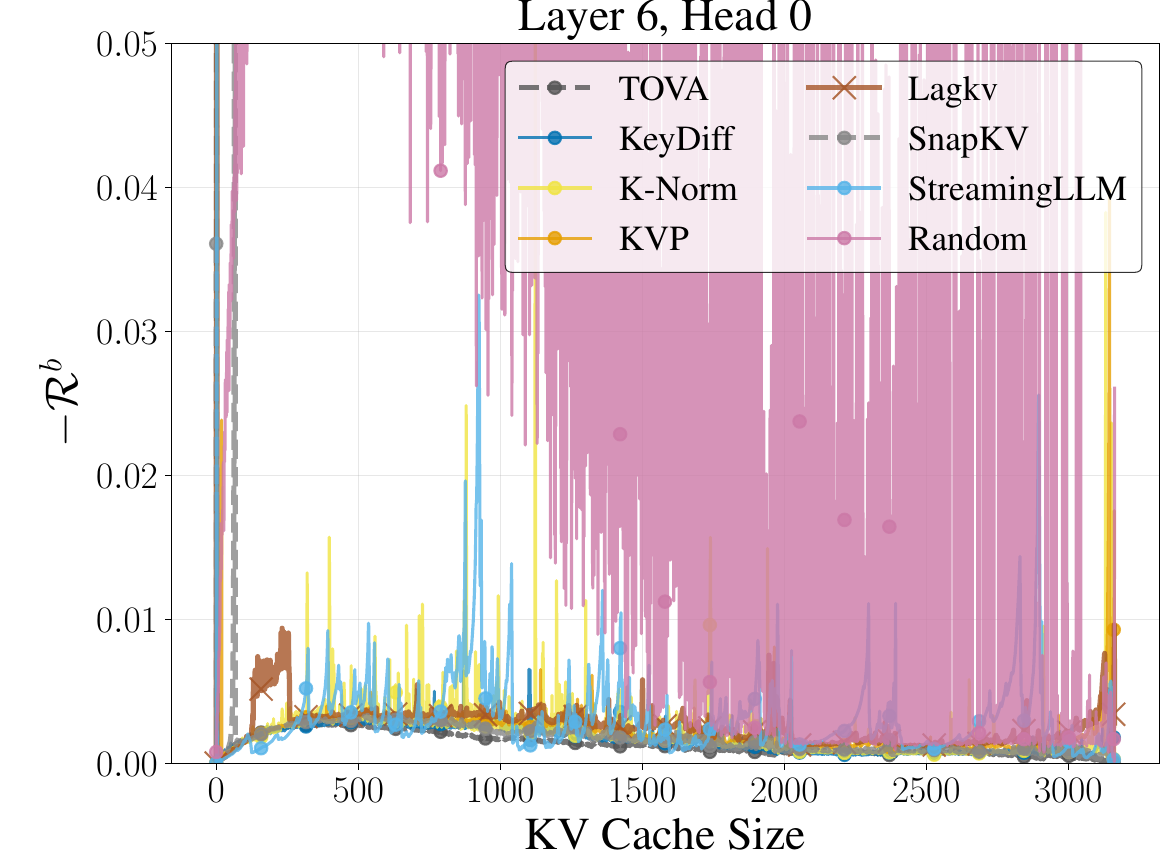} \hfill
   \includegraphics[width=.23\textwidth]{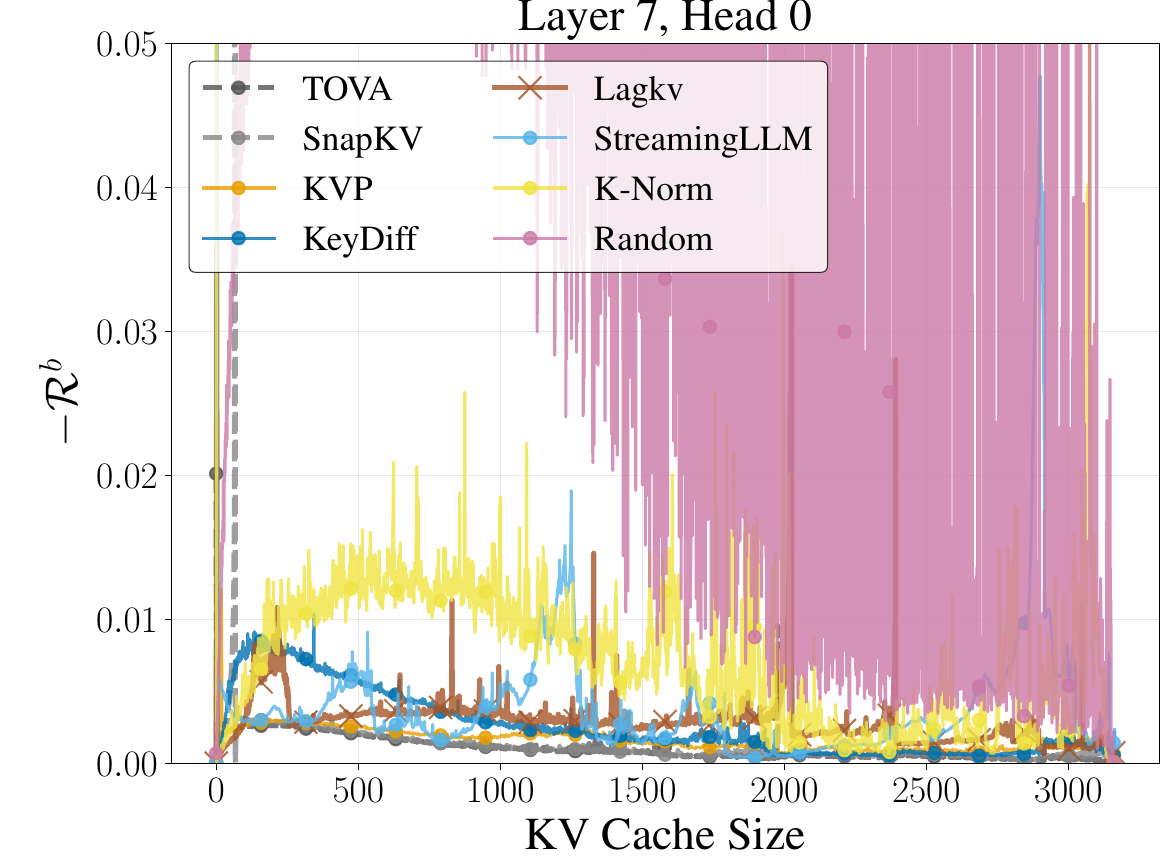} \hfill \\
   \includegraphics[width=.23\textwidth]{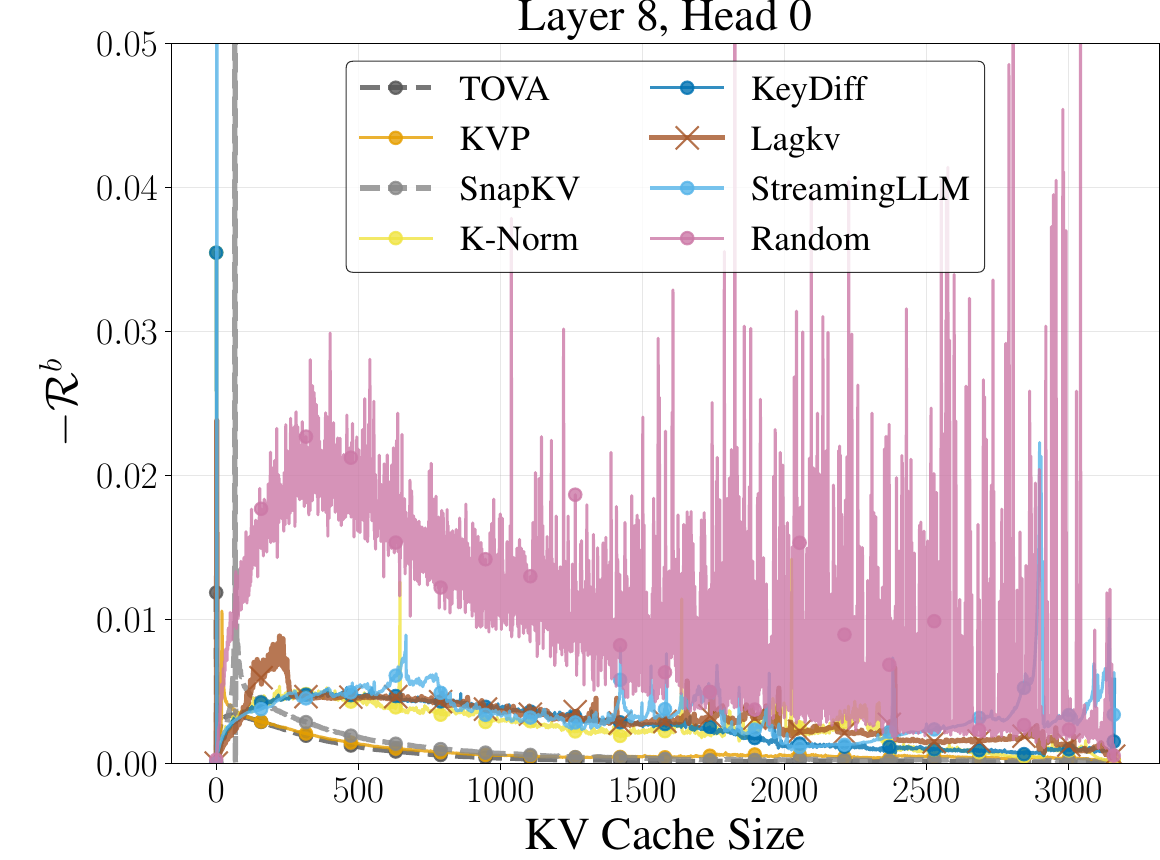} \hfill
   \includegraphics[width=.23\textwidth]{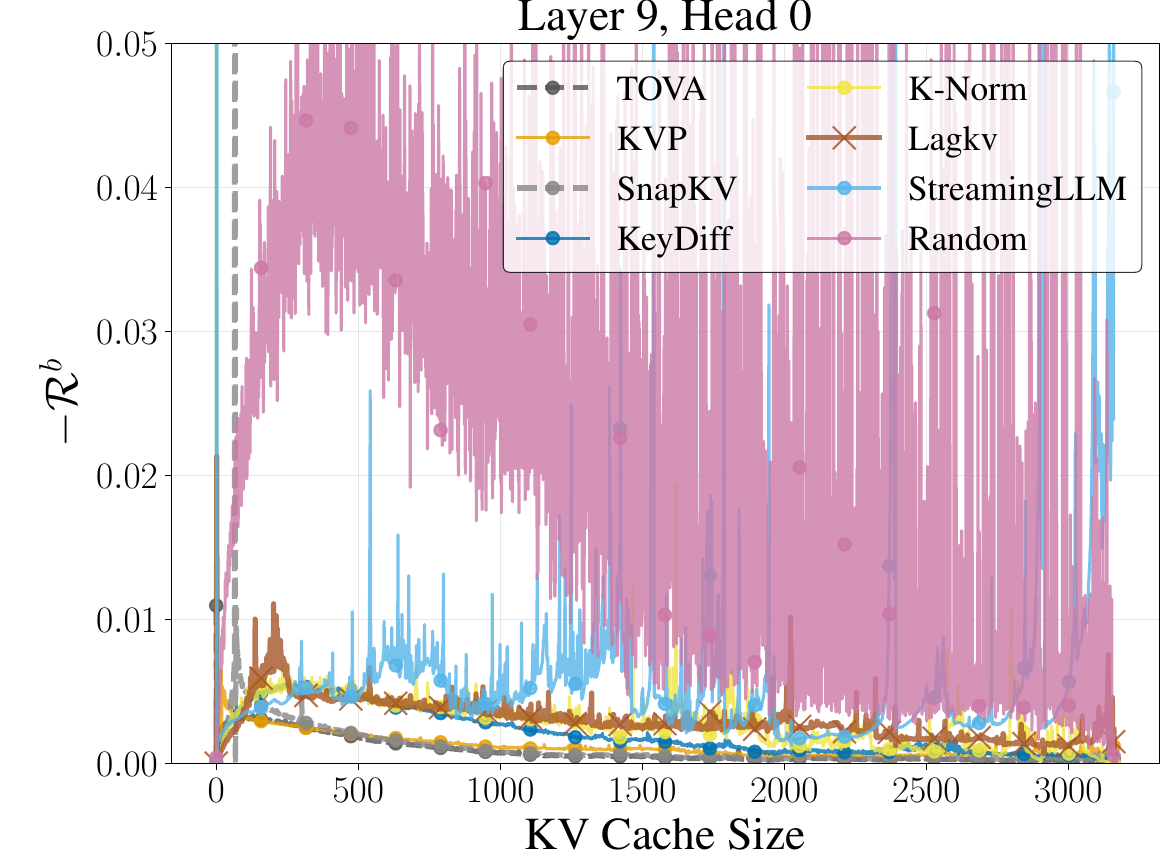} \hfill
   \includegraphics[width=.23\textwidth]{figures/iclr/oasst2/rewards/oasst2_reward_layer_10_head_0.pdf} \hfill
   \includegraphics[width=.23\textwidth]{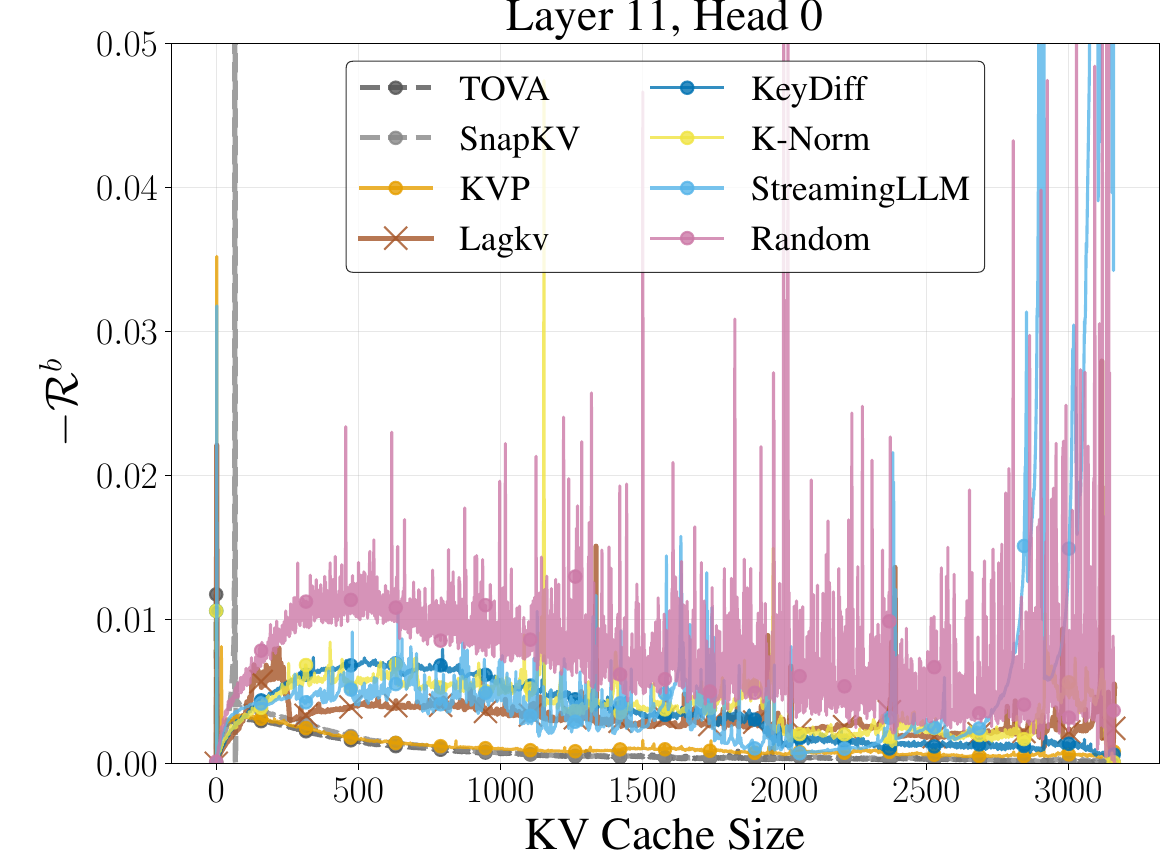} \hfill \\
   \includegraphics[width=.23\textwidth]{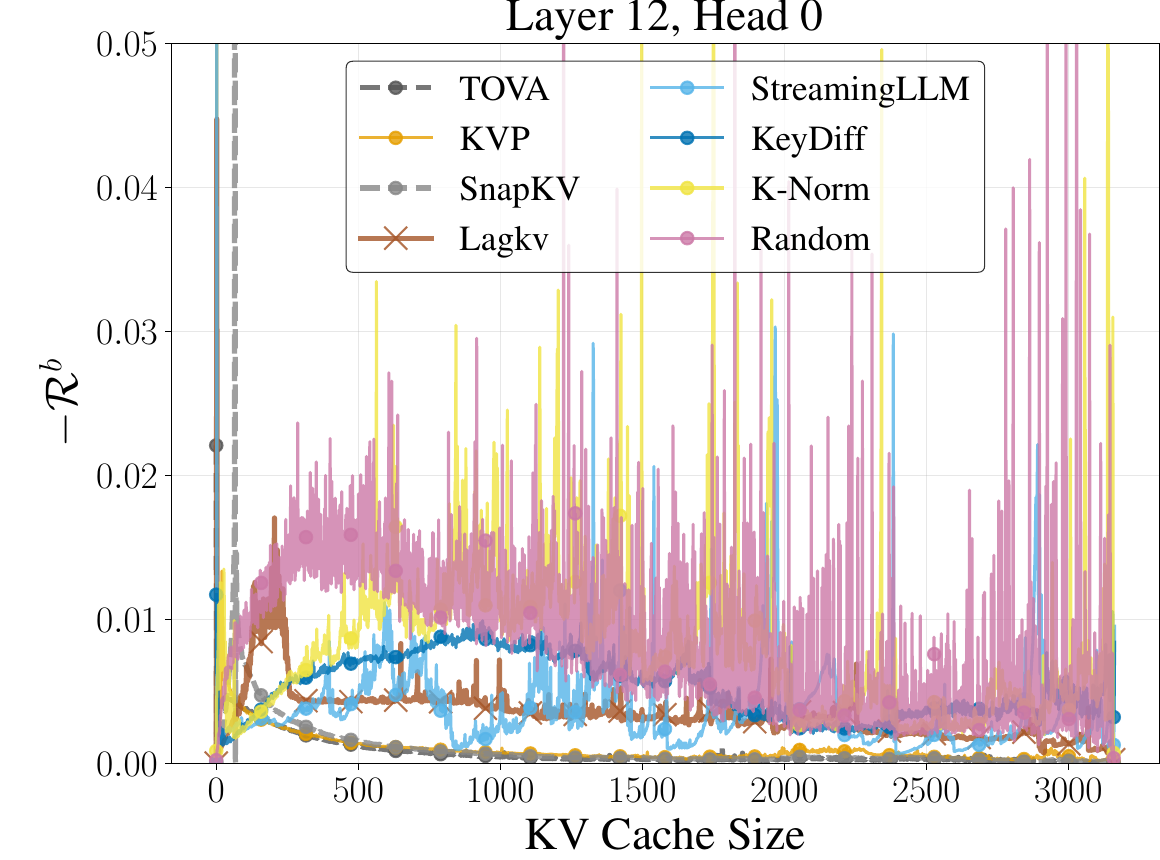} \hfill
   \includegraphics[width=.23\textwidth]{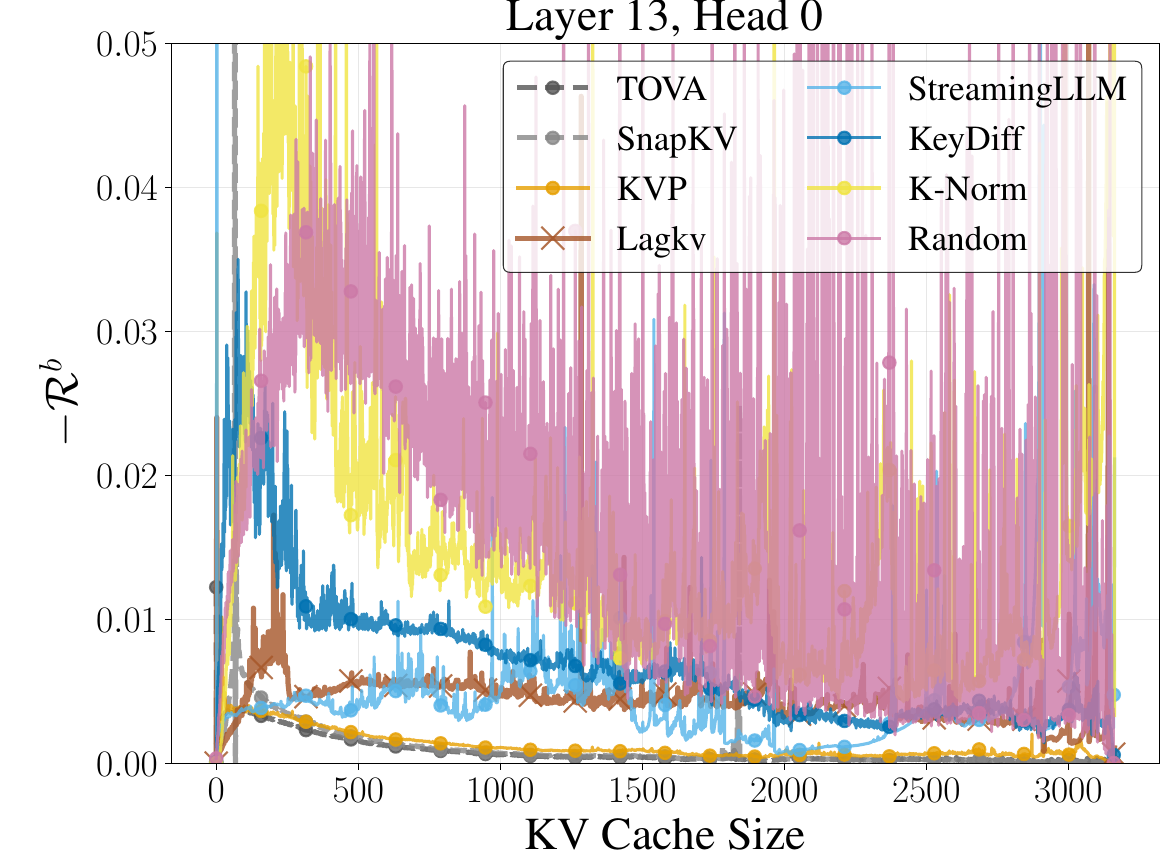} \hfill
   \includegraphics[width=.23\textwidth]{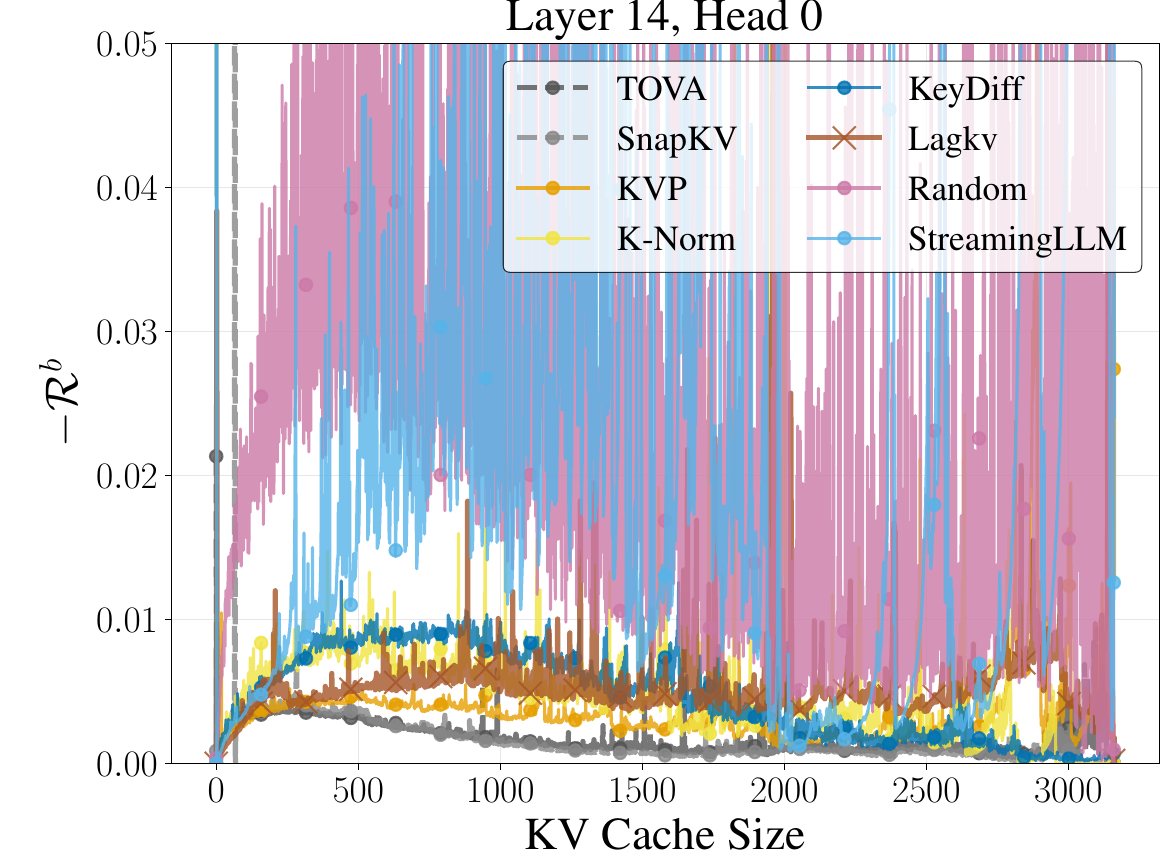} \hfill
   \includegraphics[width=.23\textwidth]{figures/iclr/oasst2/rewards/oasst2_reward_layer_15_head_0.pdf} \hfill \\
   \includegraphics[width=.23\textwidth]{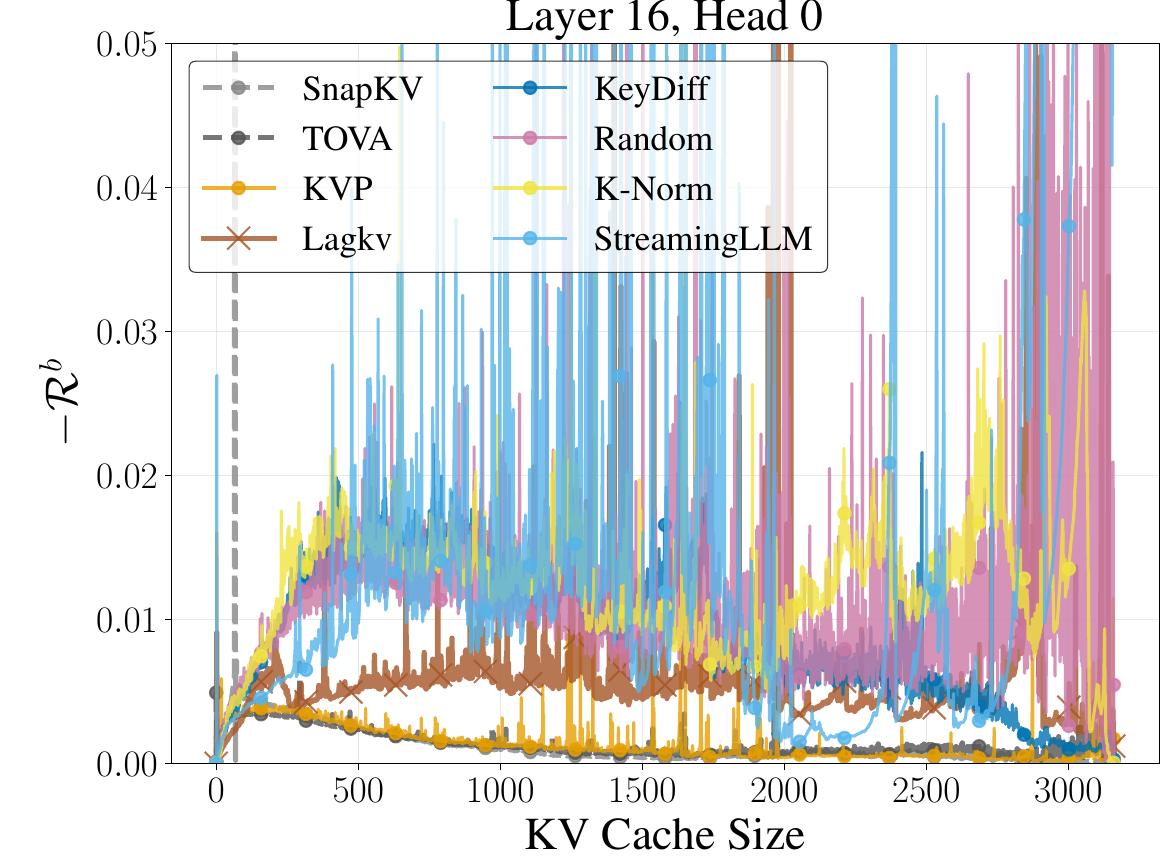} \hfill
   \includegraphics[width=.23\textwidth]{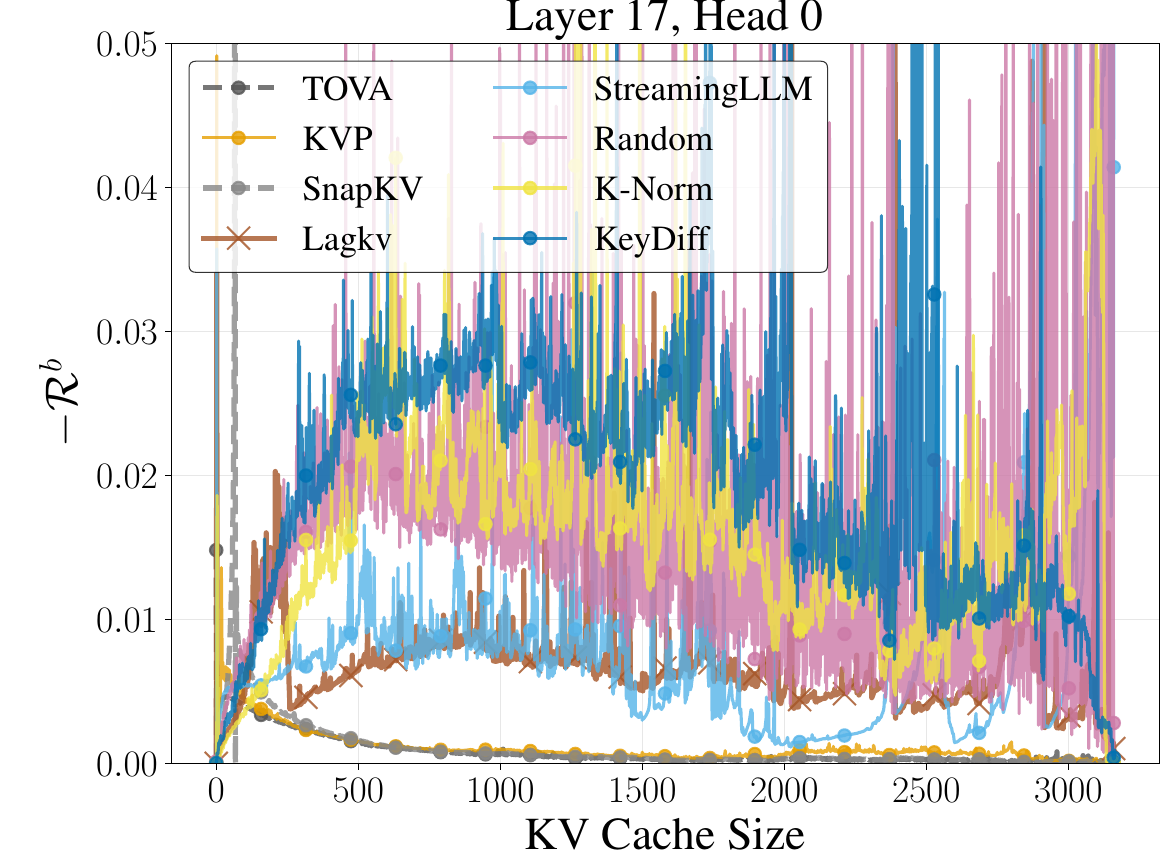} \hfill
   \includegraphics[width=.23\textwidth]{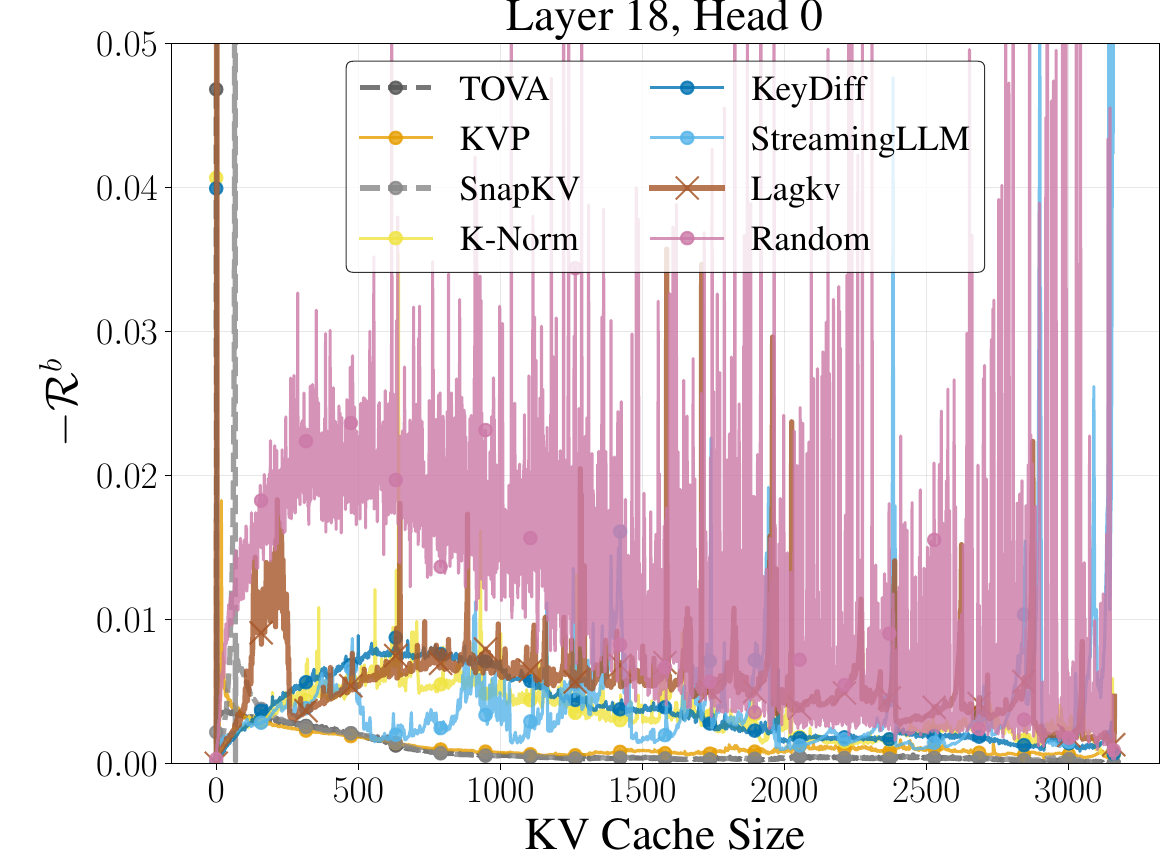} \hfill
   \includegraphics[width=.23\textwidth]{figures/iclr/oasst2/rewards/oasst2_reward_layer_19_head_0.pdf} \hfill \\
   \includegraphics[width=.23\textwidth]{figures/iclr/oasst2/rewards/oasst2_reward_layer_20_head_0.pdf} \hfill
   \includegraphics[width=.23\textwidth]{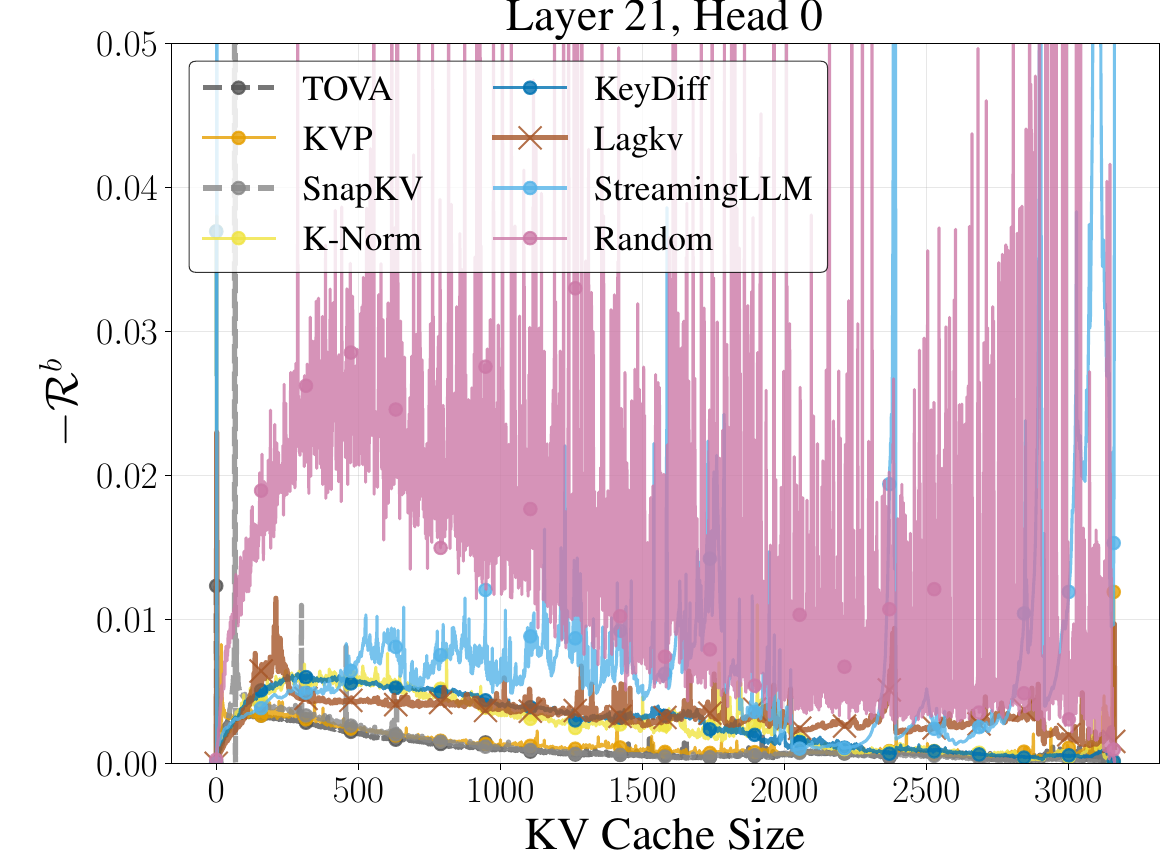} \hfill
   \includegraphics[width=.23\textwidth]{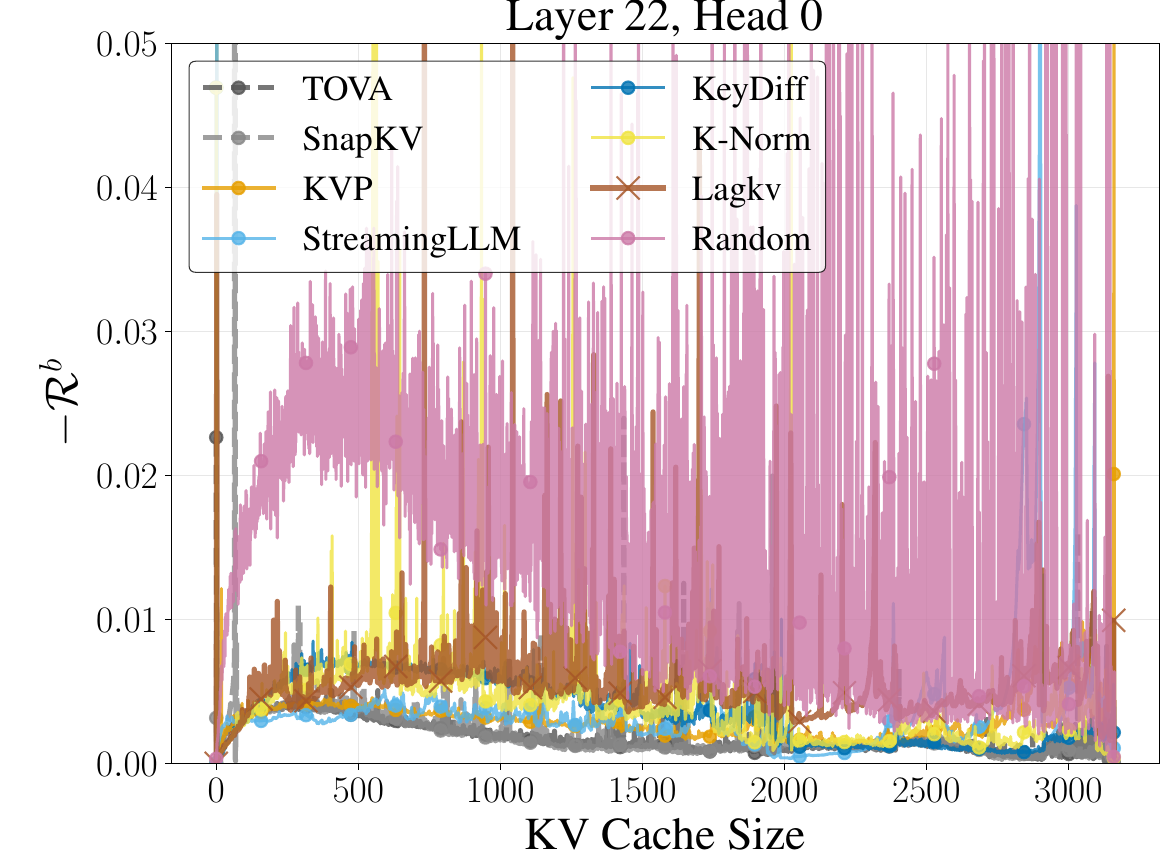} \hfill
   \includegraphics[width=.23\textwidth]{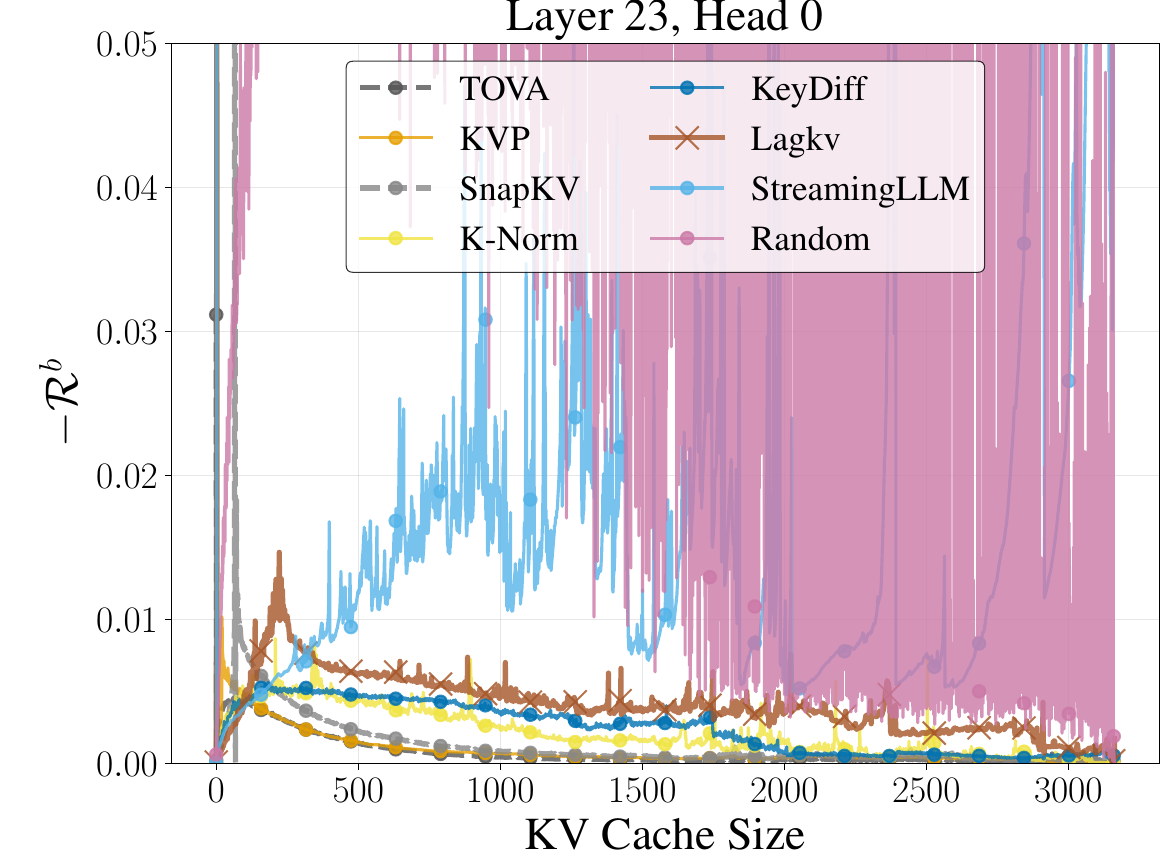} \hfill \\
   \includegraphics[width=.23\textwidth]{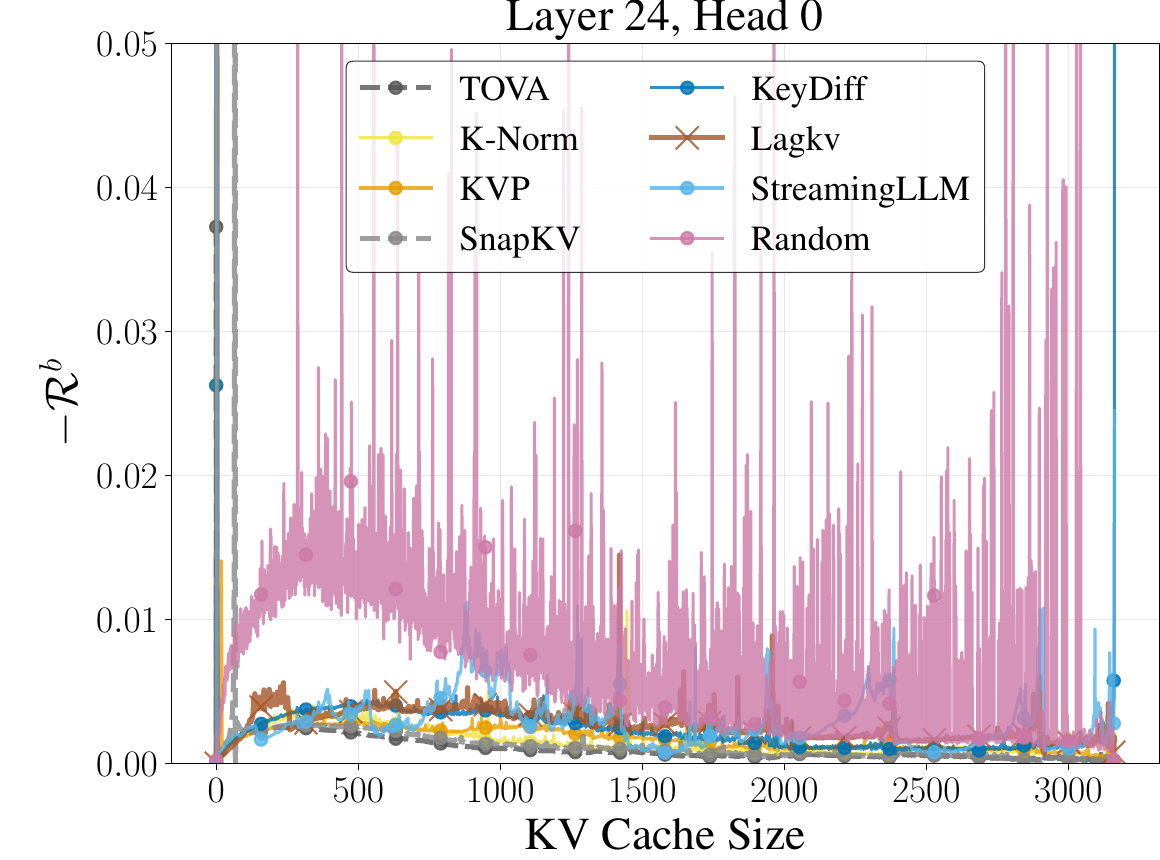} \hfill
   \includegraphics[width=.23\textwidth]{figures/iclr/oasst2/rewards/oasst2_reward_layer_25_head_0.pdf} \hfill
   \includegraphics[width=.23\textwidth]{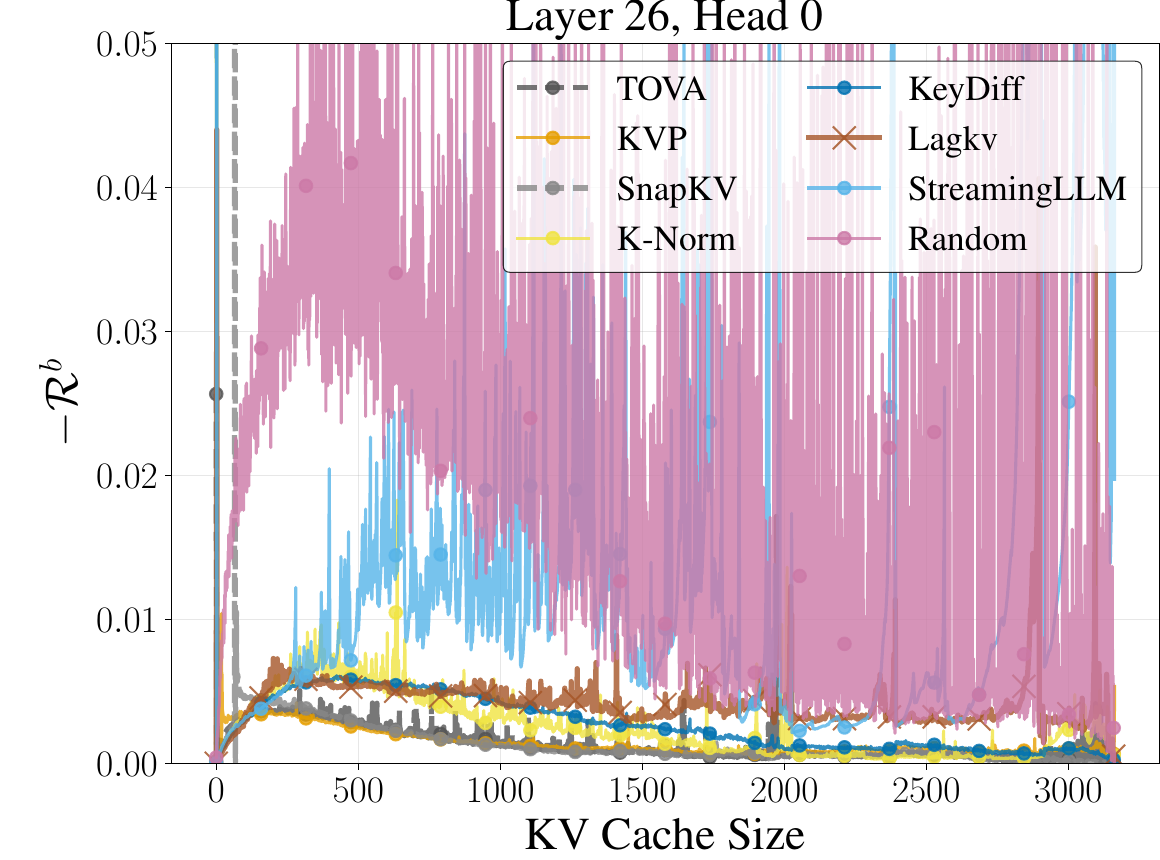} \hfill
   \includegraphics[width=.23\textwidth]{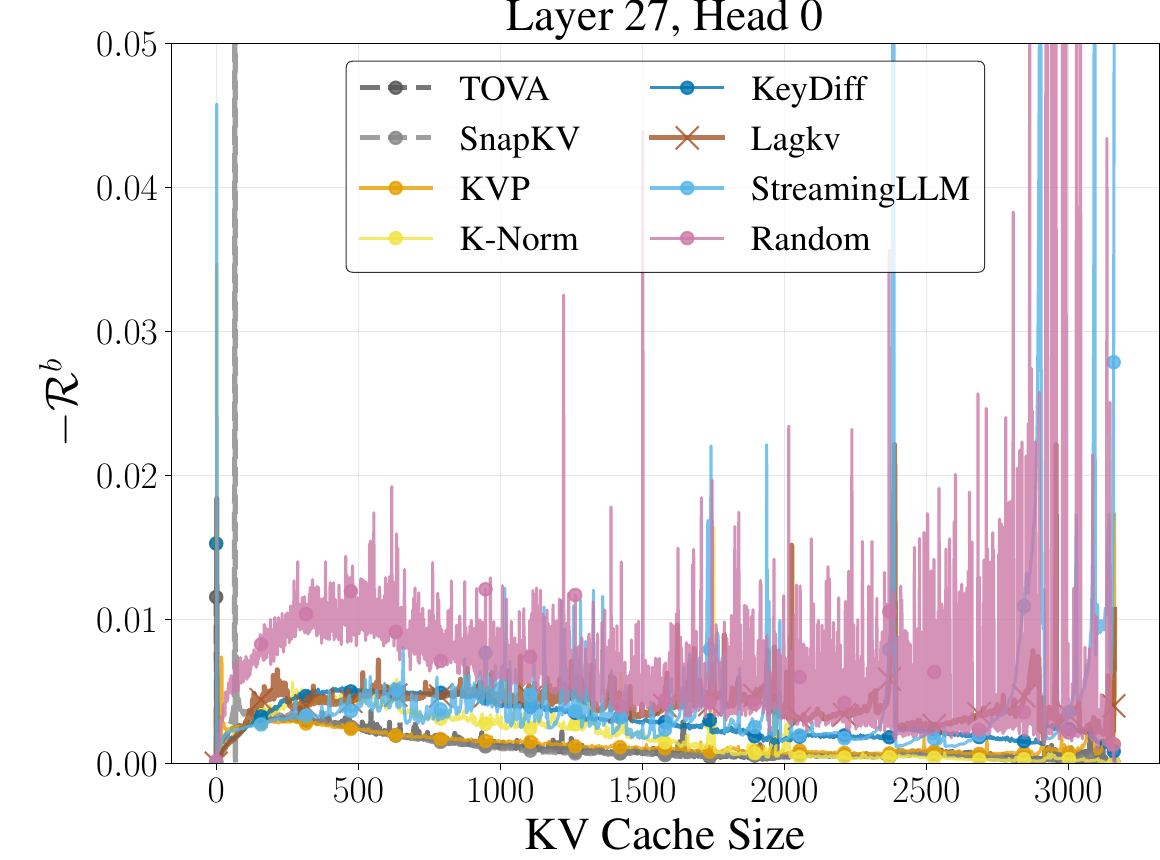} \hfill
   \caption{Cost ($-\mathcal{R}^b$) for all strategies on head 0 across all 28 layers of the model, evaluated on the OASST2 test set. This visualization shows how the effectiveness of different non attention-aware eviction heuristics changes with model depth, whereas the learned \gls{kvs} policy remains consistently effective. Lower is better.}
   \label{appendix:fig:oastt2_all_rewards}
\end{figure}

\end{document}